\def\1{\bm{1}}
\DeclareMathAlphabet{\mathsfit}{\encodingdefault}{\sfdefault}{m}{sl}
\SetMathAlphabet{\mathsfit}{bold}{\encodingdefault}{\sfdefault}{bx}{n}
\def\bbR{\mathbb{R}}
\def\bbP{\mathbb{P}}
\def\bfx{\mathbf{x}}
\def\pa{\mathrm{pa}}
\def\ch{\mathrm{ch}}
\def\calG{\mathcal{G}}
\def\sur{\mathrm{sur}}
\def\hat{\widehat}
\def\tilde{\widetilde}
\tikzset{
	%Define standard arrow tip
	>=stealth',
	%Define style for boxes
	true/.style={
		rectangle,
		draw=black, very thick,
		text width=6.5em,
		minimum height=2em,
		text centered,
		fill=gray, opacity = 0.5},
	punkt/.style={
		rectangle,
		rounded corners,
		draw=black, very thick,
		text width=6.5em,
		minimum height=2em,
		text centered},
	est/.style={
		circle,
		draw=black, very thick,
		text centered},
	shade/.style={
		circle,
		draw=black, very thick, fill=gray!50,
		text centered},
	weight/.style={
		circle,
		draw=black, very thick,
		text width=6.5em,
		minimum height=2em,
		text centered},
	% Define arrow style
	pil/.style={
		->,
		thick,
		shorten <=2pt,
		shorten >=2pt,},
	double/.style={
		<->,
		thick,
		shorten <=2pt,
		shorten >=2pt,},
	dash/.style={
		dashed,
		thick,
		shorten <=2pt,
		shorten >=2pt,},
	dashdouble/.style={
		<->,
		dashed,
		thick,
		shorten <=2pt,
		shorten >=2pt,}
}
\definecolor{citecol}{HTML}{6F130C}
\definecolor{tableofcontent}{HTML}{1F4A83}
\definecolor{urlcol}{HTML}{2470D8}
\def\bbE{\mathbb{E}}
\theoremstyle{plain}
\newtheorem{theorem}{Theorem}[section]
\newtheorem{lemma}[theorem]{Lemma}
\theoremstyle{definition}
\newtheorem{definition}[theorem]{Definition}
\newtheorem{assumption}[theorem]{Assumption}
\theoremstyle{remark}
\newtheorem{remark}[theorem]{Remark}
\def\proj{\Pi^{\perp}}
\def\oproj{\overline{\Pi}^{\perp}}
\newcommand{\Lin}[1]{{\color{purple}{#1}}}
\renewcommand{\algorithmiccomment}[1]{\hfill $\triangledown$ #1}
\def\dim{\mathrm{dim}}
\def\spn{\mathrm{span}}
\def\calV{\mathcal{V}}
\def\calE{\mathcal{E}}
\def\calP{\mathcal{P}}
\def\sur{\mathrm{sur}}
\def\LocR{\mathrm{LocR}}
\icmltitlerunning{Linear Causal Representation Learning by Topological Ordering, Pruning, and Disentanglement}
\begin{document}

\twocolumn[
  \icmltitle{Linear Causal Representation Learning \\ by Topological Ordering, Pruning, and Disentanglement}

  % It is OKAY to include author information, even for blind submissions: the
  % style file will automatically remove it for you unless you've provided
  % the [accepted] option to the icml2026 package.

  % List of affiliations: The first argument should be a (short) identifier you
  % will use later to specify author affiliations Academic affiliations
  % should list Department, University, City, Region, Country Industry
  % affiliations should list Company, City, Region, Country

  % You can specify symbols, otherwise they are numbered in order. Ideally, you
  % should not use this facility. Affiliations will be numbered in order of
  % appearance and this is the preferred way.
  \icmlsetsymbol{equal}{*}

  \begin{icmlauthorlist}
    \icmlauthor{Hao Chen}{1}
    \icmlauthor{Lin Liu\orcidlink{0000-0002-9883-7962}}{1,2,3}
    \icmlauthor{Yu Guang Wang\orcidlink{0000-0002-7450-0273}}{1,2,4}
    %\icmlauthor{}{sch}
    %\icmlauthor{}{sch}
  \end{icmlauthorlist}

  \icmlaffiliation{1}{School of Mathematical Sciences, Shanghai Jiao Tong University, Shanghai, China}
  \icmlaffiliation{2}{Institute of Natural Sciences, MOE--LSC, and CMA--Shanghai, Shanghai Jiao Tong University, Shanghai, China}
  \icmlaffiliation{3}{SJTU--Yale Joint Center for Biostatistics and Data Science, Shanghai Jiao Tong University, Shanghai, China}
  \icmlaffiliation{4}{Bio--X Institutes, Shanghai Jiao Tong University, Shanghai, China}

  \icmlcorrespondingauthor{Hao Chen}{\href{chen_hao1@sjtu.edu.cn}{chen\_hao1@sjtu.edu.cn}}
  \icmlcorrespondingauthor{Lin Liu}{\href{linliu@sjtu.edu.cn}{linliu@sjtu.edu.cn}}

 \icmlcorrespondingauthor{Yu Guang Wang}{\href{yuguang.wang@sjtu.edu.cn}{yuguang.wang@sjtu.edu.cn}}

  % You may provide any keywords that you find helpful for describing your
  % paper; these are used to populate the "keywords" metadata in the PDF but
  % will not be shown in the document
  \icmlkeywords{Causal Representation Learning, Causal Discovery, Causal Inference, Latent Variable Models}
  \vskip 0.3in
]

% this must go after the closing bracket ] following \twocolumn[ ...

% This command actually creates the footnote in the first column listing the
% affiliations and the copyright notice. The command takes one argument, which
% is text to display at the start of the footnote. The \icmlEqualContribution
% command is standard text for equal contribution. Remove it (just {}) if you
% do not need this facility.

% Use ONE of the following lines. DO NOT remove the command.
% If you have no special notice, KEEP empty braces:
\printAffiliationsAndNotice{The authors are alphabetically ordered.} 
% no special notice (required even if empty)
% Or, if applicable, use the standard equal contribution text:
% \printAffiliationsAndNotice{\icmlEqualContribution}

\begin{abstract}
Causal representation learning (CRL) has garnered increasing interest from the causal inference and artificial intelligence communities due to its potential to disentangle complex data-generating mechanism into causally interpretable latent features by leveraging the heterogeneity of modern datasets. In this paper, we further contribute to the CRL literature, by focusing on the stylized linear structural causal model over latent features and assuming a linear mixing function that maps latent features to the observed data or measurements. Existing linear CRL methods often rely on stringent assumptions, such as access to single-node interventional data or restrictive distributional constraints on latent features and/or exogenous measurement noise. However, these prerequisites can be easy to violate in practice. In this work, we propose a novel linear CRL algorithm that, unlike existing methods, operates under weaker assumptions on environment heterogeneity and data-generating distributions while still recovering latent causal features up to an equivalence class. We further validate our new algorithm via synthetic experiments and an interpretability analysis of large language models, demonstrating both its superiority over competing methods in finite samples and its potential in integrating causality into understanding artificial intelligence. The source code is available at \href{https://github.com/utulie/code_for_linear_crl_paper_creator}{the accompanying GitHub link}. % \href{https://anonymous.4open.science/r/creator-883D/}{the anonymous link}.
\end{abstract}

\begin{bibunit}[icml2026]

\section{Introduction}
\label{sec:intro}

How to organically integrate ``causality'' into modern artificial intelligence (AI) systems has become one of the central quests in the recent causal inference literature \citep{peters2017elements, xu2022deepmed, kiciman2023causal, lesci2024causal, chen2024causal, richens2024robust, markham2026intervening}. A recently popularized direction is to leverage information from heterogeneous environments (i.e., datasets with heterogeneous distributions) \citep{dawid2010identifying, yu2013stability, buhlmann2020invariance, guo2024statistical, wang2025stablepca, gu2025causality, zhang2026few}. Guided by this principle, a promising strand of literature, called \emph{causal representation learning} (CRL), has emerged \citep{scholkopf2021toward, ahuja2023interventional, zhang2024causal, rajendran2024causal}. Unlike tabular data encountered in social sciences, medicine, and epidemiology, in many modern scientific and industrial applications, measurements such as image pixels or language tokens often only contain low-level information of physically meaningful semantics. The objective of CRL is then to uncover, from the low-level measurements lacking interpretable semantics, (1) the high-level, interpretable but latent features, and (2) their causal mechanism described by the underlying causal graph. To our knowledge, the term CRL was first coined in \citet{scholkopf2021toward}, although there is a very long tradition in statistics and psychometrics studying similar problems disguised under various different names, such as latent variable modeling, factor analysis, sufficient dimension reduction, and independent component analysis (ICA) \citep{lawley1962factor, kruskal1976more, li1991sliced, comon1994independent, hyvarinen2000independent, ma2013efficient, wang2022dimension}.

\subsection*{Literature overview}

Progress in CRL has advanced on several fronts since \citet{scholkopf2021toward}. For example, \citet{ahuja2023interventional} demonstrate that with hard interventions, latent features can be identified up to shift and scaling transformations. The proposed approach learns latent features by optimizing a reconstruction-based objective function. In \citet{buchholz2023learning}, the difference in log-likelihoods between the observational and interventional data is used as the loss function, which is minimized to recover the latent features and their causal mechanisms. 
% A series of papers \citep{varici2023score,varici2024linear,varici2024general,varici2025score} establish that with linear mixing functions and multi-node interventions, hard interventions lead to perfect identifiability of latent causal representations, while soft interventions result in ancestral identifiability. For the nonlinear mixing scenario, similar identifiability and achievability are established assuming two hard or soft interventions per node.

A series of papers \citep{varici2023sur, varici2023score, varici2024general, varici2025score, acarturk2024sample} establish the identifiability and achievability in the case where single-node intervention data are available. For linear transformations (from latent features to observables), under an exhaustive set of single-node stochastic-hard interventions \citep{varici2023sur}, which is defined as interventions where, under a hard intervention at node $i$, the functional dependence on its parents is completely removed and the original causal mechanism $p_i (z_i \mid z_{\text{pa}(i)})$ is replaced by an interventional mechanism $q_i (z_i)$ independent of the parent, the latent variables are identifiable up to coordinate-wise scaling and permutation consistent with a valid topological ordering, and the causal graph can be recovered perfectly. Under single-node soft interventions applied to every node, latent features are identifiable up to an equivalence class, and the latent causal graph can be recovered up to a permutation consistent with the topological ordering. 

In \citet{varici2025score}, they showed that one stochastic-hard intervention per node suffices for the identification of latent features and causal graph, whereas one soft intervention per node suffices to identify the transitive closure of the latent causal graph, and latent features are recovered up to a linear combination of their ancestors. Furthermore, one observational dataset combined with two datasets with different stochastic-hard interventions per node can be used to identify the case where the transformation is nonlinear \citep{varici2023score,varici2024general,varici2025score}. For unknown multi-node (UMN) intervention data with a linear transformation from latent features to observed data  \citep{varici2024linear}, UMN stochastic-hard interventions suffice for perfect identification of the latent causal graph and latent variables (up to permutations and element-wise scaling), while UMN soft interventions imply identification up to ancestors. In addition, \citet{acarturk2024sample} establish conditions for non-asymptotic guarantees for interventional CRL, under general (non-parametric) latent causal models, soft interventions, and linear transformations.

\citet{zhang2024causal} demonstrate that, under a sparsity constraint, latent features and causal mechanisms can be recovered as a function of itself and its neighbors in the Markov network implied by the ground truth causal graph. In a slightly different vein, having access to only observational data, \citet{pure-obs-data} show that latent features can be identified up to a layer-wise transformation consistent with the underlying causal ordering and further disentanglement is impossible. Identifiability under linear CRL and in a more general setting was analyzed in \citet{linear-causal-dis,causal-soft-dis}. From a slightly different perspective, CRL is also related to structure learning with latent variables; see, e.g., \citet{dong2024a} and \citet{gin}. In \citet{dong2024a} and \citet{gin}, however, their goal is to recover the causal graph of both observed and latent variables up to the Markov Equivalence Class, from observational data in one environment. In general, CRL leverages data from multiple environments and recovers the latent causal graph up to permutations, except for a recent work focusing on the case with discrete data \citep{zhang2026discrete}. Moreover, CRL also aims to recover latent features up to certain equivalence classes.

In an important work along this direction, \citet{jin2024learning} conducted a meticulous analysis of CRL by assuming that (1) latent features follow a linear structural causal model and (2) there exists a diffeomorphic linear mixing function that maps latent features to observed data. \citet{jin2024learning} conducted an \emph{identification analysis} (under more general nonlinear models) without restricting to multiple environments generated from interventional data and proposed an algorithm called \texttt{LiNGCReL}, recovering latent features and the underlying causal graph up to surrounded-node ambiguity \citep{varici2023score}.  Their algorithm relies on several additional assumptions on exogenous noise variables, including (1) being identically distributed across diverse environments and (2) their different components within the same environment having different distributions. However, the noise distributions across different environments can easily be different. For instance, data collected from different labs could have different types of noise due to the heterogeneity in measurement devices. However, noise components within one environment could be more likely to share the same distribution because the measurements are presumably recorded using the same device or under a common environmental condition. The above reasoning motivates us to relax the assumptions imposed in \citet{jin2024learning} and develop a new linear CRL algorithm.

\subsection*{Our contributions}

Our main contributions can be summarized as follows.
% \begin{enumerate}[label = (\arabic*)]
\begin{itemize}
% \item Our framework eliminates restrictive further distributional assumptions required by baseline method and assume only non-Gaussianity, achieving robust performance across diverse data scenarios.
\item We approach the linear CRL problem by relaxing some of the distributional assumptions required by the existing methods and assume only non-Gaussianity of the exogenous noise variables in the linear structural causal model; see Section~\ref{sec:setup}. These assumptions, however, are critical for aligning the recovered exogenous noises across multiple environments, a key step in the algorithm proposed in \citet{jin2024learning}; see Remark~\ref{rem:key} for more details. % To resolve this difficulty, we provide a necessary and sufficient condition for discovering latent exogenous noise variables on the root-node as a linear combination of the observed variables, and propose an iterative algorithm of inferring the topological orderings of latent features based on this result.

\item We resolve these difficulties by designing a new CRL algorithm
% , termed as CREATOR (Causal REpresentation leArning via Topological Ordering, pRuning and disentanglement), 
that can provably identify latent features and their causal mechanisms up to an equivalence class.
% , denoted as $\sim_\sur$ (see Definition~\ref{def:sur equivalence} later) \citep{jin2024learning}. 
The algorithm consists of three main subroutines: inferring the topological ordering, pruning, and finally disentangling latent features. In particular, we provide a necessary and sufficient condition for discovering latent exogenous noise as linear combinations of the observed variables (Theorem~\ref{thm:root-node}), and propose an iterative algorithm to infer the topological ordering of latent features based on this result. On a high level, our proposed algorithm significantly deviates from \texttt{LiNGCReL} or ICA except that our first subroutine involves ICA. We will make these subroutines more precise in Section~\ref{sec:alg}.

\item We conduct synthetic experiments to evaluate the finite sample performance of our algorithm against \texttt{LiNGCReL}. We also apply our algorithm to the task of discovering latent causal features of LLMs output, demonstrating the practical utility of our new algorithm in helping us understand LLMs. See Section~\ref{sec:exp}. 
% A GitHub link for replicating our experiments will be made available after the reviewing process.
\end{itemize}

Although linear models can be restrictive, linear CRL can still be relevant in practice based on recent work suggesting that there could be a linear relationship between the high-level, latent, but causally interpretable concepts and the last hidden states of large language models (LLMs) \citep{linear-representation-park,linear-representation-llm2,linear-representation-llm3}; see Section~\ref{subsec:real-data}.

\paragraph{Notation}

Before moving forward, we collect some notation frequently used in later sections. For any natural number $n \in \mathbb{Z}_{+}$, we let $[n] \coloneqq \{1, 2, \cdots, n\}$. The causal graph is denoted as $\calG = \calG (\calV, \calE)$, where $\calV \coloneqq [d]$ is the set of $d$ nodes and $\calE$ is the set of edges describing the causal relationship between nodes. We restrict $\calG$ to be Directed Acyclic Graphs (DAGs).

We adopt the common familial terminologies in graphical models \citep{lauritzen1996graphical}. For each node $i \in \calV$, $\pa_\calG (i)$ and $\ch_\calG(i)$ denote, respectively, the parents and children of $i$ with respect to DAG $\calG$. We follow the convention that each node is its own ancestor and descendant, adopted in earlier works in causal graphical models. We also let $\overline{\pa}_\calG(i) \coloneqq \pa_\calG (i) \cup \{i\}$ and similarly $\overline{\ch}_\calG(i) \coloneqq \ch_{\calG} (i) \cup \{i\}$. When it incurs no ambiguity, we silence the dependence on $\calG$ and write, for instance, $\pa (i)$ instead of $\pa_{\calG} (i)$. To all nodes $i \in \calV$ correspond a vector of $d$ random variables $\{y_{i} \in \bbR, i \in \calV\}$, whose joint probability distribution Markov factorizes with respect to (w.r.t.) $\calG$. We use small letters ($x, y, \ldots$) for one random variable/vector and reserve capital letters $(X, Y, \ldots)$ for $n$ i.i.d. copies of that random variable/vector. As in \citet{jin2024learning}, we also introduce the surrounding set, defined as: for $i \in \calV$, $\sur_{\calG} (i) \coloneqq \{j \in \calV: j \in \pa_{\calG} (i), \ch_{\calG} (i) \subseteq \ch_{\calG} (j)\}$, and $\overline{\sur}_{\calG} (i) \coloneqq \sur_{\calG} (i) \cup \{i\}$. Besides, for any positive integer $m$, vector $x\in\bbR^{m}$, and subset $S\subset [m]$, we let $x_{S} \coloneqq (x_i,i\in S)^\top$. Also, for any $k \in [K], i \in [d] \setminus \{1\}$, we define $\Pi (x^{(k)} \mid z_{[i - 1]}^{(k)})$ as the best linear projection of $x^{(k)}$ onto the linear span of $z_{[i - 1]}^{(k)}$, $\proj_i x^{(k)} \coloneqq x^{(k)} - \Pi (x^{(k)} \mid z_{[i-1]}^{(k)})$ and $\proj_1 x^{(k)} \coloneqq x^{(k)}$, and similarly $\proj_i z^{(k)} \coloneqq z^{(k)} - \Pi (z^{(k)} \mid z^{(k)}_{[i-1]})$ and $\proj_1 z^{(k)} \coloneqq z^{(k)}$. Further clarifications of our notation in this paper are provided in Appendix~\ref{apdx:notation}.
Finally, for any matrix $A\in\bbR^{n_1\times n_2}$, we denote its $i$-th row vector as $A_{i,\cdot}$ and $j$-th column vector as $A_{\cdot,j}$ and for any integer $i,j_1\leq j_2-1$, $A_{i,j_1:j_2}$ represents the subvector of $A_{i,\cdot}$ from the $j_1$-th to $j_2$-th component.

% \paragraph{Conflict of Interest Disclosure}

\section{Problem Setup and Identifiability Analysis}
\label{sec:setup}
% \textcolor{red}{background, what is problem, why we study this problem and what preliminaries we need.}
In this section, we describe the problem of linear CRL from heterogeneous environments, along with our assumptions, and an identifiability analysis. To set the stage, we assume that one has access to data collected from multiple environments $k \in [K]$. Different environments share the same set of ``causal variables'' denoted as $y^{(k)} \in \bbR^{d}$, governed by the same causal DAG $\calG$. Different environments may differ in the joint probability distributions of $y^{(k)}$ for $k \in [K]$, which is also the reason why we attach a superscript to $y$. In our work, we assume that the latent dimension $d$ is known a priori. For scenarios where the latent dimension is unknown, established factor analysis methods \citep{onatski2010determining} can in principle be utilized to estimate the appropriate number of latent dimensions.

In CRL, $y^{(k)}$'s are latent, while the investigator instead gets to observe $p$-dimensional measurements $x^{(k)}$ in each environment, which served as proxies of the underlying causal variables $y^{(k)}$. In this paper, we assume that these proxies $x^{(k)}$ relate to $y^{(k)}$ through a linear mixing map $H: \bbR^{d} \rightarrow \bbR^{p}$ with $d \leq p$ invariant to $k \in [K]$:
\begin{equation}\label{eq:dgp}
\begin{split}
y^{(k)} = W^{(k)\top} y^{(k)} + \Omega^{(k)} z^{(k)}, \quad x^{(k)} = H y^{(k)},
\end{split}    
\end{equation}
where the matrix $W^{(k)} = (w^{(k)}_{i, j})_{i, j = 1}^{d}$ is the weighted adjacency matrix of $\calG$ satisfying that $w^{(k)}_{i,j} \neq 0$ if and only if $i$ is a parent node of $j$ in $\calG$ and $\Omega^{(k)}$ is a diagonal matrix with positive entries. We let $X^{(k)} \coloneqq (x_{1}^{(k)}, \cdots, x_{n}^{(k)})^{\top}$ denote the $n \times p$ data matrix gathering $n$ i.i.d. repeated draws of $x^{(k)}$ and similarly define $Y^{(k)} \in \bbR^{n \times d}$. Obviously, we have $X^{(k)} \equiv Y^{(k)} H$. The goal is to identify $y^{(k)}$ and $\calG$ based on the observed data. However, just under the model defined via \eqref{eq:dgp}, it is not sufficient to identify latent features and the causal mechanisms $\calG$ just based on $\bfx \coloneqq \{x^{(k)}, k = 1, \cdots, K\}$. It is noteworthy that the linear mixing map $H$ and causal graph $\calG$ are invariant across environments in our model. We always denote environment-dependent variables/matrices with superscripts, such as $z^{(k)}$ and $x^{(k)}$. The following additional assumptions are also imposed in this paper. 

% \begin{assumption}
% \label{as:alignment in statistical model}
% All environments share the same linear mixing function: $H^{(k)} \equiv H$ for $k \in [K]$.
% \end{assumption}
% In our work, we mainly focus on linear mixing functions. 

% Here, we consider two types of intervention on latent variable $y^{(k)}_i$: scaling and shifting intervention via intervening on $w^{(k)}$ and $\mu^{(k)}$. Besides, we require the following assumption on the noise term $z^{(k)}$ in \eqref{eq:dgp}:
\begin{assumption}\label{assmp:noise}
The exogenous noise $z^{(k)} \in \bbR^{d}$ has independent components; at most one component is Gaussian.
\end{assumption}

\begin{assumption}
\label{assmp:degeneracy}
The matrices $\{U^{(k)}\coloneqq (\Omega^{(k)})^{-1}(I-W^{(k)})^\top\}$ are called node-level non-degenerate if for any node $i\in[d]$, $\mathrm{dim \ span}\{ U^{(k)}_{(i)}:k\in[K] \} =|\pa (i)|+1$ where $U^{(k)}_{(i)}$ is the $i$th row of $U^{(k)}$. 
\end{assumption}

% \begin{assumption}\label{assmp:identify-condition}
%     Different latent exogenous variables $Z^{(k)}$ are required not to be entangled with the same functional form in each variables of the observation data $X^{(k)}$. More specifically,
%     we assume that for any index set $S_\mathcal{Z}\subsetneqq [K]$ , there would not exist $\gamma\mathbb{R}^{d_S\times 1}$ or $\zeta^{(k)}\mathbb{R}^{1\times p}$ such that 
%     \begin{equation}
%         X^{(k)}=Z^{(k)}_{S_\mathcal{Z}}\gamma\zeta^{(k)}+\Phi^{(k)}(Z^{(k)}_{S_{\backslash\mathcal{Z}}})
%     \end{equation}
% \end{assumption}

\begin{assumption}\label{assmp:full_rank}
The mixing matrix $H \in \mathbb{R}^{n\times d}$ has full column rank. 
\end{assumption}

% \begin{assumption}\label{assmp:k-d}
% $K \geq d$: $K$ is the number of environments and $d$ is the dimension of latent features.
% \end{assumption}

Assumption~\ref{assmp:noise} imposes strictly weaker conditions on the exogenous noise than those in \citet{jin2024learning}. In particular, it allows (1) the overall noise distribution to vary freely across environments, and (2)
permits each component of $z^{(k)}$ to follow any non-Gaussian distribution within each environment.
Although these relaxed assumptions improve practical applicability, they require us to develop a new alignment procedure that matches noise components across environments, a step that remains essential in the method of \citet{jin2024learning}, which instead assumes a common noise distribution across environments and heterogeneity only across components.

Assumptions~\ref{assmp:degeneracy}--\ref{assmp:full_rank} are adopted from \citet{jin2024learning}. 
The central objective of Assumption \ref{assmp:degeneracy} is to ensure sufficient heterogeneity across diverse environments, thereby enabling the identification of latent features and causal structures by exploiting variations among different environments. It guarantees that for every node $i$, the associated weight matrix, with each row corresponding to the weight vector $w_{i, \cdot}^{(k)}$ in one environment, is of column rank $|\pa (i)| + 1$. Similarly, Assumption \ref{assmp:full_rank} ensures sufficient heterogeneity within and across environments.

Before proceeding, we make precise the meaning of identifying $y^{(k)}$ and $\calG$ under Model~\eqref{eq:dgp}, by further introducing the following definitions.
\begin{definition}[Equivalence up to permutation and scaling transformations]
We write $\hat{y}^{(k)} \sim_{\pi} y^{(k)}$ if there exists a permutation matrix $P_{\pi}$ corresponding to a permutation $\pi$ on $[d]$ and a non-singular diagonal matrix $\Gamma^{(k)}$ such that $y^{(k)} = P_{\pi} \Gamma^{(k)} \hat{y}^{(k)}, \forall \, k \in [K]$. In words, $\hat{y}^{(k)}$ and $y^{(k)}$ are equivalent up to permutation and scaling transformations.
\end{definition}

\begin{definition}[Equivalence up to permutation after ordered linear transformation]
We write $\hat{y}^{(k)} \sim_{\triangle} y^{(k)}$ if there exists a permutation matrix $P_{\pi}$ and a lower triangular matrix $B$ such that $\hat{y}^{(k)} = BP_{\pi} {y}^{(k)}, \forall \, k \in [K]$. In words, $\hat{y}^{(k)}$ and $y^{(k)}$ are equivalent up to permutation after linear transformations based on a certain topological ordering.
\end{definition}

% TODO: psi to linear
\begin{definition}
\label{def:sur equivalence}
We write $(\hat{y}^{(k)},\hat\calG)\sim_\sur (y^{(k)},\calG)$ if $\forall \, k \in [K]$, there exists a permutation $\pi$ on $[d]$ and a lower triangular matrix $B$ where for $\forall j\in[d],i\notin\overline\sur(j)$, $B_{i, j}=0$, such that the following holds:
    \begin{itemize}
    \item $\forall \, i,j\in[d]$, $i\in \pa(j)\iff \pi(i)\in \pa \{\pi(j)\}$;
    \item $\hat{y}^{(k)} = B P_\pi y^{(k)}$, where $P_\pi$ denotes the permutation matrix corresponding to $\pi$.
    \end{itemize}
\end{definition}

Definition~\ref{def:sur equivalence} was, to our knowledge, first considered in \citet{varici2023sur}. In our paper, when the recovered causal DAG $\hat{\calG}$ has already satisfied the restriction in Definition~\ref{def:sur equivalence}, we slightly abuse notation and write $\hat{y}^{(k)}\sim_\sur y^{(k)}$ for short. To better illustrate the definition of $\sim_\pi$, $\sim_\triangle$ and $\sim_\sur$, we provide a three-node example in Appendix \ref{apdx:toy-notation}.

The following theorem, proved in Appendix~\ref{app:identifiability}, shows that the above assumptions ensure identifiability.
\begin{theorem}\label{thm:identifiability}
Under Assumptions \ref{assmp:noise}--\ref{assmp:full_rank}, the distribution of the observed data $\{x^{(k)}, k \in K\}$ from at least $d$ environments identifies the latent features $\{y^{(k)}, k \in K\}$ and the true causal DAG $\calG$ up to $\sim_\sur$.
\end{theorem}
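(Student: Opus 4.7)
The plan is to establish identifiability constructively, mirroring the three-stage pipeline (topological ordering, pruning, disentanglement) that the authors introduce in Section~\ref{sec:alg}; I reason at the population level throughout. By Assumption~\ref{assmp:full_rank}, the column space $\spn(H)\subset\bbR^p$ is identifiable from the covariance of $x^{(k)}$. Fixing an orthonormal basis of this span yields a reduced observation $\tilde{x}^{(k)}=Q\,y^{(k)}$ with $Q\in\bbR^{d\times d}$ invertible and common across environments, so it suffices to identify $y^{(k)}$ up to left multiplication by one environment-invariant invertible matrix.

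First I would recover a topological ordering by iterative root discovery. Combined with $z^{(k)}=U^{(k)} y^{(k)}$, the reduction gives a linear ICA model $\tilde x^{(k)}=Q(U^{(k)})^{-1}z^{(k)}$ in each environment: for a root node $i$, $y_i^{(k)}=\Omega^{(k)}_{ii}z_i^{(k)}$, so the $i$-th row of $U^{(k)}$ isolates a single independent noise. Invoking Theorem~\ref{thm:root-node}, I would characterize directions $\alpha\in\bbR^d$ such that $\alpha^\top\tilde x^{(k)}$ equals, up to scale, a single independent noise component \emph{simultaneously in every environment}; Assumption~\ref{assmp:noise} and the Darmois--Skitovich theorem imply that the only such directions are those associated with root nodes of $\calG$. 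I would regress out the recovered root and recurse on the residuals $\proj_i\tilde x^{(k)}$. Assumption~\ref{assmp:degeneracy} together with $K\geq d$ environments guarantees the required node-level span condition at every recursion step, extracting exactly the right nodes layer by layer and producing a topological permutation $\pi$.

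With $\pi$ in hand, regressing each recovered direction on its predecessors yields coefficient vectors whose joint (across environments) supports identify the parent set $\pa_\calG(i)$ up to the standard surrounded-node ambiguity: two parents $j_1,j_2$ of $i$ with $\ch_\calG(j_1)=\ch_\calG(j_2)$ contribute only through their sum, matching exactly the constraint $B_{i,j}=0$ whenever $i\notin\overline{\sur}(j)$ in Definition~\ref{def:sur equivalence}. The remaining lower-triangular transformation $B$ is then pinned down up to that ambiguity by the multi-environment linear system afforded by Assumption~\ref{assmp:degeneracy} (again requiring $K\geq d$), and stitching the three stages together yields $(\hat y^{(k)},\hat\calG)\sim_\sur(y^{(k)},\calG)$.

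The main obstacle I anticipate is the cross-environment alignment in the second paragraph. Within a single environment, linear ICA identifies noise coordinates only up to a permutation and a diagonal scaling, and since Assumption~\ref{assmp:noise} does \emph{not} impose a common noise distribution across environments, alignment by distribution matching --- the tool used by \citet{jin2024learning} --- is unavailable. The remedy, encapsulated by Theorem~\ref{thm:root-node}, is to align indirectly through the causal structure: a valid root direction must extract a single independent noise component simultaneously in every environment, and Assumption~\ref{assmp:degeneracy} with $K\geq d$ forces this set to coincide with the spans generated by the true root noises. Verifying that this simultaneous criterion admits no spurious solutions, and that the residuals after each peeling-off step still satisfy the hypotheses of Theorem~\ref{thm:root-node}, is where the bulk of the technical work will lie.
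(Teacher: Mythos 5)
Your plan is viable but follows a genuinely different route from the paper's proof. The paper argues non-constructively: it takes \emph{any} candidate tuple $(\hat z^{(k)},\hat\Omega^{(k)},\hat W^{(k)},\hat y^{(k)},\hat H)$ satisfying Model~\eqref{eq:dgp} with the same observed distribution, invokes classical ICA identifiability to write $\hat z^{(k)}=\Gamma^{(k)}P^{(k)}z^{(k)}$, and then analyzes the resulting matrix identity $(I-W^{(k)\top})T=P^{(k)\top}\Omega'^{(k)}(I-\hat W^{(k)\top})$ with $T=H^{-1}\hat H$ row by row along the topological order. A small auxiliary lemma (no column of $I-W^{(k_1)}$ indexed by $i$ can be proportional to a column of $I-W^{(k_2)}$ indexed by $j\neq i$, by acyclicity) forces the per-environment permutations $P^{(k)}$ to coincide and the graphs to be isomorphic, and Assumption~\ref{assmp:degeneracy} then pins the support of $T^{-1}$ to the surrounding sets. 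Your proposal is instead the ``achievability implies identifiability'' route: run the population version of the three-stage algorithm and show its output is $\sim_\sur$-equivalent to the truth. That route is also legitimate --- indeed the paper itself carries it out in Theorems~\ref{thm:root-node}, \ref{thm:alg1}, \ref{thm:pruning} and \ref{thm:disentangle} --- and it buys a constructive procedure, but it front-loads exactly the two hardest verifications you defer to the end (that simultaneous independence across all environments admits no spurious directions, and that the peeled residuals $\proj_i\tilde x^{(k)}$ again satisfy the model hypotheses), which are the full contents of Theorems~\ref{thm:root-node} and \ref{thm:alg1}. The paper's direct argument avoids all of that machinery at the cost of being non-constructive.

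Two points you should tighten if you pursue your route. First, to convert ``the algorithm recovers the truth up to $\sim_\sur$'' into the identifiability statement, you must check that $\sim_\sur$ behaves as an equivalence relation on pairs $(y^{(k)},\calG)$: two distinct ground truths inducing the same observed law are each $\sim_\sur$-related to the common algorithm output, and you need symmetry and transitivity (i.e., closure of the class of permuted lower-triangular matrices supported on $\overline{\sur}(\cdot)$ under inversion and composition) to relate them to each other; the paper's two-solution comparison sidesteps this entirely. Second, your description of the residual ambiguity is imprecise: under Assumption~\ref{assmp:degeneracy} the DAG itself (hence every parent set) is recovered \emph{exactly} up to a relabeling permutation --- that is what the pruning step and the paper's rank argument deliver --- and the surrounded-node ambiguity lives only in the latent feature values, where $\hat y^{(k)}_i$ may be an arbitrary linear combination of $\{y^{(k)}_j: j\in\overline{\sur}(i)\}$ with $\sur(i)=\{j\in\pa(i):\ch(i)\subseteq\ch(j)\}$, not merely a ``sum of parents with identical child sets.''
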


% \begin{proof}
%     First, we need to prove that $\hat{z}^{(k)}=\Omega^{(k)}Pz^{(k)}$ where $\Omega^{(k)}$ and $P$ are a diagonal matrix and a permutation matrix. By the research of independent component analysis (ICA) \citep{comon1994independent,hyvarinen2000independent}, we could know that $\hat{z}^{(k)}=\Omega^{(k)}P^{(k)}z^{(k)}$ for any $k\in[K]$. Therefore, we could choose any $k_1\neq k_2\in[K]$ such that $\Omega^{(k_1)}P^{(k_1)}z^{(k_1)}=\Omega^{(k_2)}P^{(k_2)}z^{(k_2)}$. With Lemma 4 in \citet{jin2024learning}, we could know that $z^{(k_1)}=P^{(k_1,k_2)}z^{(k_2)}$ where $P^{(k_1,k_2)}$ is a permutation matrix. With Assumption \ref{assmp:noise}, we could know that $P^{(k_1,k_2)}$ is an identity matrix and $\hat{z}^{(k)}=\Omega^{(k)}Pz^{(k)}$.

%     Then by Lemma 13 in \citet{jin2024learning}, we conclude that $(y^{(k)},\calG)\sim_{\mathrm{sur}}(\hat{y}^{(k)},\hat{\calG})$.
% \end{proof}
% \begin{remark}
% \label{rem:caveat}
Before detailing our algorithm, we first clarify the scope of our theoretical contributions. Consistent with much of the current CRL literature, we focus on identifiability – namely, whether the latent features and causal DAG can be uniquely recovered in the limit of infinite data and whether our proposed procedure achieves this recovery in principle. The statistical complexity will be briefly touched upon in Appendix~\ref{app:comp} and a recent interesting work \citep{lee2026beyond} could be a natural direction to follow in this regard.
% \end{remark}

\section{The New Linear CRL Algorithm}
\label{sec:alg}

In this section, we introduce \texttt{CREATOR} (Causal REpresentation leArning via Topological ORdering, pruning, and disentanglement), a novel linear CRL algorithm grounded in Theorem~\ref{thm:identifiability} and detailed in Algorithm~\ref{alg:crl}.  \texttt{CREATOR} proceeds in three \emph{subroutines}:
\begin{enumerate}
  \item \textbf{Topological Ordering \& Feature Recovery.} Infer a causal ordering and recover latent features up to the equivalence relation \(\sim_\triangle\).
  \item \textbf{DAG Pruning.} Sparsify the initially dense DAG obtained in subroutine 1.
  \item \textbf{Feature Disentanglement.} Refine latent features up to the equivalence relation \(\sim_\sur\), leveraging the results of the first two subroutines.
\end{enumerate}

\subsection{Subroutine~1: Latent feature learning up to \texorpdfstring{$\sim_{\triangle}$ by inferring topological ordering}{perm}}\label{subsec:topo-infer}

% TODO: change name / purpose / 
For simplicity, we fix the topological ordering as $\pi=(1,2,\dots,d)$.
To learn latent features $y^{(k)}$, the first subroutine of \texttt{CREATOR} sequentially recovers one component $y_{i}^{(k)}$ and $z_{i}^{(k)}$ of $y^{(k)}$ and $z^{(k)}$ at a time, starting from the root/childless nodes. An illustration using $d = 3$ latent features is shown in Figure~\ref{fig:alg-diagram}. As will be proved in Theorem~\ref{thm:alg1}, the order at which $y_{i}^{(k)}$ is recovered corresponds to its topological ordering encoded in the causal DAG $\calG$.

\begin{figure}[htbp]
    \centering
    \includegraphics[width=\linewidth]{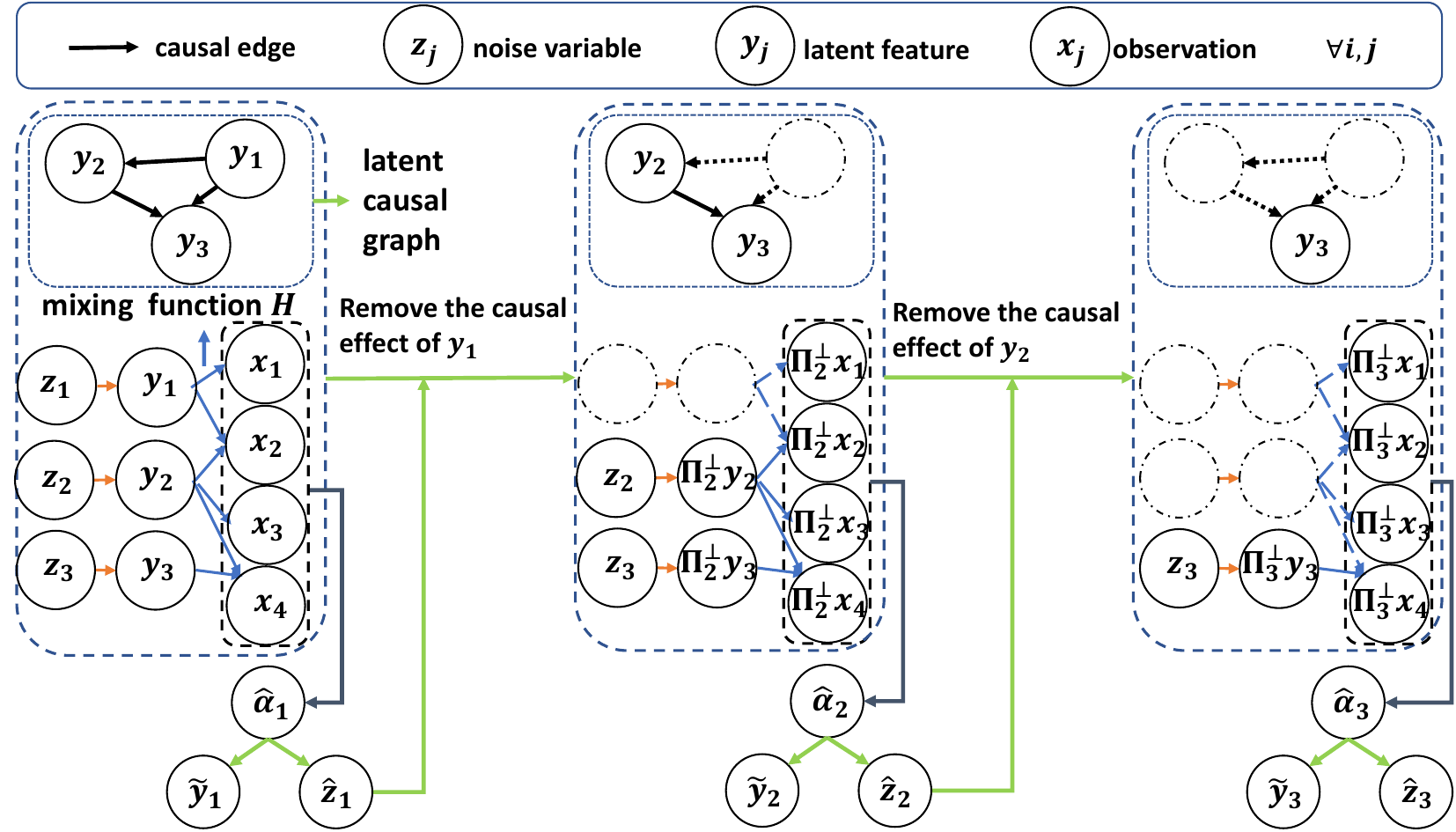}
    \caption{An illustration of subroutine 1. Dashed nodes and edges are eliminated.}
    \label{fig:alg-diagram}
\end{figure}
%To achieve this goal, the algorithm is comprised of $d - 1$ iterations and in each iteration, % the algorithm  returns $\Tilde{y}^{(k)}_i \coloneqq \alpha_{i}^\top x^{(k)}$ and $\hat{z}^{(k)}_i\coloneqq \alpha_i^\top \proj_iX^{(k)}$ by solving the optimization problem in \eqref{eq:core-optimization}, where $\proj_iX^{(k)}=\proj_{i-1}x^{(k)}-\mathbb{E}(\proj_{i-1}x^{(k)}|\hat{z}^{(k)}_{i-1})$ and $\proj_1x^{(k)}=x^{(k)}$. 

% We would prove that $\hat{z}^{(k)}_i=z^{(k)}_{\pi(i)}$ and $\hat{y}^{(k)}_i$ is a linear combination of $y^{(k)}_j$, $j\in\{\pi(1),\dots,\pi(i)\}$.
% \Lin{here 's the skech
% \begin{itemize}
% \item ICA and purpose of ICA
% \item optimization and the purpose of optimization
% \item output such as ytilde zhat (i prefer streamline notation not to use different symbols for the same purpose)
% \end{itemize}
% }
Under Model~\eqref{eq:dgp} and Assumption~\ref{assmp:full_rank}, for any component $i \in [d]$, $y^{(k)}_i = \alpha^\top_i x^{(k)}$ for some $\alpha_i \in \bbR^{p}$. Therefore, we only need to obtain an appropriate $\alpha_i$ ($i\in[d]$) to recover $y^{(k)}$. The intuition of identifying a correct $\alpha$ can be gathered from Theorem~\ref{thm:root-node} below.

\begin{theorem}\label{thm:root-node}
Under Model~\eqref{eq:dgp} and Assumptions~\ref{assmp:noise}--\ref{assmp:full_rank}, for any nonzero $\alpha_i$ such that $\alpha_i^\top x^{(k)}$ is independent of $x^{(k)}-\Pi (x^{(k)} \mid \alpha_i^\top x^{(k)})$ for any $k\in[K]$, then $\alpha_i^\top x^{(k)} \propto_{k} z^{(k)}_i$ with $i$ being a root node in $\calG$ which implies $\alpha_i^\top x^{(k)} \propto_{k} y_i^{(k)}$, where $\propto_{k}$ means ``equal up to some constant depending on $k$''.
\end{theorem}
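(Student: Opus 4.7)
My plan is to reduce the statement to a classical non-Gaussian identifiability argument in the spirit of Darmois--Skitovich, enriched by the structural constraints of Model~\eqref{eq:dgp}. First, by \eqref{eq:dgp} and Assumption~\ref{assmp:full_rank}, one can write $x^{(k)} = M^{(k)} z^{(k)}$ with $M^{(k)} \coloneqq H(I - W^{(k)\top})^{-1}\Omega^{(k)} \in \bbR^{p \times d}$ having full column rank, so that $u^{(k)} \coloneqq \alpha_i^\top x^{(k)} = \beta^{(k)\top} z^{(k)}$ for $\beta^{(k)} \coloneqq M^{(k)\top}\alpha_i$. Since the paper interprets conditional expectation as population linear projection, the residual admits the closed form
\[
r^{(k)} \coloneqq x^{(k)} - \bbE(x^{(k)}\mid u^{(k)}) = (M^{(k)} - c^{(k)}\beta^{(k)\top})z^{(k)},\quad c^{(k)} = \frac{M^{(k)}\Sigma^{(k)}\beta^{(k)}}{\beta^{(k)\top}\Sigma^{(k)}\beta^{(k)}},
\]
where $\Sigma^{(k)}\coloneqq\mathrm{Cov}(z^{(k)})$ is diagonal; the hypothesis then becomes independence between $\beta^{(k)\top}z^{(k)}$ and each coordinate of $r^{(k)}$, both expressed as linear forms in the independent components $z^{(k)}_j$.

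Next, I would apply Darmois--Skitovich componentwise: together with Assumption~\ref{assmp:noise}, it forces every $j$ with $\beta^{(k)}_j\neq 0$ and $z^{(k)}_j$ non-Gaussian to satisfy $M^{(k)}_{\cdot,j} = c^{(k)}\beta^{(k)}_j$, i.e., $M^{(k)}_{\cdot,j}$ is parallel to $c^{(k)}$. Because $M^{(k)}$ has linearly independent columns, only one column can be parallel to any fixed nonzero direction, so the support $S^{(k)}\coloneqq\{j:\beta^{(k)}_j\neq 0\}$ satisfies $|S^{(k)}|\leq 2$. A direct calculation rules out $|S^{(k)}|=2$: if $S^{(k)}=\{j_1,j_2\}$ with $z^{(k)}_{j_1}$ Gaussian and $z^{(k)}_{j_2}$ non-Gaussian, the requirement $c^{(k)}\propto M^{(k)}_{\cdot,j_2}$, combined with the explicit formula for $c^{(k)}$ and linear independence of $M^{(k)}_{\cdot,j_1},M^{(k)}_{\cdot,j_2}$, would force $\beta^{(k)}_{j_1}=0$, a contradiction. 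Hence $S^{(k)}=\{j_k\}$ is a singleton and $u^{(k)}\propto_k z^{(k)}_{j_k}$.

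The remaining task is to show that $j_k$ is independent of $k$ and corresponds to a root node of $\calG$. Setting $\delta_i\coloneqq H^\top\alpha_i$ and using $M^{(k)\top} = \Omega^{(k)}(I - W^{(k)})^{-1}H^\top$, the singleton support condition rewrites as $\delta_i = c^{(k)}\bigl(e_{j_k} - W^{(k)}_{\cdot, j_k}\bigr)$ for some scalar $c^{(k)}\neq 0$; in particular $\mathrm{supp}(\delta_i)\subseteq\{j_k\}\cup\pa_\calG(j_k)$ with $\delta_i[j_k]=c^{(k)}$. If $j_{k_1}\neq j_{k_2}$ occurred in two environments, the nonzero entries $\delta_i[j_{k_1}]$ and $\delta_i[j_{k_2}]$ would force both $j_{k_1}\in\pa_\calG(j_{k_2})$ and $j_{k_2}\in\pa_\calG(j_{k_1})$, violating acyclicity of $\calG$; hence $j_k\equiv i$. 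Finally, if $i$ had parents, then $\delta_i^\top$ would be proportional to $U^{(k)}_{i,\cdot}$ for every $k$, which is impossible for a nonzero $\delta_i$ because Assumption~\ref{assmp:degeneracy} gives $\dim\spn\{U^{(k)}_{i,\cdot}:k\in[K]\} = |\pa_\calG(i)|+1\geq 2$; the degenerate case $\delta_i=0$ yields $u^{(k)}\equiv 0$ and is vacuous. Therefore $i$ is a root node, so $y^{(k)}_i=\Omega^{(k)}_{ii}z^{(k)}_i$ and $\alpha_i^\top x^{(k)}\propto_k z^{(k)}_i\propto_k y^{(k)}_i$.

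The main obstacle I foresee is the interplay between the generic Darmois--Skitovich step and the structural side: ruling out $|S^{(k)}|=2$ and the root-node conclusion both rely on the explicit factorization $M^{(k)\top} = \Omega^{(k)}(I - W^{(k)})^{-1}H^\top$ and on the cross-environment heterogeneity encoded in Assumption~\ref{assmp:degeneracy}, rather than on the independence hypothesis alone.
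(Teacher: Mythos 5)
Your proposal is correct and follows essentially the same route as the paper's proof: express $\alpha_i^\top x^{(k)}$ and the regression residual as linear forms in the independent components of $z^{(k)}$, invoke Darmois--Skitovich to force the support of $M^{(k)\top}\alpha_i$ to be a singleton, use acyclicity of $\calG$ to show the surviving index is the same across environments, and then conclude from Assumption~\ref{assmp:degeneracy} that $\dim\spn\{U^{(k)}_{i,\cdot}\}=1$ forces $|\pa(i)|=0$. The only substantive difference is that you handle the single permitted Gaussian component explicitly (ruling out a two-element support via the formula for the projection coefficient and column independence of $M^{(k)}$), a point the paper's rank-one-projector argument passes over more quickly.
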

We postpone the proof of Theorem~\ref{thm:root-node} to Appendix~\ref{apdx:proof} and only give a sketch here. For any  $M^{(k)}\in\bbR^{n\times d}$ and $x^{(k)}$ generated by $x^{(k)}\coloneqq M^{(k)}z^{(k)}$, by Darmois--Skitovitch theorem \citep{darmois1953analyse, skitovitch1953property}, $\alpha^\top x^{(k)}$ is independent of $x^{(k)}-\Pi (x^{(k)} \mid \alpha^\top x^{(k)})$ if and only if $\alpha^\top x^{(k)}$ is a component of $z^{(k)}$. Due to Model~\eqref{eq:dgp}, $M^{(k)}=H(I-W^{(k)\top})^{-1}\Omega^{(k)}$, together with the acyclicity of $\calG$, this component must correspond to one of the root nodes. 
% Note that under the linear CRL Model~\eqref{eq:dgp}, all conditional expectations \Lin{this needs to be changed?} appeared are linear combinations of the variables being conditioned upon.

To describe subroutine 1, we first explain our approach to discovering a root-node component of $y^{(k)}$. By Theorem~\ref{thm:root-node}, there exists $\alpha$ such that $\alpha^\top x^{(k)}$ corresponds to a root-node component of $y^{(k)}$ up to constant. We devise the following constrained optimization problem to identify such an $\alpha$, denoted as $\hat{\alpha}$:
\begin{equation}\label{eq:core-optimization}
\begin{split}
 \hat{\alpha}_{i} &\coloneqq \mathop{\arg\min} \limits_{\alpha} L^{(i)} (\alpha,x) \\ &\coloneqq \sum\limits_{k=1}^K \sum_{j=1}^d \mathrm{MI} (\alpha^\top\proj_i{x}^{(k)},r^{(k)}_j) \\& \mathrm{s.t.} \ \alpha \in \mathop{\bigcup} \limits_{k = 1}^K \mathrm{ica} (\proj_ix^{(k)}),
\end{split}
\end{equation}
where $r^{(k)}_j \coloneqq \proj_i{x}_j^{(k)}-\Pi (\proj_i{x}_j^{(k)} \mid \alpha^\top\proj_j{x}^{(k)})$,
% \begin{equation}\label{eq:core-optimization}
% \begin{split}
% & \mathop{\min} \limits_{\alpha} L (\alpha,x) \coloneqq \sum\limits_{k=1}^K \sum_{i=1}^d \mathrm{MI} (\alpha^\top{x}^{(k)},r^{(k)}_i) \ \mathrm{s.t.} \ \alpha \in \mathop{\cup} \limits_{k = 1}^K \mathsf{ica} (x^{(k)}),
% \end{split}
% \end{equation}
% where 
% \begin{equation}
% r^{(k)}_i={x}_i^{(k)}-\mathbb{E}({x}_i^{(k)}|\alpha^\top{x}^{(k)}),
% \end{equation}
$\mathrm{MI} (\xi, \eta)$ is the mutual information between two random variables $\xi$ and $\eta$, and $\mathrm{ica} (\cdot)$ denotes the set of all row vectors of the unmixing matrices estimated by any consistent ICA algorithm \citep{miettinen2015fourth}.

We now unpack \eqref{eq:core-optimization}. To ease exposition, we start by explaining the heuristic of the first iteration $i = 1$. As illustrated before Theorem~\ref{thm:root-node}, to identify $\alpha_1$ such that $\forall k \in [K]$, $\alpha_1^{\top} x^{(k)}$ equals to one of the root-node components of $y^{(k)}$ is equivalent to finding $\alpha_1$ such that $\alpha_1^\top x^{(k)}$ and $r^{(k)}_j$ are independent for any $k \in [K]$. \eqref{eq:core-optimization} achieves this by minimizing their mutual information. Since for any root node $j$, $y^{(k)}_j \equiv z^{(k)}_j$ for all $k \in [K]$, we only need to find $\alpha_1$ such that $\exists \, k_0 \in [K]$, such that $\alpha_1^\top x^{(k_0)}$ is a component of $z^{(k_0)}$. We leverage ICA to obtain unmixing matrices $N^{(k)}$ such that $N^{(k)} x^{(k)} = z^{(k)}$. Then $\forall \, j \in [d]$, we have $N^{(k)\top}_{j, \cdot} x^{(k)}=z^{(k)}_j$. Therefore, instead of directly solving the continuous optimization problem, we simply search over all $K \cdot d$ row vectors from all $K$ unmixing matrices $\{N^{(k)}, k \in [K]\}$ to identify $\hat{\alpha}_{1}$. As we only need to identify $\hat{\alpha}_{1}$ such that the mutual information in \eqref{eq:core-optimization} is 0, we replace mutual information by independence criterion such as Hilbert-Schmidt Independence Criterion (HSIC) \citep{HSIC}. In turn, we obtain estimated version of $z^{(k)}_1$ and $y^{(k)}_1$, denoted as $\hat{z}_1^{(k)} \coloneqq \hat{\alpha}_1^\top x^{(k)}$ and $\Tilde{y}^{(k)}_1\coloneqq \hat{\alpha}_1^\top x^{(k)}$ (see Remark~\ref{rem:disentanglement} for why we use $\tilde{y}^{(k)}$ instead of $\hat{y}^{(k)}$).

Next, we obtain $\proj_2x^{(k)}$ by projecting $x^{(k)}$ onto the orthocomplement to $\hat{z}^{(k)}_1$, by which the causal influences from $y^{(k)}_1$ to $y^{(k)}_j$ for $j\geq 2$ are eliminated. Graphically, this operation removes the first node and its connected edges in the original causal DAG $\calG$. After this variable elimination process, new root nodes emerge so we can repeat this step iteratively to unravel the topological ordering of $\calG$. A visual explanation can be found in Figure~\ref{fig:alg-diagram}.

For iteration $i \geq 2$, by the definition of $\proj_ix^{(k)}$ and Model~\eqref{eq:dgp}, we have  $\proj_ix^{(k)}=H (I-W^{(k)\top})^{-1}\Omega^{(k)}\proj_iz^{(k)}$. As $\proj_iz^{(k)}_{[i-1]}=0$, $\proj_ix^{(k)}$ can only be a function of $z^{(k)}_j$, for $j \in \{i,i+1,\cdots,d\}$. As mentioned, we repeatedly solve \eqref{eq:core-optimization} to obtain $\hat{\alpha}_i$ and in turn the estimated $z^{(k)}_i$ and $y^{(k)}_i$, denoted as $\hat{z}_i^{(k)} \coloneqq \hat{\alpha}^\top_i x^{(k)}$ and $\Tilde{y}^{(k)}_i\coloneqq \hat{\alpha}^\top_i x^{(k)}$, for $i = 1, \cdots, d$. We further define $\oproj_i x^{(k)} \coloneqq x^{(k)}-\Pi (x^{(k)} \mid \hat{z}_{[i-1]}^{(k)})$ for $i\geq2$ and $\oproj_1 x^{(k)} \coloneqq x^{(k)}$.

\begin{remark}\label{rem:disentanglement}
We use $\tilde{y}^{(k)}$ instead of $\hat{y}^{(k)}$ in subroutine 1 because further disentanglement for $\Tilde{y}^{(k)}$ is needed; the final estimator of $y^{(k)}$ is denoted as $\hat{y}^{(k)}$. Recall that at iteration $i$, $\Tilde{y}_i^{(k)} = \hat{\alpha}_i^\top x^{(k)} = \hat{\alpha}_i^\top H y^{(k)}$ and $\hat{\alpha}_{i}$ is the output of ICA. Let $\beta \coloneqq \hat{\alpha}_{i}^\top H$. We can only guarantee that $\beta_{j} \equiv 0$ for $j \in \{i+1,\dots,d\}$, but not for $j \leq i$. Hence, $\Tilde{y}^{(k)}_i$ might depend on $y^{(k)}_j,\forall j \in [i-1]$, which is described informally as being ``entangled'' in this paper. The entanglement is equivalent to the matrix $B$ such that $\tilde{y}^{(k)} = B y^{(k)}$, where $B$ is a lower triangular matrix comprised of $\beta$ just defined. The procedure for disentangling $\tilde{y}^{(k)}$ will be described in Section~\ref{subsec:disentangling}.
\end{remark}

\begin{algorithm}
	\caption{\texttt{CREATOR}}
	\label{alg:crl}
    \renewcommand{\algorithmiccomment}[1]{\hfill \# #1}
\hspace*{0.02in} {\bf Input:}  observed data: $X\coloneqq \{X^{(k)},\ k\in [K]\}$ \\
\hspace*{0.02in} {\bf Output:} estimated causal latent feature $\hat{Y}^{(k)}$, latent causal graph $\hat{\calG}$
	\begin{algorithmic}[1]
        % \STATE Similar to CausalVAE but multi environment data input
        \STATE $\oproj_1X^{(k)}\leftarrow X^{(k)}$  \COMMENT{subroutine 1}
        \FORALL{$i\in\{1,\cdots,d\}$} 
        % \STATE $\hat{Z}^{(k)}, S \leftarrow \mathrm{ICA} (x^{(k)})$
        \STATE $\hat{\alpha}_{i} \leftarrow \mathop{\arg\min}\limits_\alpha L^{(i)}(\alpha,X)$; $\Tilde{Y}_i^{(k)}\leftarrow  X^{(k)}\hat{\alpha}_{i}$; $\hat{Z}_i^{(k)}\leftarrow  \proj_iX^{(k)}\hat{\alpha}_{i}$
        \STATE $\oproj_{i+1}X^{(k)}\leftarrow \oproj_iX^{(k)}-\Pi (\oproj_iX^{(k)} \mid \hat{Z}_i^{(k)})$
        \ENDFOR        
        \STATE $\hat{\calG}\gets$ \texttt{Pruning}($\Tilde{Y}^{(k)}$,\ $\hat{Z}^{(k)}$) 
        
        \COMMENT{subroutine 2 (Section~\ref{sec:pruning}; Algorithm~\ref{alg:crl-pruning})}
         \STATE $\hat{Y}^{(k)}\gets$ \texttt{Disentanglement}($\Tilde{Y}^{(k)}$,\ $\hat{Z}^{(k)}$,\ $\hat{\calG}$)
         
        \COMMENT{subroutine 3 (Section~\ref{subsec:disentangling}; Algorithm~\ref{alg:crl-disentangle})}
	\end{algorithmic}  
\end{algorithm}

Owing to the above reasoning, we obtain the following theorem regarding the validity of subroutine~1. The proof is deferred to Appendix~\ref{apdx:proof}.

\begin{theorem}\label{thm:alg1}
Suppose that the optimization problem of subroutine 1 in Algorithm~\ref{alg:crl} is perfectly solved and denote the solution as $\Tilde{y}^{(k)}$ and $\hat{z}^{(k)}$. Then we must have $\hat{z}^{(k)}\sim_p z^{(k)}$ and $\Tilde{y}^{(k)}\sim_\triangle y^{(k)}$.
\end{theorem}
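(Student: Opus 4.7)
The plan is to proceed by induction on $i \in \{1,\ldots,d\}$ under the joint hypothesis that (i) $\hat z^{(k)}_\ell \propto_k z^{(k)}_{\pi(\ell)}$ for every $\ell\le i$, where $(\pi(1),\ldots,\pi(i))$ is a topological prefix of $\calG$; and (ii) $\tilde y^{(k)}_\ell = \sum_{j\le \ell} B_{\ell j}\, y^{(k)}_{\pi(j)}$ with coefficients $B_{\ell j}$ independent of $k$. Stacking (ii) across $\ell$ is precisely $\tilde y^{(k)}\sim_\triangle y^{(k)}$, and (i) is precisely $\hat z^{(k)}\sim_p z^{(k)}$. For the base case $i=1$, we have $\proj_1 x^{(k)} = x^{(k)} = M^{(k)} z^{(k)}$ with $M^{(k)}\coloneqq H(I-W^{(k)\top})^{-1}\Omega^{(k)}$ of full column rank by Assumption~\ref{assmp:full_rank}. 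For every root $r$ of $\calG$, the functional $(H^\dagger)^\top e_r$ lies in $\mathrm{ica}(x^{(k)})$ for every $k$, because $(H^\dagger)^\top_{r,\cdot}\, x^{(k)} = \Omega^{(k)}_{rr}\, z^{(k)}_r$ is a component of $z^{(k)}$; hence some $\alpha$ achieves $L^{(1)}(\alpha,X)=0$. Theorem~\ref{thm:root-node} then forces any such minimizer $\hat\alpha_1$ to satisfy $\hat z^{(k)}_1 = \tilde y^{(k)}_1 \propto_k z^{(k)}_{\pi(1)} \propto_k y^{(k)}_{\pi(1)}$ for some root $\pi(1)$.

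For the inductive step, note that conditioning on $\hat z^{(k)}_{\ell\in[i-1]}$ generates the same $\sigma$-algebra as conditioning on $z^{(k)}_{\pi(\ell),\ell\in[i-1]}$, so $\oproj_i = \proj_i$. Using independence of the components of $z^{(k)}$ together with the structural equation, a direct computation yields
\begin{equation*}
\proj_i x^{(k)} = M^{(k)}_{\cdot, \calV_i}\, z^{(k)}_{\calV_i}, \qquad \calV_i \coloneqq \calV \setminus \{\pi(1),\ldots,\pi(i-1)\},
\end{equation*}
where $M^{(k)}_{\cdot,\calV_i}$ still has full column rank, and the coordinates of $z^{(k)}_{\calV_i}$ remain mutually independent with at most one Gaussian. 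Viewing $(\proj_i x^{(k)}, z^{(k)}_{\calV_i})$ as a reduced linear CRL problem on the induced DAG $\calG_i \coloneqq \calG[\calV_i]$ (which inherits Assumptions~\ref{assmp:noise}--\ref{assmp:full_rank}), Theorem~\ref{thm:root-node} applies verbatim. The same ICA argument as in the base case ensures that $\bigcup_k \mathrm{ica}(\proj_i x^{(k)})$ contains an environment-invariant extractor of $z^{(k)}_{\pi(i)}$ for every root $\pi(i)$ of $\calG_i$, so the minimizer $\hat\alpha_i$ of $L^{(i)}$ satisfies $\hat z^{(k)}_i = \hat\alpha_i^\top\proj_i x^{(k)} \propto_k z^{(k)}_{\pi(i)}$; since any root of $\calG_i$ extends $(\pi(1),\ldots,\pi(i-1))$ to a topological prefix of $\calG$, part (i) is preserved.

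To upgrade this to (ii), set $\beta\coloneqq H^\top\hat\alpha_i$, which is $k$-invariant, so $\tilde y^{(k)}_i = \hat\alpha_i^\top x^{(k)} = \beta^\top y^{(k)}$. Writing $y^{(k)} = A^{(k)}\Omega^{(k)} z^{(k)}$ with $A^{(k)}\coloneqq (I-W^{(k)\top})^{-1}$ lower triangular under $\pi$ with unit diagonal, one expands
\begin{equation*}
\hat\alpha_i^\top \proj_i x^{(k)} = \sum_{\ell\ge i}\Omega^{(k)}_{\pi(\ell),\pi(\ell)}\Bigl(\sum_{m\ge \ell} \beta_{\pi(m)}\, A^{(k)}_{\pi(m),\pi(\ell)}\Bigr)\, z^{(k)}_{\pi(\ell)}.
\end{equation*}
Matching this against $\propto_k z^{(k)}_{\pi(i)}$ and running a backward induction on $\ell$ from $d$ down to $i+1$, the unit diagonal of $A^{(k)}$ and the nondegeneracy of $\Omega^{(k)}$ force $\beta_{\pi(m)}=0$ for every $m>i$. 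Hence $\tilde y^{(k)}_i = \sum_{j\le i}\beta_{\pi(j)}\, y^{(k)}_{\pi(j)}$ with $k$-invariant coefficients, closing the induction and giving $\tilde y^{(k)}\sim_\triangle y^{(k)}$ and $\hat z^{(k)}\sim_p z^{(k)}$.

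The main obstacle I anticipate is the reduction step: namely, verifying that the projected observation $\proj_i x^{(k)}$ genuinely arises from a smaller linear CRL instance meeting every hypothesis of Theorem~\ref{thm:root-node}---full-column-rank mixing, mutually independent at-most-one-Gaussian noise supported on $\calV_i$, and a DAG structure inherited from $\calG$---and that the ICA search set $\mathrm{ica}(\proj_i x^{(k)})$ still contains an environment-invariant extractor for every root of $\calG_i$, so that the zero-MI optimum cannot be stolen by an accidentally-independent non-root functional. Secondary care is needed to ensure Assumption~\ref{assmp:degeneracy} transfers to the reduced instance so that the identified root $\pi(i)$ is genuinely a root of the residual graph rather than an artifact of degeneracy.
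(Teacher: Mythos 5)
Your proposal is correct and follows essentially the same route as the paper's proof: induct over the $d$ iterations, apply Theorem~\ref{thm:root-node} to the projected data $\proj_i x^{(k)}$ viewed as a reduced linear CRL instance to extract a root of the residual DAG, and conclude lower-triangularity of $B$ from $\beta_{(i+1):d}=0$. You are in fact more explicit than the paper on two points it glosses over --- the existence of a zero-objective minimizer inside $\cup_k\mathrm{ica}(\proj_i x^{(k)})$ and the backward induction forcing $\beta_{\pi(m)}=0$ for $m>i$ --- and the concern you flag about Assumption~\ref{assmp:degeneracy} transferring to the reduced instance is a real gap shared with (not resolved by) the paper's own argument.
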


\begin{remark}
\label{rem:key}
In \citet{jin2024learning}, extra distributional assumptions on $z^{(k)}$ are required to align the recovered exogenous noise variables across different environments. However, this is not necessary for us as they are automatically aligned by following the topological orderings.
\end{remark}

\subsection{Subroutine~2: Pruning}
\label{sec:pruning}
% TODO: link to last subsec / difference from vasalis
Subroutine 1 of \texttt{CREATOR} only identifies the topological ordering of $\calG$, which still has extraneous edges. To further refine the causal DAG 
$\calG$, we introduce a “pruning” subroutine as the second stage of \texttt{CREATOR}, which we now describe in detail. According to Model~\eqref{eq:dgp}, recovering the edges of $\calG$ is equivalent to finding indices of nonzero elements in $W^{(k)}$. To obtain a proxy of $W^{(k)}$, we regress $\hat{z}^{(k)}$ against $\Tilde{y}^{(k)}$ and denote the regression coefficient as $\hat{B}^{(k)} \in \bbR^{d \times d}$. In the ideal case when $\hat{z}^{(k)} \sim_{p} z^{(k)}$ and $\Tilde{y}^{(k)} \sim_{\Delta} y^{(k)}$, $\hat{B}^{(k)} \equiv (\Omega^{(k)})^{-1}(I-W^{(k)})^\top B^{-1}$. 
% For any $i,j$ such that $\hat{B}^{(k)}_{i,j}\neq0$, there exists correlation between $\hat{z}^{(k)}_i$ and $\Tilde{y}^{(k)}_j$. Such correlation comes from two possible ways: (1) there is a causal edge from node $i$ to $j$ in $\calG$; (2) $\Tilde{y}^{(k)}\sim_\triangle y^{(k)}$. Therefore, distinguishing these two situations apart could help prune spurious edges. 
For any different $1 \leq j \leq i - 1 \leq d$, $\hat{B}^{(k)}_{i,j} =(\Omega^{(k)-1})_{\cdot,i}(B^{-1})_{\cdot,j}^\top(e_i-W_{\cdot,i}^{(k)})$. Then we construct $\bbR^{K} \ni \hat{B}_{i,j} \coloneqq (\hat{B}^{(k)}_{i,j}, k\in[K])$ and $\bbR^{K \times (i - j)} \ni \hat{C}_{i,j}\coloneqq (\hat{B}_{i,l},l\in\{j+1,\dots,i\})$. 
 If $j \in \pa (i)$,  $\hat{B}^{(k)}_{i,j}=(\Omega^{(k)-1})_{\cdot,i}(B^{-1})_{\cdot,j}^\top(e_i-W_{\cdot,i}^{(k)})$ depends on $W^{(k)}_{j,i}$, while $\hat{C}_{i,j}$ only depends on $W^{(k)}_{l,i}$ for $l\geq j+1$. 
 
 Thanks to the heterogeneity across environments as imposed in Assumption~\ref{assmp:degeneracy}, $\hat{B}_{i,j}$ cannot be expressed as a linear combination of column vectors of $\hat{C}_{i,j}$, which further implies that the rank of $\hat{C}_{i,j}$ must be less than that of $[\hat{C}_{i,j},\hat{B}_{i,j}]$. The pruning step, the pseudocode of which can be found in Algorithm~\ref{alg:crl-pruning} in Appendix~\ref{apdx:alg}, essentially leverages this rank difference to remove spurious edges by iterating over all $\{(i, j), j < i\}$ pairs based on the inferred topological ordering from subroutine 1. We summarize the above reasoning in Theorem~\ref{thm:pruning}, with the complete proof deferred to Appendix~\ref{apdx:proof}.

\begin{theorem}\label{thm:pruning}
Under Model~\eqref{eq:dgp} and Assumptions~\ref{assmp:noise}--\ref{assmp:full_rank}, $j \in \pa (i)$ if and only if $\mathrm{rank} (\hat{C}_{i,j}) = \mathrm{rank}(\Tilde{C}_{i,j}) - 1$, where $\Tilde{C}_{i,j} \coloneqq (\hat{B}_{i,j},\hat{C}_{i,j})$.
\end{theorem}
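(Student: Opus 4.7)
The plan is to reduce the rank condition to a linear-independence statement among certain vectors built from $(\Omega^{(k)})^{-1}$ and $W^{(k)}$ across environments $k \in [K]$. Starting from the ideal-case identity $\hat{B}^{(k)} = (\Omega^{(k)})^{-1}(I - W^{(k)})^\top B^{-1}$ recorded in the text, I fix the node $i$ and, for each $l \in \overline{\pa}(i)$, introduce the vector $d_l \coloneqq \bigl((\Omega^{(k)})^{-1}_{ii}(\delta_{il} - W^{(k)}_{l,i})\bigr)_{k \in [K]} \in \bbR^{K}$. These are exactly the nonzero entries of the $i$-th row of $U^{(k)} = (\Omega^{(k)})^{-1}(I - W^{(k)})^\top$, arranged columnwise across $k$, so Assumption~\ref{assmp:degeneracy} tells me that $\{d_l : l \in \overline{\pa}(i)\}$ is linearly independent in $\bbR^{K}$.

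Next I would exploit the triangular structure. Because subroutine~1 guarantees $\tilde{y}^{(k)} \sim_\triangle y^{(k)}$, the matrix $B$ (and hence $B^{-1}$) is lower triangular with nonzero diagonal, and $I - W^{(k)\top}$ is also lower triangular under the fixed topological ordering $\pi=(1,\dots,d)$. Expanding the product $\hat{B}^{(k)} = (\Omega^{(k)})^{-1}(I - W^{(k)})^\top B^{-1}$ columnwise, for any $l' \in \{j,\dots,i\}$ I obtain
\begin{equation*}
\hat{B}_{i,l'} \;=\; \sum_{l \in \overline{\pa}(i),\, l \geq l'} (B^{-1})_{l,l'}\, d_l,
\end{equation*}
where the $l=l'$ term is present precisely when $l' \in \overline{\pa}(i)$ and then carries the nonzero coefficient $(B^{-1})_{l',l'}$. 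A downward induction on $l'$, starting from $l' = i$ where $\hat{B}_{i,i}$ is a nonzero scalar multiple of $d_i$, then yields
\begin{equation*}
\mathrm{span}(\hat{C}_{i,j}) \;=\; \mathrm{span}\bigl\{d_l : l \in \overline{\pa}(i),\, l > j\bigr\},
\end{equation*}
so $\mathrm{rank}(\hat{C}_{i,j}) = |\{l \in \overline{\pa}(i) : l > j\}|$ by linear independence of the $d_l$'s.

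Applying the same expansion to $\hat{B}_{i,j}$ and splitting into two cases finishes the argument. If $j \in \pa(i)$, then $j \in \overline{\pa}(i)$ and $\hat{B}_{i,j}$ contains a nonzero $d_j$ component (with coefficient $(B^{-1})_{j,j}$) together with contributions from $\{d_l : l \in \overline{\pa}(i), l > j\}$; since $d_j$ is linearly independent of the latter family, $\hat{B}_{i,j} \notin \mathrm{span}(\hat{C}_{i,j})$ and adjoining it raises the rank by exactly one. If $j \notin \pa(i)$, then $j \notin \overline{\pa}(i)$ and the expansion of $\hat{B}_{i,j}$ only runs over $l \in \overline{\pa}(i)$ with $l > j$, so $\hat{B}_{i,j} \in \mathrm{span}(\hat{C}_{i,j})$ and the rank is unchanged. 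The main obstacle I foresee is the bookkeeping of the downward induction --- verifying that each $d_l$ with $l \in \overline{\pa}(i), l > j$ is actually recoverable as a linear combination of columns of $\hat{C}_{i,j}$ --- which requires combining the nonvanishing diagonal of $B^{-1}$ with the inductive hypothesis applied at strictly larger indices.
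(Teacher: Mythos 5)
Your proposal is correct and follows essentially the same route as the paper's proof: both start from $\hat{B}^{(k)} = (\Omega^{(k)})^{-1}(I-W^{(k)})^\top B^{-1}$, expand each column $\hat{B}_{i,l'}$ as a combination of the cross-environment vectors built from the $i$-th row of $U^{(k)}$ (your $d_l$'s are the paper's $W^{\Omega}_{i,l}$ and $\Omega^{\dag}_i$ up to sign), and invoke Assumption~\ref{assmp:degeneracy} plus the nonvanishing diagonal of $B^{-1}$ to decide whether adjoining $\hat{B}_{i,j}$ raises the rank. Your explicit downward induction establishing $\mathrm{span}(\hat{C}_{i,j}) = \mathrm{span}\{d_l : l \in \overline{\pa}(i),\ l>j\}$ is a slightly more careful rendering of a step the paper leaves terse, but it is the same argument.
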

% The key ingredient of Algorithm \ref{alg:crl-pruning} is the rank of $\hat{C}_{i,j}$ and $\Tilde{C}_{i,j}$. For any $i,j$ such that $\hat{B}^{(k)}_{i,j}\neq0$, there exists correlation between $\hat{z}^{(k)}_i$ and $\Tilde{y}^{(k)}_j$. Such correlation comes from two possible ways: (1) there is a causal edge from node $i$ to $j$ in $\calG$; (2) $\Tilde{y}^{(k)}\sim_\triangle y^{(k)}$. Therefore, distinguishing these two situations apart could help prune spurious edges. Due to the heterogeneity of causal mechanisms across environments, when a causal relationship exists from node i to node j, the estimated coefficient $\hat{B}_{i,j}$ introduces one additional degree of freedom in the augmented matrix $[\hat{C}_{i,j}, \hat{B}_{i,j}]$. This increment in dimensionality consequently induces increase in the matrix rank.

% \begin{remark}
We prune spurious edges using the estimate
$
\hat{B}^{(k)} \;:=\; \bigl(\Omega^{(k)}\bigr)^{-1}\bigl(I - W^{(k)}\bigr)^{\!\top} B^{-1}$.
Given the topological ordering, at step \(i\) we consider each candidate edge from node \(j \le i-1\) to \(i\).  For each pair \((i,j)\), the columns of matrices
$\hat{C}_{i,j} \in \mathbb{R}^{K \times (i-j)}$ and $\widetilde{C}_{i,j} \in \mathbb{R}^{K \times (i-j+1)}$ are formed by the vectors \(\hat{B}^{(k)}_{i,\,j:i}\) for \(k=1,\dots,K\).  We remove the edge \(j \to i\) if
$\mathrm{rank}\bigl(\widetilde{C}_{i,j}\bigr)
\;=\;
\mathrm{rank}\bigl(\hat{C}_{i,j}\bigr)$.
By contrast, \citet{jin2024learning} use an ICA unmixing matrix to select parents among all ancestors: they compute the dimension \(r_i\) of the subspace spanned by the unmixing‐matrix rows projected onto the orthogonal complement of the first \(j-1\) ancestor rows, and retain \(j\to i\) only if \(r_i = r_{i-1} - 1\).  Our use of the inferred topological ordering reduces the dimensions of the matrices whose ranks must be evaluated, yielding a more efficient pruning procedure.

\subsection{Subroutine~3: Feature Disentanglement}
\label{subsec:disentangling}

With the causal DAG and entangled latent features learnt from previous steps, we can disentangle latent features further up to the equivalence class $\sim_\sur$ using subroutine 3, the disentanglement algorithm (Algorithm~\ref{alg:crl-disentangle} in Appendix~\ref{apdx:alg}), with the pseudocode deferred to Appendix~\ref{apdx:alg}. Since $\Tilde{Y}^{(k)} = B Y^{(k)}$, for any $i \in [d]$, we need to learn the $i$-th row of $B^{-1}$ to disentangle $\tilde{Y}_{k}$. As $B^{(k)} =(\Omega^{(k)})^{-1}(I-W^{(k)})^\top B^{-1}$, the row space spanned by $\{B^{(k)}_{j, \cdot}, j \in \overline{\mathrm{ch}}(i)\}$ is comprised of vectors $\breve{B}_{i}$ formed by linear combinations of $\{(B^{-1})_{j, \cdot}, j \in \sur(i)\}$. Let $\hat{y}^{(k)} \coloneqq \breve{B}_{i}^{\top} \Tilde{y}^{(k)}$, which is a linear combination of $y^{(k)}_{j \in \overline\sur (i)}$. Then by definition of $\sim_{\sur}$, we succeed in disentangling $\tilde{y}^{(k)}$ into $\hat{y}^{(k)} \sim_{\sur} y^{(k)}$. These arguments culminate at the following theorem, the proof of which is provided in Appendix~\ref{apdx:proof}. 

\begin{theorem}\label{thm:disentangle}
Let $\hat{y}^{(k)}$ and $\hat{\calG}$ for $k \in [K]$, be the solutions returned by Algorithm~\ref{alg:crl-disentangle} and Algorithm~\ref{alg:crl-pruning}. Under Model~\eqref{eq:dgp} and Assumptions~\ref{assmp:noise} -- \ref{assmp:full_rank}, we have $(\hat{y}^{(k)},\hat{\calG}) \sim_\mathrm{sur} (y^{(k)}, \calG)$ for all $k \in [K]$.
\end{theorem}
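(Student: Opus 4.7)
The plan is to stitch together the guarantees of Theorems~\ref{thm:alg1} and \ref{thm:pruning} with an explicit computation of the subspace that Algorithm~\ref{alg:crl-disentangle} isolates for each latent index $i$. From Theorem~\ref{thm:alg1} I may assume, after folding the inferred topological ordering into $\pi$, that $\tilde{y}^{(k)} = B y^{(k)}$ for some lower triangular $B$ and $\hat{z}^{(k)} = D^{(k)} z^{(k)}$ for some invertible diagonal $D^{(k)}$; from Theorem~\ref{thm:pruning}, $\hat{\calG} = \calG$, so the sets $\pa(i)$, $\ch(i)$, and $\sur(i)$ invoked inside Algorithm~\ref{alg:crl-disentangle} are the true ones. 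Regressing $\hat{z}^{(k)}$ on $\tilde{y}^{(k)}$ and using $(I - W^{(k)\top}) y^{(k)} = \Omega^{(k)} z^{(k)}$ yields, up to the harmless rescaling by $D^{(k)}$, the coefficient matrix $\hat{B}^{(k)} = (\Omega^{(k)})^{-1}(I - W^{(k)})^{\top} B^{-1}$. Its $j$-th row therefore lies in $\spn\{(B^{-1})_{l,\cdot}: l \in \overline{\pa}(j)\}$, with coefficients linear in $w^{(k)}_{\cdot,j}$.

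The core analytic step is to identify, for each fixed $i$, the subspace
\begin{equation*}
V_i \;:=\; \bigcap_{j \in \overline{\ch}(i)} \spn\{\hat{B}^{(k)}_{j,\cdot} : k \in [K]\}.
\end{equation*}
Applying Assumption~\ref{assmp:degeneracy} node-wise (which forces the span across environments to saturate at dimension $|\pa(j)|+1$) together with invertibility of $B^{-1}$, I obtain $\spn\{\hat{B}^{(k)}_{j,\cdot}: k \in [K]\} = \spn\{(B^{-1})_{l,\cdot}: l \in \overline{\pa}(j)\}$. Because the rows of $B^{-1}$ are linearly independent, the intersection of these row spans over $j \in \overline{\ch}(i)$ reduces to the span over the intersection of index sets, so $V_i = \spn\{(B^{-1})_{l,\cdot}: l \in \bigcap_{j \in \overline{\ch}(i)} \overline{\pa}(j)\}$. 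I then prove the graph-theoretic identity $\bigcap_{j \in \overline{\ch}(i)} \overline{\pa}(j) = \overline{\sur}(i)$ by checking both inclusions from the definition of $\sur(i)$ and the acyclicity of $\calG$: any $l \in \pa(i)$ that belongs to $\pa(j)$ for every $j \in \ch(i)$ is exactly an element of $\sur(i)$, while $i$ itself lies in every $\overline{\pa}(j)$ for $j \in \overline{\ch}(i)$.

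With $V_i = \spn\{(B^{-1})_{l,\cdot}: l \in \overline{\sur}(i)\}$ in hand, any vector $\breve{B}_i \in V_i$ having a nonzero component on $(B^{-1})_{i,\cdot}$ (which always exists, since $(B^{-1})_{i,\cdot}$ itself sits in $V_i$ and the rows of $B^{-1}$ are independent) gives $\hat{y}^{(k)}_i := \breve{B}_i^{\top} \tilde{y}^{(k)} = \breve{B}_i^{\top} B y^{(k)}$. Using $(B^{-1})_{l,\cdot} B = e_l^{\top}$, the coefficient vector $\breve{B}_i^{\top} B$ is supported on $\overline{\sur}(i)$. Assembling these vectors as rows of a matrix $\hat{B}$ produces $\hat{y}^{(k)} = \hat{B} y^{(k)}$ with $\hat{B}_{i,l} = 0$ whenever $l \notin \overline{\sur}(i)$. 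Under the topological ordering recovered in Subroutine~1 this support pattern matches the triangular structure required in Definition~\ref{def:sur equivalence}, and combined with $\hat{\calG} = \calG$ we conclude $(\hat{y}^{(k)}, \hat{\calG}) \sim_\sur (y^{(k)}, \calG)$ for every $k \in [K]$.

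The step I expect to be the main obstacle is the exact identification of $V_i$: the upper bound on $V_i$ is immediate from the structural description of $\hat{B}^{(k)}$, but the matching lower bound requires arguing carefully that the rank saturation furnished by Assumption~\ref{assmp:degeneracy} propagates intact through the invertible change of basis by $B^{-1}$, so that intersection of spans and span of intersection coincide. A secondary concern is ensuring Algorithm~\ref{alg:crl-disentangle}'s selection rule picks a $\breve{B}_i$ with non-vanishing $i$-th coefficient, as otherwise the assembled $\hat{B}$ could be singular and would fail the implicit non-degeneracy that makes Definition~\ref{def:sur equivalence} meaningful as a notion of identifiability.
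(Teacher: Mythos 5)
Your proposal is correct and follows essentially the same route as the paper's proof: reduce to $\tilde{y}^{(k)}=By^{(k)}$ and $\hat{\calG}=\calG$ via Theorems~\ref{thm:alg1} and \ref{thm:pruning}, show $\spn\{\hat{B}^{(k)}_{j,\cdot}:k\in[K]\}=\spn\{(B^{-1})_{l,\cdot}:l\in\overline{\pa}(j)\}$ by the rank saturation of Assumption~\ref{assmp:degeneracy}, and intersect over $j\in\overline{\ch}(i)$ to land on $\spn\{(B^{-1})_{l,\cdot}:l\in\overline{\sur}(i)\}$. Your version is in fact slightly more explicit than the paper's on two points it leaves implicit --- that the intersection of spans equals the span over the intersection of index sets because the rows of $B^{-1}$ are linearly independent, and that the selected $\breve{B}_i$ should have a nonzero $i$-th coefficient for the assembled map to be nondegenerate --- but these are refinements of, not departures from, the same argument.
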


\paragraph{Statistical and computational complexities}
All previous results concern whether \texttt{CREATOR} can identify the latent features and the underlying causal DAG. Theorem~\ref{thm:alg1} in Appendix~\ref{app:convergence} further establishes the point-wise consistency of \texttt{CREATOR}, by proving that the latent features and the underlying causal DAG can be asymptotically recovered up to $\sim_{\sur}$ equivalence when $n \rightarrow \infty$. 

An estimate of the computational complexity of \texttt{CREATOR} can be found in Appendix~\ref{app:comp}. In Appendix \ref{apdx:toy algo}, we use a toy example to explain the heuristics of Algorithm~\ref{alg:crl}.

% In the third subroutine, we leverage similar implementation method in Section B.2 in \citet{jin2024learning} and compute orthogonal projection matrix of $\mathcal{V}_j$ denoted as $Q_j$ and compute the singular vector corresponds to the least singular value of $\sum\limits_{j\in\ch(i)}Q_j^\top Q_j$. Therefore, the total computation cost is $n^2d^3$. 
% \end{remark}

% The results of the convergence analysis for Algorithm \ref{alg:crl} are provided in Appendix \ref{apdx:theory}.

% \section{Identifiability analysis}
% \begin{definition}
%     For any $k\in [K]$, we write $(Y^{(k)},\calG)\sim_{\mathrm{sur}}(\hat{Y}^{(k)},\hat{\calG})$ if there exists a permutation $\pi$ on $[d]$ and a diffeomorphism $\Psi:\mathbb{R}^d\to \mathbb{R}^d$ where the $j$th component of $\Psi$, denoted as $\Psi_j(y)$ is a linear function of $y_{\overline{\mathrm{sur}}_{\calG}(j)}$ for any $j\in[d]$ such that the following holds:
%     \begin{itemize}
%         \item For any $i,j\in[d]$, $i\in\pa_{\calG}(j)\iff\pi(i)\in\pa_{\calG}(\pi(j)$.
%         \item $\mathbf{P}_{\pi}Y^{(k)}=\Psi(\hat{Y}^{(k)})$, where $\mathbf{P}_{\pi}$ is a permutation matrix such that $(\mathbf{P}_{\pi})_{ij}=1\iff j=\pi(i)$.
%     \end{itemize}
% \end{definition}

\section{Numerical Experiments}\label{sec:exp}

\subsection{Synthetic Experiments}
\label{subsec:exp}

In this section, we examine the finite sample performance of \texttt{CREATOR} against the method developed in \citet{jin2024learning} using synthetic experiments. As mentioned, several other studies with different settings about the data generation process from our work, notably \citet{varici2024general, varici2024linear}. % , have also made significant contributions to the field of causal representation learning (CRL). In \citet{varici2024linear}, the authors propose a framework in which data is collected from diverse environments, including interventional settings where each node is intervened upon at least once. Meanwhile, \citet{varici2024general} establish theoretical guarantees for identifiability and achievability under the assumption that the data is generated from multiple interventional environments—specifically, environments in which each node is intervened upon in at least two distinct settings. 
Since the setting considered here is more closely related to \citet{jin2024learning}, we will only compare \texttt{CREATOR} with their algorithm \texttt{LiNGCReL} below.

\paragraph{Experimental setup.} 
As in Model~\eqref{eq:dgp}, we first generate the weighted adjacency matrices $W^{(k)}$ and the exogenous noise $Z^{(k)}$. The matrix $W^{(k)}$ is obtained by multiplying the binary adjacency matrix of the causal DAG $\mathcal{G}$ with a random weight matrix from various distributions. The causal DAG $\mathcal{G}$ is constructed based on the Erd\H{o}s-Rényi random graph model. $Z^{(k)}$'s are sampled from non-Gaussian distributions. More details can be found in Appendix~\ref{apdx:add-exp}.

We evaluate \texttt{CREATOR} across various settings to assess its performance. In setting (1), we allow different noise distributions across different environments, without imposing further distributional assumptions on each component within a single environment, corresponding to the more relaxed assumptions considered in this paper. In setting (2), similar to \citet{jin2024learning}, the noise distributions are invariant across environments, but the distributions between different components differ. 

In Appendix~\ref{app:ablation}, we also generate $W^{(k)}$ in the same procedure but multiply them by $\sigma \in\{0.005,0.007,0.01,0.03,0.05,0.07,0.1,0.3,0.5\}$, resulting in data with different levels of causal influences. In this case, the topological ordering inference is subject to substantial error. We demonstrate that inferring topological ordering is crucial for correctly extracting latent features. We use the structural Hamming distance (SHD) (smaller is better) to compare DAGs and a metric called LocR$^{2}$ (larger is better) to quantify the similarity between the learned and true latent features. 
% The definitions of both metrics are deferred to Appendix~\ref{apdx:metrics}.

\paragraph{Metrics}
We use structural Hamming distance (SHD) for evaluating causal DAGs. SHD counts the number of missing, falsely detected or reversed edges. For latent features, we design a metric called $\LocR^2$ closely related to $R^2$: \allowdisplaybreaks
\begin{align*}
\LocR^2 & \coloneqq \mathop{\max}\limits_{P}\frac{1}{dK}\sum\limits_{k=1}^K\sum\limits_{i=1}^d\LocR^2_{i,k},\\ \LocR^2_{i,k}& \coloneqq 1-\frac{\hat{\bbE} (\breve{y}^{(k)}_i-\proj_{\mathrm{span}(y_j^{(k)}:j\in\mathrm{sur}(j))}(\breve{y}^{(k)}_i))^2}{\hat{\mathrm{var}} (\breve{y}_i^{(k)})},
\end{align*}
where $\breve{y}^{(k)} \coloneqq P\hat{y}^{(k)}$ for some permutation matrix $P$, and $\hat{\mathbb{E}}$ and $\hat{\mathrm{var}}$ denote, respectively, the sample mean and sample variance. $\LocR^2_i$ measures the linear correlation between $\breve{y}^{(k)}_i$ and $\{y_j^{(k)},j \in \mathrm{sur}(i)\}$. When $\LocR^2_i$ is close to $1$, $\breve{y}^{(k)}_i$ is close to $\mathrm{span}\{y_j^{(k)}: j\in\sur(i)\}$; when $\LocR^2$ is $1$, $\hat{y}^{(k)}\sim_{\sur}y^{(k)}$. % For SHD, lower is better while for $\LocR^2$, higher is better.

% \paragraph{Metrics}
% We use structural Hamming distance (SHD) for causal DAGs. SHD counts the number of missing, falsely detected or reversed edges. For latent features, we design a metric called $\LocR^2$ closely related to $R^2$:
% \begin{equation*}
% \begin{split}
% \LocR^2 & \coloneqq \mathop{\max}\limits_{P}\frac{1}{dK}\sum\limits_{k=1}^K\sum\limits_{i=1}^d\LocR^2_{i,k}, \quad \LocR^2_{i,k} \coloneqq 1-\frac{\hat{\mathbb{E}} (\breve{y}^{(k)}_i-\proj_{\mathrm{span}(y_j^{(k)}:j\in\mathrm{sur}(j))}(\breve{y}^{(k)}_i))^2}{\hat{\mathrm{var}} (\breve{y}_i^{(k)})},
% \end{split}
% \end{equation*}
% where $\breve{y}^{(k)} \coloneqq P\hat{y}^{(k)}$ for some permutation matrix $P$, and $\hat{\mathbb{E}}$ and $\hat{\mathrm{var}}$ denote, respectively, the sample mean and sample variance. $\LocR^2_i$ measures the linear correlation between $\breve{y}^{(k)}_i$ and $(y_j^{(k)},j \in \mathrm{sur}(i))$. When $\LocR^2_i$ is close to $1$, $\breve{y}^{(k)}_i$ is close to $\mathrm{span}\{y_j^{(k)}: j\in\sur(i)\}$; when $\LocR^2$ is $1$, $\hat{y}^{(k)}\sim_{\sur}y^{(k)}$.

% For SHD, lower is better while for $\LocR^2$, higher is better.

\paragraph{Results}
We randomly sample 50 causal models with latent feature dimension $d=2,3,5,7$ and for each $d$, we sample $K\in\{d, 2d\}$ environments each with sample size $n=1000$. We compare \texttt{CREATOR} and \texttt{LiNGCReL} for different $d$ and $K=d$ and present the accuracy of learning the causal DAG and latent features in Figure~\ref{fig:result-K-d}. We present similar results in the same setting but with $K=2d$ in Appendix~\ref{apdx:add-exp}. From these figures, we observe that \texttt{CREATOR} performs better in $\LocR^2$ and SHD for different dimensions in both settings. 

\begin{figure}[htbp]
  \centering           
  \subfloat[$\LocR^2$ in setting (1) with $K=d$]   
  {      \label{fig:generalR2-K-d}\includegraphics[width=0.8\linewidth]{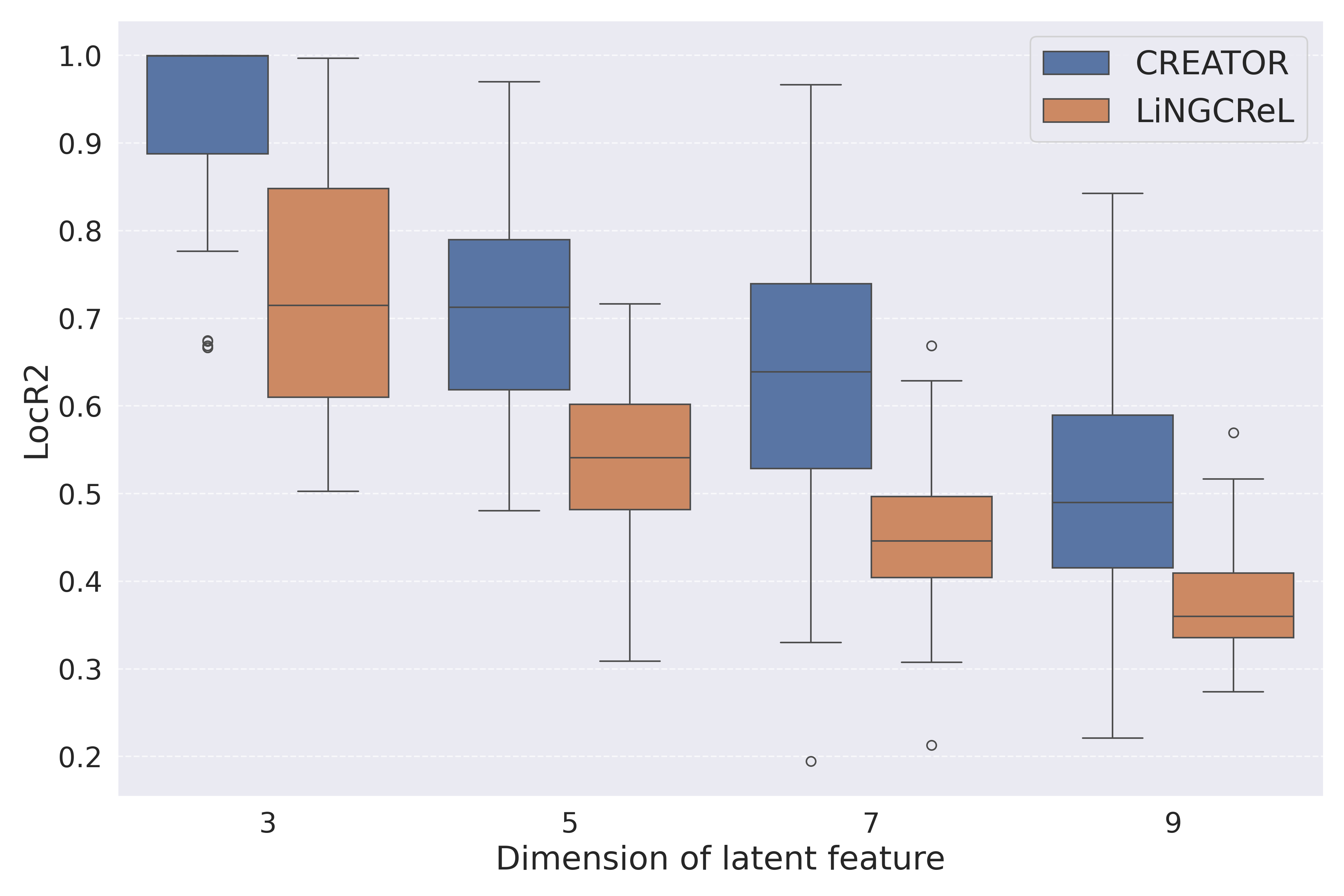}
  }
  % \subfloat[$\LocR^2$ in setting (2) with $K=d$]
  % {      \label{fig:specialR2-K-d}\includegraphics[width=0.4\linewidth]{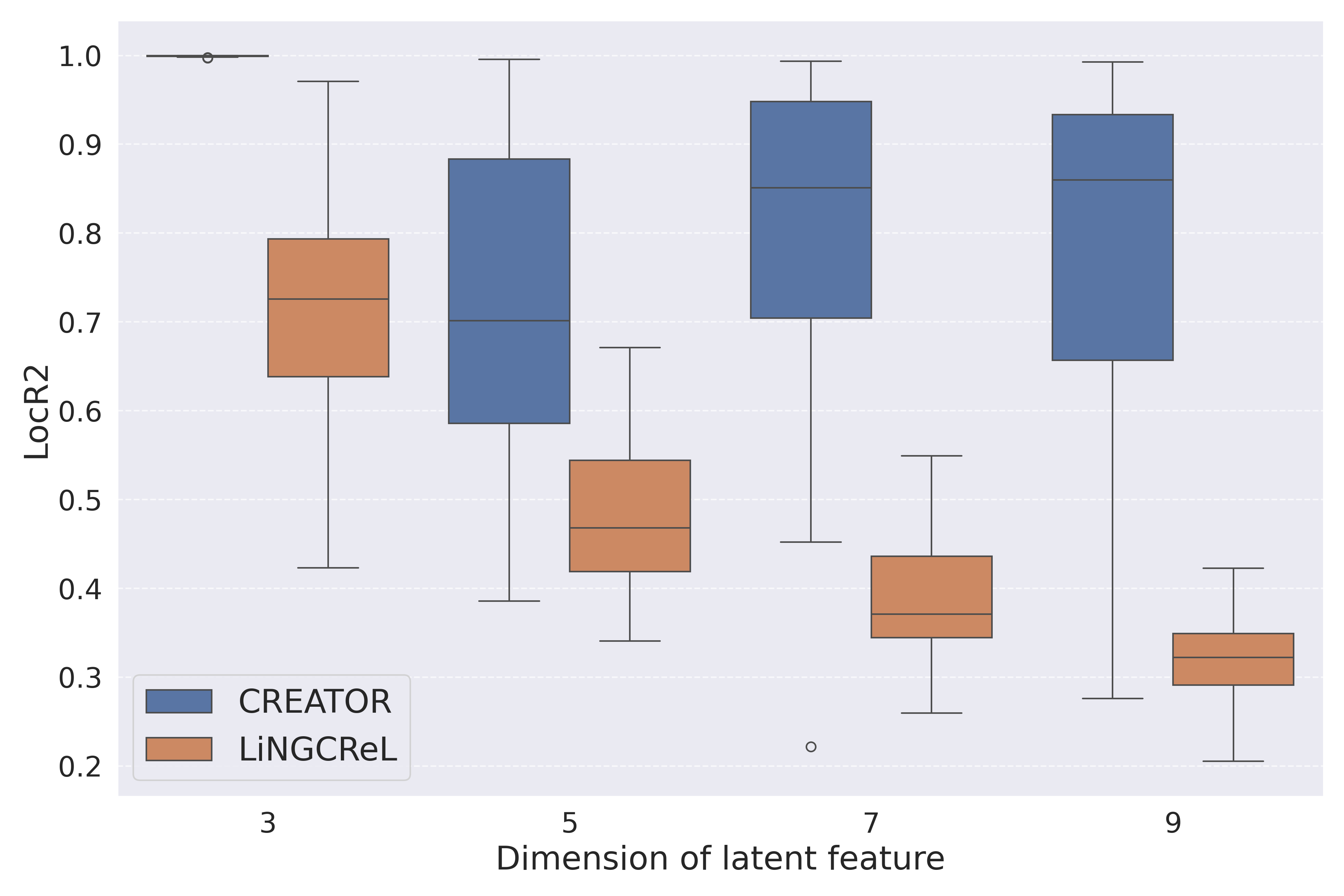}
  % }
  %   \subfloat[weak causal effect case]
  % {      \label{fig:weakR2-K-d}\includegraphics[width=0.3\linewidth]{figures/sythetic_data/R2_fig_w.png}
  % }

    \subfloat[SHD in setting (1) with $K=d$]   
  {      \label{fig:generalSHD-K-d}\includegraphics[width=0.8\linewidth]{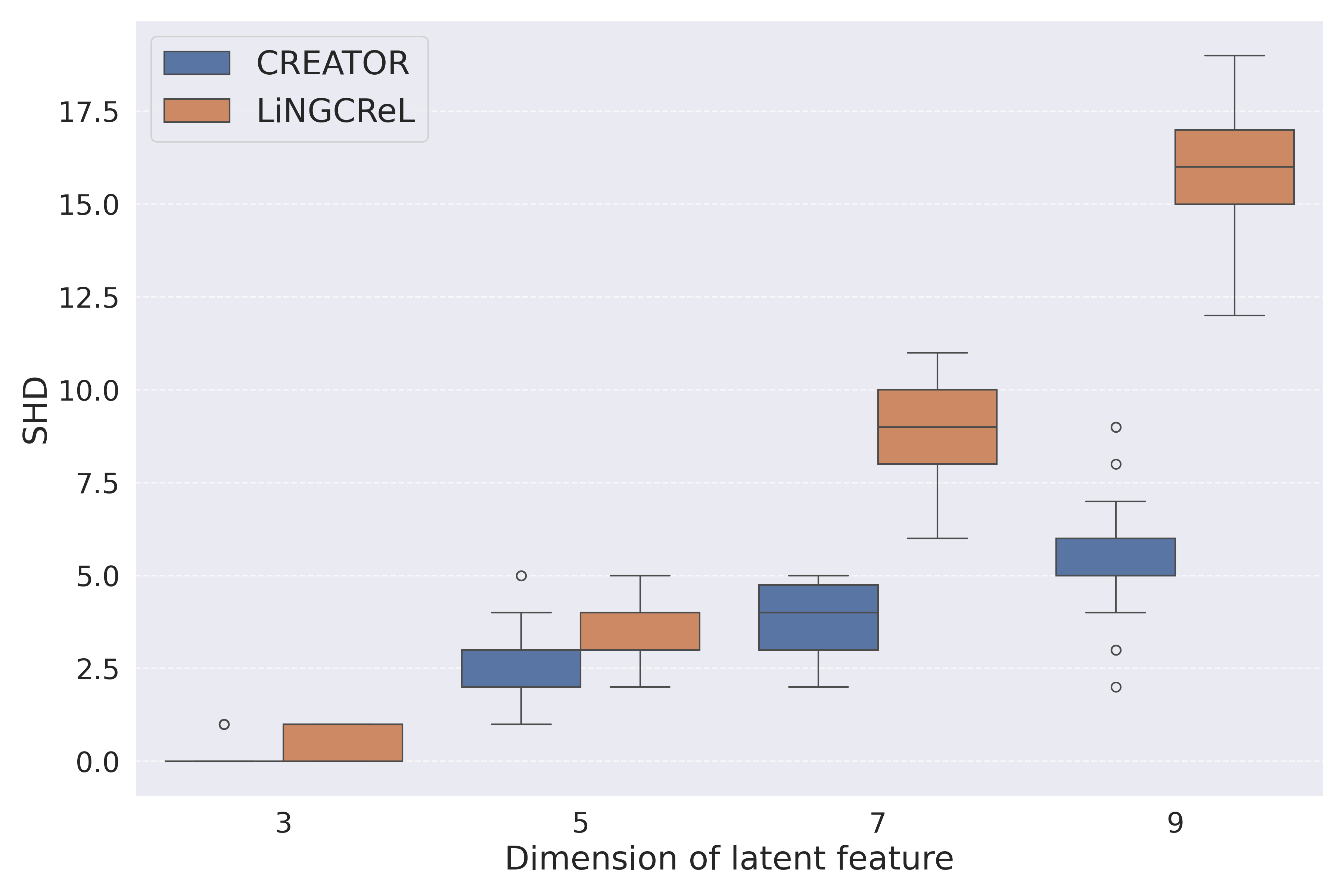}
  }
  % \subfloat[SHD in setting (2) with $K=d$]
  % {      \label{fig:specialSHD-K-d}\includegraphics[width=0.4\linewidth]{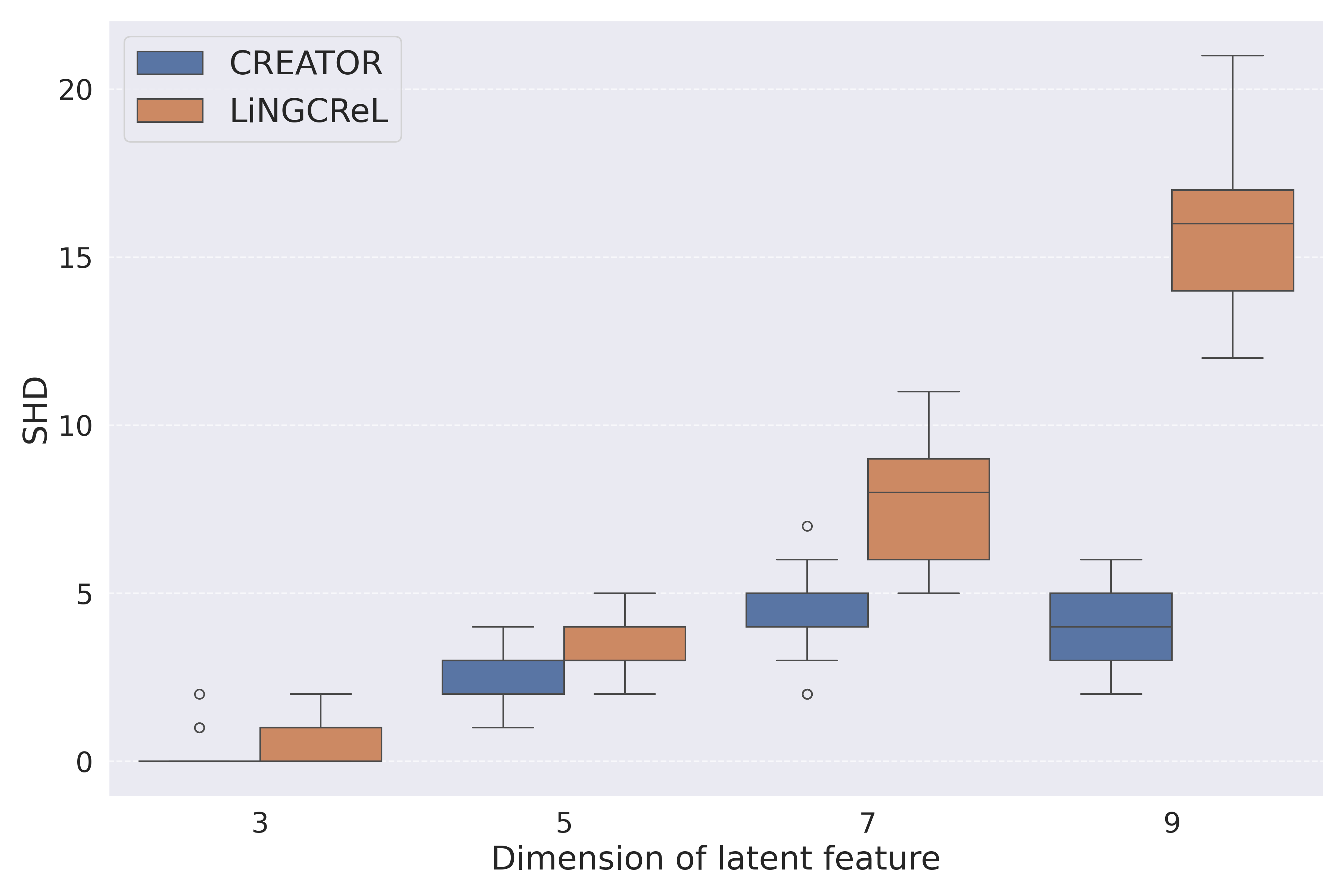}
  % }
  %   \subfloat[weak causal effect case]
  % {      \label{fig:weakSHD-K-d}\includegraphics[width=0.3\linewidth]{figures/sythetic_data/SHD_fig_w.png}
  % }
    \caption{Figures~\ref{fig:generalR2-K-d} and~\ref{fig:generalSHD-K-d} compare the performance of latent feature and causal DAG identification in setting (1)}   
  % \label{fig:dynamic-shd-inter-snapshot-sin}   
  \label{fig:result-K-d}          
\end{figure}

\begin{figure}[htbp]
  \centering           
  % \subfloat[$\LocR^2$ in setting (1) with $K=d$]   
  % {      \label{fig:generalR2-K-d}\includegraphics[width=0.9\linewidth]{figures/sythetic_data/baselines/creator/LocR2_fig_K1d_s01d.png}
  % }
  \subfloat[$\LocR^2$ in setting (2) with $K=d$]
  {      \label{fig:specialR2-K-d}\includegraphics[width=0.8\linewidth]{figures/sythetic_data/baselines/creator/LocR2_fig_o_K1d_s01d.png}
  }
  %   \subfloat[weak causal effect case]
  % {      \label{fig:weakR2-K-d}\includegraphics[width=0.3\linewidth]{figures/sythetic_data/R2_fig_w.png}
  % }

  %   \subfloat[SHD in setting (1) with $K=d$]   
  % {      \label{fig:generalSHD-K-d}\includegraphics[width=0.9\linewidth]{figures/sythetic_data/baselines/creator/SHD_fig_K1d_s01d.png}
  % }
  \subfloat[SHD in setting (2) with $K=d$]
  {      \label{fig:specialSHD-K-d}\includegraphics[width=0.8\linewidth]{figures/sythetic_data/baselines/creator/SHD_fig_o_K1d_s01d.png}
  }
  %   \subfloat[weak causal effect case]
  % {      \label{fig:weakSHD-K-d}\includegraphics[width=0.3\linewidth]{figures/sythetic_data/SHD_fig_w.png}
  % }
    \caption{Figure \ref{fig:specialR2-K-d} and \ref{fig:specialSHD-K-d} present performance in setting (2).}   
  % \label{fig:dynamic-shd-inter-snapshot-sin}            
\end{figure}

\subsection{Latent causal mechanisms of LLMs: A case study}
\label{subsec:real-data}

The working mechanism of LLMs has been an open problem in modern AI that attracts much attention \citep{allen2025physics1}. Several recent works report that high-level interpretable concepts encoded by LLMs might be linearly related \citep{linear-representation-park,linear-representation-llm2,linear-representation-llm3}. Here we adopt this ``linear representation hypothesis'' and use CRL to study latent causal mechanisms of LLMs. Specifically, we generate three ($K = 3$) types of stories with sufficiently heterogeneous styles via GPT-4 \citep{achiam2023gpt} and DeepSeek \citep{liu2024deepseek}, including news ($k = 1$), fairy tales ($k = 2$), and plain texts ($k = 3$). Each story consists of three main parts: background (BG), condition (CD) and ending (ED), which are treated as latent causally interpretable features. By common sense, the causal DAG of these features should contain three edges: $\text{BG} \rightarrow \text{CD}$, $\text{CD} \rightarrow \text{ED}$ and $\text{BG} \rightarrow \text{ED}$. We input the generated stories to various LLMs and extract the last hidden states of the chosen LLMs as the observed data, denoted as $x^{(k)}, k \in [3]$. 

Under the ``linear representation hypothesis'', we assume that each observation $x^{(k)}$ is a linear transformation of the high‐level representations of a story’s background (BG), condition (CD), and ending (ED). We then apply \texttt{CREATOR} and \texttt{LiNGCReL} \citep{jin2024learning} to infer the latent features $\hat{y}^{(k)}$ and reconstruct the causal DAG $\hat{\mathcal{G}}$.  

Because true latent features are unavailable in real data, we query LLMs during story generation to extract keywords for BG, CD, and ED as proxy ground truth (see Appendix~\ref{apdx:real-data} for generation and query details). Since both latent features and DAGs are identifiable only up to equivalence $\sim_{\sur}$, we find the permutation over latent features that minimizes the error of predicting latent features with proxies (by training a neural net) as latent features with meaningful ordering.

We evaluate each method by comparing its estimated DAG with the proxy ground truth; the results are shown in Table~\ref{tab:llm_results}. In this experiment, \texttt{CREATOR} recovers the causal structure more accurately than \texttt{LiNGCReL}, although a broader evaluation is needed before making definitive recommendations. More details are provided in Appendix~\ref{apdx:real-data}. We remark that this case study serves only as a proof-of-concept of \texttt{CREATOR}. Investigating how to apply or modify CRL methods like \texttt{CREATOR} to real-life problems, such as advancing our understanding in complex systems such as LLMs \citep{simon2026there} or genomics \citep{lopez2023learning,  maizels2026gene, park2026causal}, is a natural next step of our work. % Finally, we discuss one important caveat in this case study. Although prior works suggest `linear representation hypothesis', as pointed out by a reviewer, it is important to develop hypothesis testing procedures to test such a hypothesis. Unfortunately, due to the lack of access to the latent variables and the lack of nonlinear CRL methods, developing such a test remains an open problem that deserves further investigation.

\begin{table}[htbp]
\centering
\caption{The results for Inferring latent causal mechanism of LLMs by \texttt{CREATOR} and \texttt{LiNGCReL}. $\checkmark$: correct DAG; $\times$: incorrect DAG. We use {\color{blue}blue symbols} to represent the result of \texttt{CREATOR} and {\color{gray}gray symbols} for \texttt{LiNGCReL}. Analyzed LLMs: Llama Guard \citep{grattafiori2024llama}, \href{https://huggingface.co/meta-llama/Llama-3.2-1B-Instruct}{Llama 3.1 Instruct}, TinyLlama \citep{zhang2024tinyllama}, Phi-3-Mini \citep{abdin2024phi}, GPT-Neo \citep{black2022gpt}, BLOOM \citep{scao2022bloom}.}
\label{tab:llm_results}
\small
\begin{tabular}{lccc}
\toprule
{LLM} 
 & $\text{BG}\rightarrow \text{CD}$ & $\text{CD}\rightarrow \text{ED}$  & $\text{BG}\rightarrow \text{ED}$   \\
\midrule
Llama Guard & \color{blue}{$\checkmark$} \ \color{gray}$\checkmark$&\color{blue}$\checkmark$\ \color{gray}$\checkmark$ & \color{blue}$\checkmark$ \ \color{gray}$\checkmark$ \\
Llama 3.1 Instruct& \color{blue}$\checkmark$ \ \color{gray}$\checkmark$&\color{blue}$\checkmark$ \color{gray}$\times$ &\color{blue} $\checkmark$\ \color{gray}$\times$ \\
TinyLlama    & \color{blue}$\checkmark$ \ \color{gray}$\times$& \color{blue}$\checkmark$ \ \color{gray}$\checkmark$ & \color{blue}$\checkmark$ \ \color{gray}$\checkmark$\\
Phi-3-Mini   & \color{blue}$\times$ \ \color{gray}$\times$& \color{blue}$\checkmark$ \ \color{gray}$\checkmark$&\color{blue}$\checkmark$\ \color{gray}$\times$ \\
GPT-Neo   & \color{blue}$\times$ \ \color{gray}$\times$& \color{blue}$\checkmark$ \ \color{gray}$\checkmark$&\color{blue}$\checkmark$\ \color{gray}$\times$  \\
BLOOM  & \color{blue}$\checkmark$ \ \color{gray}$\times$& \color{blue}$\times$ \ \color{gray}$\times$&\  \color{blue}$\checkmark$\ \color{gray}$\times$\\
\bottomrule
\end{tabular}
\end{table}

\section{Discussion}

In this paper, we present a new linear CRL algorithm, called \texttt{CREATOR}, that achieves the same identification guarantee as in \citet{jin2024learning}, but under weaker assumptions. By conducting numerical experiments, the algorithm demonstrates competitive performance in various settings. We also apply \texttt{CREATOR} to uncover the latent causal mechanism of LLMs in a simplified setup, as a proof-of-concept of its potential value for the AI community. 

There are several promising future directions. First, it is important, yet nontrivial, to extend our algorithm to nonlinear settings \citep{varici2024general}. In our algorithm, an important part of the topological ordering subroutine is to find an $\alpha \in \mathbb{R}^p$ such that $\alpha^\top x^{(k)}$ is an exogenous noise component and finding such $\alpha$ relies on the Darmois--Skitovitch theorem which states that if two linear forms of independent non-Gaussian random variables are independent, then all the variables with non-zero coefficients in both forms must be normally distributed. 
% Therefore, extension to nonlinear structural causal models is difficult. % Besides, extending our algorithm to nonlinear situation is also very important.
One viable approach involves considering a nonlinear additive noise structural causal model (SCM) with a linear mixing function. Under this setup, the latent variable corresponding to the root node can be linearly mapped from the observed data. Following a similar procedure, we could sequentially remove the causal influence of the root node variable on other components of ${y}^{(k)}$ and iteratively repeat this process.% We have mainly focused on whether our proposed algorithm can identify latent features and the causal DAGs given infinitely amount of data. % A more detailed statistical analysis is warranted in future work. 

In \citet{rajendran2024causal}, the problem of recovering latent causal structures is relaxed to recovering latent concepts, which leads to the possibility of requiring much fewer environments, as datasets can be quite costly to collect in many applications. It could be an interesting direction to explore and understand to what extent our algorithm \texttt{CREATOR} can also be used to recover ``latent concepts'' or even ``causal abstraction'' \citep{geiger2024finding, geiger2025causal}. 

We believe that evaluating the performance of CRL algorithms is an underexplored research area in the CRL literature. In real life applications, due to the unavailability of ground truth and the computational difficulty of evaluation even if ground truth is given, a reasonable approach is to evaluate the performance of downstream tasks of CRL \citep{peters2015structural}. For example, an important application of CRL is to leverage the underlying causal graph structure and intervene latent features to achieve certain desired changes in the observed data (such as natural images, sounds, and etc.) \citep{yang2021causalvae}. Whether the desired changes manifest after intervening the latent features can also serve as an indirect evidence of the success of CRL algorithms.
% Another future direction is to investigate how to drop the non-Gaussian exogenous error assumption for CRL methods like \texttt{CREATOR} (see Appendix \ref{apdx:gauss} for numerical experiments on the performance of \texttt{CREATOR} when the noise distribution is closer to Gaussian).

% Besides, extending our algorithm to nonlinear situation is also very important and thus a discussion is provided in Appendix \ref{app:future}.

Finally, it is also of interest to (1) develop CRL algorithms that can handle discrete, dynamical, and multimodal data, as considered in several recent work on CRL \citep{zhang2024causal, song2024causal, sun2025causal, zhang2026discrete}, (2) design statistical tests to evaluate CRL models \citep{vanderweele2022statistical}, and (3) combine more sophisticated identification and estimation strategies from statistics and signal processing and better delineate the relationship between CRL, ICA, and causal identification strategies in the presence of latent confounding \citep{koopmans1949identification, kruskal1976more, eriksson2004identifiability, allman2009identifiability, chandrasekaran2012latent, chen2023learninggan, bakshi2023tensor, tchetgen2024introduction, zhou2024promises, zhou2024causal, tang2026synthetic, tramontano2026parameter, gu2026counterfactual}.

\section*{Acknowledgements}

The authors thank \href{https://jkjin.com/}{Jikai Jin} and \href{https://vsyrgkanis.com/}{Vasilis Syrgkanis} for kindly sharing their code to implement \texttt{LiNGCReL} and sincerely express their gratitude to three anonymous reviewers for very helpful comments that significantly improved our article and to \href{https://sites.google.com/site/linbowangpku/home}{Linbo Wang}, \href{https://scholar.google.com/citations?user=6pwvcLMAAAAJ&hl=zh-CN}{Jiyuan Yang}, \href{https://xiaoqizheng.github.io/}{Xiaoqi Zheng}, \href{https://shiyangm.github.io/}{Shiyang Ma}, \href{https://scholar.google.com/citations?user=JwXm75kAAAAJ&hl=en&oi=sra}{Sheng'en Shawn Hu} and \href{https://orcid.org/0000-0001-7014-741X}{Kai Hou} for insightful discussions. This research is supported by NSFC Grant No.12471274 and Science and Technology Talent and Platform Program of Yunnan Province Grant No.202605AF35007.

% \textbf{Do not} include acknowledgements in the initial version of the paper
% submitted for blind review.

% If a paper is accepted, the final camera-ready version can (and usually should)
% include acknowledgements.  Such acknowledgements should be placed at the end of
% the section, in an unnumbered section that does not count towards the paper
% page limit. Typically, this will include thanks to reviewers who gave useful
% comments, to colleagues who contributed to the ideas, and to funding agencies
% and corporate sponsors that provided financial support.

\section*{Impact Statement}

This paper presents work whose goal is to advance the field of machine learning. There are many potential societal consequences of our work, none which we feel must be specifically highlighted here.

% Authors are \textbf{required} to include a statement of the potential broader
% impact of their work, including its ethical aspects and future societal
% consequences. This statement should be in an unnumbered section at the end of
% the paper (co-located with Acknowledgements -- the two may appear in either
% order, but both must be before References), and does not count toward the paper
% page limit. In many cases, where the ethical impacts and expected societal
% implications are those that are well established when advancing the field of
% Machine Learning, substantial discussion is not required, and a simple
% statement such as the following will suffice:

% ``This paper presents work whose goal is to advance the field of Machine
% Learning. There are many potential societal consequences of our work, none
% which we feel must be specifically highlighted here.''

% The above statement can be used verbatim in such cases, but we encourage
% authors to think about whether there is content which does warrant further
% discussion, as this statement will be apparent if the paper is later flagged
% for ethics review.

% % In the unusual situation where you want a paper to appear in the
% % references without citing it in the main text, use \nocite
% \nocite{langley00}
% \fi

\putbib[ref]
\end{bibunit}

%%%%%%%%%%%%%%%%%%%%%%%%%%%%%%%%%%%%%%%%%%%%%%%%%%%%%%%%%%%%%%%%%%%%%%%%%%%%%%%
%%%%%%%%%%%%%%%%%%%%%%%%%%%%%%%%%%%%%%%%%%%%%%%%%%%%%%%%%%%%%%%%%%%%%%%%%%%%%%%
% APPENDIX
%%%%%%%%%%%%%%%%%%%%%%%%%%%%%%%%%%%%%%%%%%%%%%%%%%%%%%%%%%%%%%%%%%%%%%%%%%%%%%%
%%%%%%%%%%%%%%%%%%%%%%%%%%%%%%%%%%%%%%%%%%%%%%%%%%%%%%%%%%%%%%%%%%%%%%%%%%%%%%%
\newpage
\appendix
\onecolumn

\begin{bibunit}[icml2026]

\section{Further Clarification of Our Notation}\label{apdx:notation}

In this section, we further explain some notations in this paper.

In this paper, the symbol $\Pi (\cdot \mid \cdot)$ is often used. For any two $d$-dimensional random variables $\xi$ and $\eta$, $\Pi (\xi \mid \eta)$ is the linear projection of $\xi$ onto the space spanned by $\eta$ under Model~\eqref{eq:dgp}. To be concrete, $\Pi (\xi | \eta) = B^\top \eta \equiv \bbE [\xi \eta^\top] (\bbE [\eta \eta^\top])^{-1} \eta$, where the true population regression coefficient term corresponds to the $d \times d$-dimensional matrix $B \coloneqq (\bbE [\eta \eta^\top])^{-1} \bbE [\eta \xi^\top]$. We denote their $n \times d$ sample matrices as $\underline\xi\coloneqq (\xi_1,\xi_2,\dots,\xi_n)^\top$ and $\underline\eta\coloneqq (\eta_1,\eta_2,\dots,\eta_n)^\top$, where $n$ is the sample size. In the actual implementation of our algorithm \texttt{CREATOR}, we estimate $\Pi (\xi \mid \eta)$ by $(\underline{\xi}^{\top} \underline{\eta}) (\underline{\eta}^{\top} \underline{\eta})^{-1} \eta$.

Given any matrix $A \in \mathbb{R}^{n_1 \times n_2}$ and any indices $i_1, i_2 \in [n_1]$, $j_1, j_2 \in [n_2]$ satisfying $i_1 \leq i_2 - 1$ and $j_1 \leq j_2 - 1$, and index sets $S_1\subseteq[n_1]$ and $S_2\subseteq[n_2]$, we adopt the following submatrix and subvector notations:

\begin{itemize}
    \item $A_{i_1,j_1:j_2}\coloneqq (A_{i_1,j_1},\dots,A_{i_1,j_2})$ denotes the subvector of row $A_{i_1,\cdot}$ ranging from the $j_1$-th to $j_2$-th components;
    \item $A_{i_1:i_2,j_1}\coloneqq (A_{i_1,j_1},\dots,A_{i_2,j_1})^\top$ denotes the subvector of column $A_{\cdot,j_1}$ ranging from the $i_1$-th to $i_2$-th components;
    \item $A_{i_1:i_2,j_1:j_2}$ denotes the submatrix of $A$ formed by columns $A_{\cdot,j_1}$ through $A_{\cdot,j_2}$ and rows $A_{i_1,\cdot}$ through $A_{i_2,\cdot}$, specifically $(A_{i_1:i_2,j_1}, A_{i_1:i_2,j_1+1}, \dots, A_{i_1:i_2,j_2})$;
    \item $A_{S_1,j_1}$ denotes the subvector of $A_{\cdot,j_1}$ corresponding to row indices in $S_1$;
    \item $A_{i_1,S_2}$ denotes the subvector of $A_{i_1,\cdot}$ corresponding to column indices in $S_2$;
    \item $A_{S_1,S_2}$ denotes the submatrix of $A$ with row indices in $S_1$ and column indices in $S_2$.
\end{itemize} 

For any integer $k$, $\mathbf{1}_k$ denotes a vector in $\mathbb{R}^k$ with all entries being 1. Finally, we remark that throughout the paper we have used ``estimated'' frequently. Here ``estimated'' should be mainly interpreted as the output of our proposed algorithm from infinitely amount of observed data (or equivalently the observed-data distribution), as we have mostly focused on the identifiability issue. In synthetic experiments or real data analysis, we instead recover the latent features and causal DAG from finite samples, which truly corresponds to the usual meaning of ``estimated'' in statistics.

Recognizing that our core methodology builds upon linear subspace decomposition, we let $\mathcal{V}_{i}^{(k)}$ denote the linear subspace generated by the first $i$ variables. Specifically, for each $k \in [K]$, we define:
$\mathcal{V}_{i}^{(k)} \coloneqq \text{span}\{z_1^{(k)}, \dots, z_i^{(k)}\}$ for $i \in \{1, \dots, d-1\}$, with the base case $\mathcal{V}_{0}^{(k)} \coloneqq \{0\}$ representing the trivial zero subspace. 

Let $\Pi_{\mathcal{V}}^{\perp}$ denote the projection operator onto the orthogonal complement of a subspace $\mathcal{V}$. Thus, for all $k \in [K]$ and $i \in [d]$, the orthogonalized components can be cleanly and compactly expressed as:
$\Pi^{\perp}_i x^{(k)} \coloneqq \Pi_{\mathcal{V}_{i-1}^{(k)}}^{\perp} (x^{(k)}) $
and $\Pi^{\perp}_i z^{(k)} \coloneqq \Pi_{\mathcal{V}_{i-1}^{(k)}}^{\perp} (z^{(k)})$.

Since projecting onto the orthogonal complement of $\mathcal{V}_0^{(k)} = \{0\}$ is simply the identity map, this unified definition naturally yields $\Pi^{\perp}_1 x^{(k)} = x^{(k)}$ and $\Pi^{\perp}_1 z^{(k)} = z^{(k)}$ without requiring separate piecewise conditions.

\section{Proof of Theorem~\ref{thm:identifiability}}
\label{app:identifiability}

We commence the proof by stating the following lemmas.

\begin{lemma}\label{lemma:not propto}
For $i, j \in [d]$ with $i \neq j$, there does not exist $k_1, k_2 \in [K]$ with $k_1 \neq k_2$, such that $(I-W^{(k_1)})_{\cdot,i} \propto (I-W^{(k_2)})_{\cdot,j}$.
\end{lemma}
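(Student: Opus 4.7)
The plan is to argue by contradiction using nothing more than the support pattern of the columns of $I - W^{(k)}$, together with the acyclicity of $\calG$. The key observation is that, under Model~\eqref{eq:dgp}, $w^{(k)}_{p,i} \neq 0$ if and only if $p \in \pa(i)$, so the $i$-th column of $I - W^{(k)}$ has a $1$ in position $i$, a nonzero entry $-w^{(k)}_{p,i}$ in each position $p \in \pa(i)$, and zero entries elsewhere. Consequently, the support of $(I - W^{(k)})_{\cdot,i}$ is exactly $\overline{\pa}(i)$, and crucially this support set does not depend on the environment $k$.

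Suppose, for contradiction, that there exist $i \neq j$ and $k_1 \neq k_2$ such that $(I - W^{(k_1)})_{\cdot,i} \propto (I - W^{(k_2)})_{\cdot,j}$. Since proportional vectors share the same support, this would force $\overline{\pa}(i) = \overline{\pa}(j)$. I would then read off two immediate consequences from this set identity:
\begin{itemize}
\item Because $i \in \overline{\pa}(i) = \overline{\pa}(j)$ and $i \neq j$, we must have $i \in \pa(j)$.
\item Symmetrically, because $j \in \overline{\pa}(j) = \overline{\pa}(i)$ and $j \neq i$, we must have $j \in \pa(i)$.
\end{itemize}
These two conditions together produce a two-cycle $i \to j \to i$ in $\calG$, which contradicts the standing assumption that $\calG$ is a DAG. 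This yields the claim.

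I do not anticipate any genuine obstacle here; the lemma is essentially combinatorial and does not invoke Assumptions~\ref{assmp:noise}--\ref{assmp:full_rank}. The only minor subtlety is to emphasize that $w^{(k)}_{i,i} = 0$ for all $k$ (so the diagonal entry of $(I - W^{(k)})_{\cdot,i}$ is always $1$, never accidentally cancelled) and that the sparsity pattern of $W^{(k)}$ is environment-invariant, being dictated by the shared DAG $\calG$. These are built into the model definition in \eqref{eq:dgp}, so no additional work is required.
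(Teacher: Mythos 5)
Your proof is correct and is essentially the paper's argument: both hinge on the diagonal entries of $I-W^{(k)}$ being $1$, which forces the proportionality constant to be nonzero and hence forces $(I-W^{(k_2)})_{i,j}$ and $(I-W^{(k_1)})_{j,i}$ to be nonzero, yielding the two-cycle $i \in \pa(j)$ and $j \in \pa(i)$ that contradicts acyclicity. Phrasing the step as equality of supports $\overline{\pa}(i)=\overline{\pa}(j)$ rather than inspecting the two entries directly is only a cosmetic difference.
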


\begin{proof}
For any two nodes $i$ and $j$, $(I-W^{(k_1)})_{i,i} = (I-W^{(k_2)})_{j,j} = 1$ because a DAG $\calG$ cannot have self-cycles. Suppose that on the contrary, $(I-W^{(k_1)})_{\cdot,i} \propto (I-W^{(k_2)})_{\cdot,j}$. Recall that this notation means that $\exists \, \theta \in \bbR$ such that $(I-W^{(k_1)})_{\cdot,i} =\theta (I-W^{(k_2)})_{\cdot,j}$. Since there exist $(I-W^{(k_1)})_{i, i} \neq 0$ and $(I-W^{(k_2)})_{j, j} \neq 0$, the constant $\theta \not\equiv 0$, implying that $(I-W^{(k_1)})_{j,i}$ and $(I-W^{(k_2)})_{i,j}$ must be nonzero as well. It in turn follows that $j \in \pa (i)$ and $i \in \pa (j)$, which violates the acyclicity of $\calG$, a contradiction. 
\end{proof}

\begin{lemma}
\label{lem:permutation}
For any integer $d$, any $d$-dimensional diagonal matrix $\Omega$ with nonzero diagonal entries, and any permutation matrix $P\in \bbR^{d \times d}$, we have $P\Omega=\Omega^PP$ where $\Omega^P$ denotes the diagonal matrix whose diagonal entries are permuted from the diagonal entries of $\Omega$ by the permutation matrix $P$.
\end{lemma}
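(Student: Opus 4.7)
The plan is to verify the identity by direct entry-wise computation, exploiting both the sparsity of the permutation matrix and the diagonal structure of $\Omega$. Let $\pi$ denote the permutation corresponding to $P$, so that $P_{i,j} = 1$ exactly when $j = \pi^{-1}(i)$ and is zero otherwise, and write $\Omega = \mathrm{diag}(\omega_1, \ldots, \omega_d)$.

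First I would compute the $(i,j)$-entry of both sides. Since $\Omega$ is diagonal, $(P\Omega)_{i,j} = \sum_k P_{i,k}\Omega_{k,j} = P_{i,j}\omega_j$. Likewise, since $\Omega^P$ is diagonal, $(\Omega^P P)_{i,j} = (\Omega^P)_{i,i}\, P_{i,j}$. Matching these two expressions forces $(\Omega^P)_{i,i} = \omega_{\pi^{-1}(i)}$ on every index $i$ for which some $P_{i,j}$ is nonzero --- which is every $i$, since $P$ has exactly one nonzero entry in each row. This coincides with the definition of $\Omega^P$ as the diagonal matrix whose entries are obtained by permuting the diagonal entries of $\Omega$ according to $P$, yielding $P\Omega = \Omega^P P$.

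A slightly cleaner alternative is to right-multiply by $P^\top$ (which equals $P^{-1}$ since $P$ is orthogonal) and to observe that conjugation by a permutation matrix sends diagonal matrices to diagonal matrices while permuting the diagonal entries; then $\Omega^P = P\Omega P^\top$ by definition, and the identity $P\Omega = \Omega^P P$ is immediate. The nonzero-entry hypothesis on $\Omega$ is not actually needed for the equality itself, but it matches the setting in which the lemma will later be invoked (to move permutation matrices past environment-specific noise scalings $\Omega^{(k)}$ when aligning recovered exogenous noise components across environments).

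I do not anticipate any genuine obstacle: the statement is a standard bookkeeping fact about permutation matrices. The only real care required is in fixing the convention for how $\pi$ acts on the indices of $\Omega^P$ so that the equality holds without ambiguity, which the entry-wise computation above pins down unambiguously as $(\Omega^P)_{i,i} = \omega_{\pi^{-1}(i)}$.
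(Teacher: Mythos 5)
Your proposal is correct, and your ``cleaner alternative'' ($\Omega^P = P\Omega P^\top$ by definition, hence $P\Omega = \Omega^P P$ immediately) is exactly the paper's one-line proof. The preceding entry-wise computation is a harmless elaboration that additionally pins down the indexing convention, but it is not a different argument.
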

\begin{proof}
By definition, $P \Omega P^\top=\Omega^P$, so we immediately have $\Omega^P P=P\Omega$.
\end{proof}

\begin{lemma}
\label{lem:weight}
    For any non-root variables with index $i$, $\forall j\in\mathrm{pa}(i)$, we have $W^{(k)}_{j,i}$ does not equal to a constant value for all $k\in[K]$.
\end{lemma}
\begin{proof}
    Suppose $\exists i_0\in[d]$ with parent node $j_0$ such that $W^{(k)}_{j_0,i_0}\equiv W^{(1)}_{j_0,i_0}$ for all $k\in[K]$, matrix $[(I-W^{(k)})_{\overline{\mathrm{pa}}(i_0),i_0},k\in[K]]$ is not full row rank as there are two parallel row vectors $e_{i_0}$ and $W_{j_0,i_0}^{(k)}e_{j_0}$, which is impossible for $i_0\neq j_0$.
\end{proof}
% In the following, we show that under our assumptions, $K=d$ non-degenerate environments suffices for the model to be identifiable up to $\sim_\mathrm{sur}$.
% \begin{theorem}\label{thm:identifiability}
%     (Identifiability) Suppose that we have access to observations with distribution denoted as $\mathbb{P}_x^{(k)}$ from the data generation process \eqref{eq:dgp} across multiple environments $\mathfrak{E}=\{E^{(k)}:k\in [K]\}$. Let $(\hat{y}^{(k)},\hat{\calG})$ be any candidate solution with the hypothetical data generating process
%     \begin{equation}
%         \hat{y}^{(k)}=(\hat{W}^{(k)})^\top\hat{Y}^{(k)}+z^{(k)},\ x^{(k)}=\hat{H}\hat{y}^{(k)}\ 
%     \end{equation}
%     in environment $E_k$, where $\hat{H}$ has full column rank such that

%     (\romannumeral1) the observation distribution that this hypothetical model generates in $E_k$ is exactly $\mathbb{P}_x^{(k)}$.

%     (\romannumeral2) all the environments share the same causal graph $\calG$.

%     % (\romannumeral3) each element in $\mu_k$ is different.

%     (\romannumeral3) $\{(I-W^{(k)})\}_{k=1}^K$ and $\{(I-\hat{W}^{(k)})\}_{k=1}^K$ are non-degenerate in the sense of Assumption \ref{assmp:degeneracy}.

%     (\romannumeral4) the noise term $z^{(k)}$ and $\hat{z}^{(k)}$ satisfy Assumption \ref{assmp:noise}.

%     Then we must have $(y^{(k)},\calG)\sim_{\mathrm{sur}}(\hat{y}^{(k)},\hat{\calG})$.
% \end{theorem}
Armed with Lemma~\ref{lemma:not propto} and Lemma~\ref{lem:permutation}, we prove Theorem~\ref{thm:identifiability} below.

\begin{proof}[Proof of Theorem~\ref{thm:identifiability}]
For simplicity and clarity, we first consider the case of $p = d$, and defer the generalization to the case $p \geq d + 1$ to the end of the proof. Let $(\hat{z}^{(k)},\hat{\Omega}^{(k)},\hat{W}^{(k)},\hat{y}^{(k)},\hat{H})$ be any candidate solution that also satisfies the data generating model \eqref{eq:dgp}. By classical results on ICA \citep{comon1994independent, hyvarinen2000independent}, given the observed data $x^{(k)}$, generated by invertible linear mapping from non-Gaussian exogenous variables $z^{(k)}$ with independent components, $z^{(k)}$ could be recovered up to permutation and scaling transformations. Therefore, there exists a permutation matrix $P^{(k)}$ such that $\hat{z}^{(k)} = \Gamma^{(k)} P^{(k)} z^{(k)}$ for any $k \in [K]$, where $\Gamma^{(k)}$ is a nonsingular diagonal matrix. Together with \eqref{eq:dgp}, we have
\begin{align*}
& \ H (I-W^{(k)\top})^{-1} \Omega^{(k)} = \hat{H} (I-\hat{W}^{(k)\top})^{-1}  \hat{\Omega}^{(k)} \Gamma^{(k)}P^{(k)} \\
\Rightarrow & \ \hat{H}^{-1} H (I-W^{(k)\top})^{-1} = (I-\hat{W}^{(k)\top})^{-1}  \hat{\Omega}^{(k)} \Gamma^{(k)} P^{(k)} (\Omega^{(k)})^{-1} \\
\Rightarrow & \ (I - W^{(k) \top}) H^{-1} \hat{H} = \Omega^{(k)} (\Gamma^{(k)} P^{(k)})^{-1} (\hat{\Omega}^{(k)})^{-1} (I - \hat{W}^{(k) \top}),
% \Rightarrow & \ (I - W^{(k) \top}) T = \Omega^{\prime (k)} (I-\hat{W}^{(k)\top}),
\end{align*}
By Lemma \ref{lem:permutation}, we have $\Omega^{(k)}P^{(k)\top}=P^{(k)\top}{\Omega}^{(k)\dag}$, where use $\Omega^{(k)\dag}$ to denote $({\Omega}^{(k)})^{P^{(k)}}$ to avoid notation clutter.
By letting $T \coloneqq H^{-1}\hat{H}$ and $\hat{\Omega}^{\prime(k)} \coloneqq \Omega^{(k)\dag} (\Gamma^{(k)})^{-1} (\hat{\Omega}^{(k)})^{-1}$, we finally obtain that
\begin{equation}
    (I - W^{(k) \top}) T = P^{(k)\top}\Omega^{\prime (k)} (I-\hat{W}^{(k)\top}).
\end{equation}

 Then we have, also by Lemma \ref{lem:permutation}, $\forall \, k \in [K]$, $(I-W^{(k)\top})T \equiv \Omega^{\prime(k)}P^{(k)\top}(I-\hat{W}^{(k)\top})$. Without loss of generality, let $I - W^{(k)\top}$ be a lower triangular matrix. Hence, the first row of $(I-W^{(k)\top})T$ reduces to $T_{1, \cdot} \equiv \Omega^{\prime(k)}_{1,1}P^{(k)\top}_{1,\cdot} (I-\hat{W}^{(k)\top})$. As $P^{(k)}$ is a permutation matrix, denote the index of the nonzero entry of $P^{(k)}_{1,\cdot}$ as $p_{1,k}$, we obtain the identity $T_{1, \cdot} = \Omega_{1, 1}^{\prime(k)}P^{(k)}_{1,\cdot} (I -\hat{W}^{(k)\top}) = \Omega_{1, 1}^{\prime(k)}(I - \hat{W}^{(k)\top})_{p_{1,k}, \cdot}$. We then conclude that the indices of nonzero entries of $T_{1, \cdot}$ correspond to $\pa_{\hat{\calG}} (p_{1,k})$. Since $T_{1, \cdot}$ does not vary with $k$, by Lemma \ref{lemma:not propto}, we can conclude that $p_{1,1}=p_{1,2}=\cdots=p_{1,K}$. As $\Omega^{\prime(k)}_{1,1}(I-\hat{W}^{(k)\top})_{p_{1,k},p_{1,k}}=T_{1,p_{1,k}}$, we also have $\Omega^{\prime(1)}_{1,1}=\Omega^{\prime(2)}_{1,1}=\cdots=\Omega^{\prime(K)}_{1,1}$. Since $\Omega^{\prime(k)}_{1,1}(I-\hat{W}^{(k)\top})_{p_{1,k},\cdot}=\Omega^{\prime(k)}_{1,1}(e_{p_{1,k}}-\sum\limits_{j\in\mathrm{pa}(p_{1,k})}\hat{W}^{(k)}_{p_{1,k},j}e_j)=T_{1,\cdot}$, the nonzero index set of $T_{1,\cdot}$ is $\pa(p_{1,k})$. Therefore, $\pa(p_{1,k})$ is the same across environments. By Assumption \ref{assmp:degeneracy} and Lemma \ref{lem:weight}, we have $(I-\hat{W}^{(k)\top})_{p_{1,k},\cdot}=e_{p_{1,k}}$, where $e_{p_{1,k}}$ is a vector with the $p_{1,k}$-th entry being 1 and other entries being 0. Without loss of generality, we suppose $p_{1,k} \equiv 1$ for all $k$. If this does not hold, we only need to swap node $p_{1, k}$ and node $1$ in $\hat{\calG}$ for $k \in [K]$.

For the second row of $(I-W^{(k)\top})T$, we only need to consider the subvector from the second entry onward. Then we have $T_{2,2:d}=(\Omega^{\prime(k)}_{2,2}P^{(k)}_{2,\cdot}(I-\hat{W}^{(k)\top}))_{2:d}$. Denote the index of the nonzero entry of $P^{(k)}_{2,\cdot}$ as $p_{2,k}$ and we obtain $T_{2,2:d}=\Omega^{\prime(k)}_{2,2}(I-\hat{W}^{(k)\top}))_{p_{2,k},2:d}$. With a similar argument, we have $\Omega^{\prime(1)}_{2,2}=\Omega^{\prime(2)}_{2,2}=\cdots=\Omega^{\prime(K)}_{2,2}$. We now consider a subgraph $\hat{\calG}^\prime$ of $\hat{\calG}$ with the first node removed and denote its corresponding weighted adjacency matrix as $\hat{W}^{\prime(k)} \coloneqq \hat{W}^{(k)}_{2:d, 2:d}$. Then we have $T_{2,2:d}=\Omega^{\prime(k)}_{2,2}(I_{p_{2,k}}-\hat{W}^{\prime(k)}_{p_{2,k}})$. By Lemma \ref{lemma:not propto}, we again have $p_{2,1}=p_{2,2}=\cdots=p_{2,K}$. Again, without loss of generality, we can take $p_{2,k} \equiv 2$ for all $k$. By repeatedly applying the above arguments, we can show that $\calG$ and  $\hat{\calG}$ are isomorphic and $P^{(1)} = P^{(2)} = \cdots = P^{(K)}$.

Up to this point, we are only left to prove $\forall k\in[K]$, $\hat{y}^{(k)}\sim_\sur y^{(k)}$. Now we finish the remaining part of the proof\footnote{This part of the proof adapts the proof of Theorem 1 in \cite{jin2024learning} to our context.}. For simplicity, we suppose that $P=I$ and $\Omega^{\prime(k)}=I$ with no loss of generality, as our identifiability analysis is up to permutation and scaling transformations. Under this simplification, we have $I - W^{(k)\top} = (I - \hat{W}^{(k)\top}) T^{-1}$ and $\hat{y}^{(k)} = T^{-1} y^{(k)}$ for all $k \in [K]$.
For any two nodes $i, j \in [d]$ such that $i \notin \overline{\pa} (j)$, we have $\forall \, k \in [K]$, $(I-W^{(k) \top})_{j,i}=0$ and $(I-\hat{W}^{(k) \top})_{j,i}=0$. From the previous identity $I - W^{(k) \top} = (I - \hat{W}^{(k) \top}) T^{-1}$, we have $\sum\limits_{l \in \overline{\pa} (j)}(I-\hat{W}^{(k)})_{l,j} (T^{-1})_{l,i}=0$. By Assumption~\ref{assmp:degeneracy}, $\forall \, l \in \overline{\pa} (j)$, $(T^{-1})_{l,i}=0$, implying that for any two different nodes $l,i\in[d]$, only if $\overline{\ch} (l) \subseteq \overline{\ch} (i)$, $(T^{-1})_{l,i} \neq 0$. Therefore, for any node $l \in [d]$, $\hat{y}_l^{(k)} = \sum\limits_{i \in [d]} (T^{-1})_{l,i} y^{(k)}_i = \sum\limits_{i\in \overline{\sur} (l)} (T^{-1})_{l,i} y^{(k)}_i$, following that $(y^{(k)}, \calG) \sim_{\sur} (\hat{y}^{(k)}, \hat{\calG})$. 

For the scenario of $p \geq d+1$, we can simply consider the first $d$ dimension of the observed data $x^{(k)}_{[d]}$ for $k \in [K]$, generated by $x^{(k)}_{[d]} = H_{[d], \cdot} y^{(k)} = H_{[d], \cdot} (I - W^{(k)\top})^{-1} \Omega^{(k)} z^{(k)}$, for any $k \in [K]$, and thus the identifiability of $(y^{(k)}, \calG)$ could be obtained by simply applying the same argument for $p = d$ to $\{x^{(k)}_{[d]}, k \in [K]\}$.
\end{proof}

\section{Pseudocode for subroutines 2 and 3 in Algorithm~\ref{alg:crl}}\label{apdx:alg}

In this section, we document Algorithm~\ref{alg:crl-pruning} and Algorithm~\ref{alg:crl-disentangle} mentioned in the main text. In Algorithm~\ref{alg:crl-pruning}, we estimate the rank using singular value-threholds.

\begin{algorithm}[H]
	\caption{Pruning}
	\label{alg:crl-pruning}
\hspace*{0.02in} {\bf Input:} data $X^{(k)}$, latent features $ \Tilde{Y}^{(k)}$, noise $\hat{Z}^{(k)}$ for $k \in [K]$ from subroutine 1\\
\hspace*{0.02in} {\bf Output:} pruned causal DAG $\calG$

	\begin{algorithmic}[1]
        \STATE Denote adjacency matrix of causal DAG as $W$ with $W_{ij}=1\ \forall i<j$ and other elements 0.
        \FORALL{$k \in \{1,\dots,K\}$}
        \STATE regress $\hat{Z}^{(k)}$ on $\Tilde{Y}^{(k)}$ and denote the regression term as $\hat{B}^{(k)}$
        \ENDFOR
        \FORALL{$i\in\{2,\dots,d\}$}
        
        % \STATE $Y_i^{(k)}\leftarrow S_{Y,i}[k]$
        % \STATE regress $\Tilde{Z}_i^{(k)}$ on $\Tilde{Y}_{1:i}^{(k)}$ and obtain regression coefficient term $\hat{B}^{(k)}$
        
        % \ENDFOR
        \FORALL{$j\in\{1,\dots,i\}$}
        \STATE Denote $\hat{B}_{i,j}\coloneqq (\hat{B}_{i,j}^{(k)})_{k\in[K]}$ and $\hat{C}_{i,j}\coloneqq \hat{B}_{i,j+1:i}$
        \IF{$\mathrm{rank}(\hat{C}_{i,j})=\mathrm{rank}([\hat{C}_{i,j},\hat{B}_{i,j}])-1$}
        \STATE $W_{j,i}=1$
        \ELSE
        \STATE $W_{j,i}=0$
        \ENDIF
        \ENDFOR
        \ENDFOR
        \STATE Construct estimated causal DAG $\mathcal{\hat{G}}=(\hat{V},\hat{E})$ with $W$ 
        % \IF{$Var_k(\hat{b}^{(k)}_j)==0$}
        % \STATE $W_{j,i}=0$
        % \ENDIF
        % \STATE TODO
        % \ENDFOR
	\end{algorithmic}  
\end{algorithm}

\begin{remark}
    In both the purning subroutine in our manuscript and the Identify-Parents algorithm in \citet{jin2024learning}, SVD is conducted. In \citet{jin2024learning}, dimension of space spanned by $K$ vectors in $\mathbb{R}^d$ is computed through SVD for each possible edge while in our pruning subroutine, we compute the rank of $\hat{C}_{i,j}\in\mathbb{R}^{K\times (i-j)}$ and $\Tilde{C}_{i,j}\in\mathbb{R}^{K\times (i-j+1)}$ for each possible edge from node $i$ to node $j$ with $i\geq j+1$. 
Concretely, in the pruning step, we regress $\hat{z}^{(k)}$ against $\Tilde{y}^{(k)}$ and denote the regression coefficient as $\hat{B}^{(k)} \in \bbR^{d \times d}$. For any different $1 \leq j \leq i - 1 \leq d$, we construct $\bbR^{K} \ni \hat{B}_{i,j}\coloneqq (\hat{B}^{(k)}_{i,j}, k\in[K])$ and $\bbR^{K \times (i - j)} \ni \hat{C}_{i,j}\coloneqq (\hat{B}_{i,l},l\in\{j+1,\dots,i\})$ and $\Tilde{C}_{i,j} \coloneqq (\hat{B}_{i,j},\hat{C}_{i,j})$. Next, we compute the rank of $\hat{C}_{i,j}$ and $\Tilde{C}_{i,j}$ and if and only if $\mathrm{rank} (\hat{C}_{i,j}) = \mathrm{rank}(\Tilde{C}_{i,j}) - 1$, we conclude that $j \in \pa (i)$.
As $\Tilde{C}_{i,j}=\hat{C}_{i,j+1}$, we actually need to compute only $\Tilde{C}_{i,j}$ for $j\geq 2$. In this regard, we claim that our pruning method is more efficient.
\end{remark}

\begin{algorithm}[H]
	\caption{Disentanglement}
	\label{alg:crl-disentangle}
\hspace*{0.02in} {\bf Input:}  estimated latent features $\Tilde{Y}^{(k)}$, noise $\hat{Z}^{(k)}$ for $k\in [K]$, estimated causal DAG $\hat{\calG}=(\hat{V},\hat{E})$ from Algorithm~\ref{alg:crl-pruning} \\
\hspace*{0.02in} {\bf Output:} Disentangled latent feature $\hat{y}^{(k)}$ such that $\hat{y}^{(k)} \sim_{\mathrm{sur}} y^{(k)}$
	\begin{algorithmic}[1]
        % \STATE Denote causal DAG as $W$ with $W_{ij}=1\ \forall i<j$ and other elements 0.
        \FORALL{$k \in \{1,\dots,K\}$}
        \STATE regress $\hat{Z}^{(k)}$ on $\Tilde{Y}^{(k)}$ and denote the regression coefficient as $\hat{B}^{(k)}$
        \ENDFOR

        \FORALL{$i\in\{1,\dots,d\}$}
        \STATE $\mathcal{V}_i = \mathrm{span} \{\hat{B}^{(k)}_{i, \cdot}:k\in [K]\}$
        \ENDFOR
        \FORALL{$i\in \{1,\dots,d\}$}
        \STATE $\breve{B}_{i, \cdot}\gets$ any nonzero vector in $\bigcap\limits_{ j\in\overline{\ch}(i)}\mathcal{V}_j$
        \ENDFOR
        \STATE $\breve{B} \gets (\breve{B}_{i, \cdot}: i\in [d])^\top$
        \FORALL{$k\in [K]$}
        \STATE $\hat{Y}^{(k)}\gets \Tilde{Y}^{(k)}\breve{B}$
        \ENDFOR

        % \IF{$Var_k(\hat{b}^{(k)}_j)==0$}
        % \STATE $W_{j,i}=0$
        % \ENDIF
        % \STATE TODO
        % \ENDFOR
	\end{algorithmic}  
\end{algorithm}

\section{Proof of Theoretical Results in Section~\ref{sec:alg}}\label{apdx:proof}

\paragraph{Proof of Theorem~\ref{thm:root-node}}
\begin{proof}
% Suppose that we could find a $\beta$ with more than one nonzero elements and $X^{(k)}-\mathbb{E}(X^{(k)}|\alpha^\top X^{(k)})$ is independent with $\alpha^\top X^{(k)}$ for any $k\in [K]$.
In the proof, we omit the subscript $i$ in ${\alpha}_{i}$, where $i$ denotes the $i$-th iteration. As a preparation for the proof, we denote $\beta \coloneqq H^{\top} {\alpha}$ and $\gamma^{(k)} \coloneqq \left( \beta^\top (I-W^{(k)\top})^{-1}\Omega^{(k)} \right)^{\top}$. 
% and then $X^{(k)}-\mathbb{E}(X^{(k)}|\alpha^\top Y^{(k)})=HY^{(k)}-\mathbb{E}(HY^{(k)}|\beta^\top Y^{(k)})$. 
We also define index sets $I_0 \coloneqq \{i:\beta_i\neq 0\}$, $I^{(k)} \coloneqq \{i: \gamma^{(k)}_i \neq 0\}$ and $J^{(k)} \coloneqq \{i: \gamma^{(k)}_i = 0\}$.

First, we prove that $\# I^{(k)}=1$.  Denote $M^{(k)} \coloneqq H (I-W^{(k)\top})^{-1} \Omega^{(k)}$ so ${\alpha}^{\top} M^{(k)} \equiv \gamma^{(k)\top}$.  
Then $\forall k\in[K]$, we have $\alpha^\top x^{(k)}=\gamma^{(k)\top}_{I^{(k)}}z^{(k)}_{I^{(k)}}$ and
\begin{align*}
x^{(k)}-\Pi (x^{(k)} \mid \alpha^\top x^{(k)})=M^{(k)}_{\cdot,I^{(k)}}z^{(k)}_{I^{(k)}}+M^{(k)}_{\cdot,J^{(k)}}z^{(k)}_{J^{(k)}}-\Pi (M^{(k)}_{\cdot,I^{(k)}}z^{(k)}_{I^{(k)}} \mid \gamma^{(k)\top}_{I^{(k)}}z^{(k)}_{I^{(k)}})-\bbE(M^{(k)}_{\cdot,J^{(k)}}z^{(k)}_{J^{(k)}}),
\end{align*}
where the last marginal mean is due to the independence between $z_{J^{(k)}}^{(k)}$ and $z_{I^{(k)}}^{(k)}$. When $\forall k\in[K]$ $\alpha^\top x^{(k)}$ is independent with $x^{(k)}-\Pi (x^{(k)} \mid \alpha^\top x^{(k)})$, by Darmois--Skitovitch theorem, the terms of $z_{I^{(k)}}$ must be zero, implying that $M^{(k)}_{\cdot,I^{(k)}}z^{(k)}_{I^{(k)}}-\Pi (M^{(k)}_{\cdot,I^{(k)}}z^{(k)}_{I^{(k)}} \mid \gamma^{(k)\top}_{I^{(k)}}z^{(k)}_{I^{(k)}})=0$. In our implementation, $\alpha$ that satisfies the desired conditional independence assumption is a globally optimal solution to the optimization problem \eqref{eq:core-optimization}.
Without loss of generality, we assume that $\mathrm{cov}(z^{(k)})=I_d$ $\forall k\in[K]$, so we have
\begin{equation*}
M^{(k)}_{\cdot,I^{(k)}}z^{(k)}_{I^{(k)}}-\Pi (M^{(k)}_{\cdot,I^{(k)}}z^{(k)}_{I^{(k)}} \mid \gamma^{(k)\top}_{I^{(k)}}z^{(k)}_{I^{(k)}})=\left( M^{(k)}_{\cdot,I^{(k)}}-\frac{M^{(k)}_{\cdot,I^{(k)}}\gamma^{(k)}_{I^{(k)}}\gamma^{(k)\top}_{I^{(k)}}}{\gamma^{(k)\top}_{I^{(k)}}\gamma^{(k)}_{I^{(k)}}} \right)z^{(k)}_{I^{(k)}}=0.
\end{equation*}
As $H$ is of full column rank, we have $M^{(k)}$ and $M^{(k)}_{\cdot,I^{(k)}}$ are also of full column rank and thus $I_{\#I^{(k)}}-\frac{\gamma^{(k)}_{I^{(k)}}\gamma^{(k)\top}_{I^{(k)}}}{\gamma^{(k)\top}_{I^{(k)}}\gamma^{(k)}_{I^{(k)}}}=0$. As the rank of $\frac{\gamma^{(k)}_{I^{(k)}}\gamma^{(k)\top}_{I^{(k)}}}{\gamma^{(k)\top}_{I^{(k)}}\gamma^{(k)}_{I^{(k)}}}$ is $1$, we have $\#I^{(k)}=1$.
% Since $\alpha^\top x^{(k)}$ is independent of $x^{(k)}-\mathbb{E}(x^{(k)}|\alpha^\top x^{(k)})$, by Darmois--Skitovitch theorem, we can obtain that $x^{(k)}-\mathbb{E}(x^{(k)}|\alpha^\top x^{(k)})$ is independent of $z^{(k)}_{I^{(k)}}$.
% This means $\mathrm{rank}(M_{I^{(k)}}^{(k)})=1$. By Assumption \ref{assmp:full_rank}, $M^{(k)}$ has full column rank, leading to $\#I^{(k)}=1$.  
% By Lemma \ref{lemma:single-Z}, we could know that $\# I^{(k)}=1$, and denote the nonzero index as $i^{(k)}$.
 
Denote the nonzero index of $\gamma^{(k)}$ as $i^{(k)}$ and thus we have $\beta = U^{(k)\top}_{i^{(k)},\cdot} \gamma^{(k)}_{i^{(k)}}$. We now claim that $i^{(k)}$ is invariant in $k \in \{1, \cdots, K\}$ and will prove it by contradiction. Assume that on the contrary $\exists \, k_1 \neq k_2$ such that $i^{(k_1)}\neq i^{(k_2)}$. Since $\gamma^{(k_1)\top}_{i^{(k_1)}}U^{(k_1)}_{i^{(k_1)}, \cdot}=\gamma^{(k_2)\top}_{i^{(k_2)}}U^{(k_2)}_{i^{(k_2)}, \cdot}=\beta^\top$ and $\forall k$, and by the definition of $U^{(k)}$, diagonal entries of $U^{(k)}$ are all non-zero diagonal elements, we have $\beta_{i^{(k_1)}} \neq 0$ and $\beta_{i^{(k_2)}} \neq 0$. It in turn follows from straightforward algebra in vector-matrix multiplication that $U_{i^{(k_2)}, i^{(k_1)}}^{(k_2)} \neq 0$ and $U_{i^{(k_1)}, i^{(k_2)}}^{(k_1)} \neq 0$, which further implies that $W_{i^{(k_1)},i^{(k_2)}}^{(k_2)} \neq 0$ and $W_{i^{(k_2)},i^{(k_1)}}^{(k_1)} \neq 0$. However, since $W^{(k)}$ encodes the adjacency matrix of the DAG $\calG$ corresponding to the causal model, only one of $W_{i, j}^{(k)}$ and $W_{j, i}^{(k)}$ can be non-zero for every $i \neq j$. We have a contradiction. 

Now we can identify all $i^{(k)}$'s by a single node index $i$. Therefore
\begin{equation*}
\mathrm{dim\ span}\{U^{(k)}_{i, \cdot},k\in[K]\}=\mathrm{dim\ span} \left\{\frac{\beta}{\gamma^{(k)}_i}, k \in [K] \right\} = 1.
\end{equation*}
 Finally, by Assumption \ref{assmp:degeneracy}, node $i$ is a root node in $\calG$.
\end{proof}

\paragraph{Proof of Theorem~\ref{thm:alg1}}
\begin{proof}
As our goal is to learn the latent features and the causal DAG up to permutation and scaling transformations, we can assume that $\{1, \cdots, d\}$ is a valid topological ordering of the causal DAG $\calG$.

In total, there are $d$ iterations in Algorithm \ref{alg:crl}. In the first iteration, by Theorem~\ref{thm:root-node}, we recover one component of $z^{(k)}$ denoted as $z^{(k)}_1$. Furthermore, $\beta$ corresponding to $H^\top \alpha$ shall be $(\beta_1, 0, \dots, 0)$. We then eliminate the influence of $z_1^{(k)}$ on $x^{(k)}$ by the orthogonal projection $x^{(k)}-\Pi (x^{(k)}|z^{(k)}_1)$, denoted as $\proj_1 x^{(k)}$. Graphically, this orthogonal projection removes node $1$ and associated edges from $\calG$ and we denote the new DAG as $\calG'$. Algebraically, $\proj_1{x}^{(k)} = H \proj_1 {y}^{(k)} = H (I -{W}^{(k)\top})^{-1} \Omega^{(k)} \proj_1{z}^{(k)}$, meaning that $\proj_1{x}^{(k)}$ is the observed data obtained from the corresponding latent feature $\proj_1 {y}^{(k)}$ in the new causal DAG ${\calG}^\prime$. Therefore, we can repeat the procedure until we estimate all components of $z^{(k)}$ and entangled latent feature ${y}^{(k)}$ and causal graph~$\calG$. 

Note that in the $i$-th iteration, $\beta^\top \proj_i{y}^{(k)} = z_i^{(k)}$ and thus $\beta_i \neq 0$ and $\beta_{(i+1):d} = 0$. Therefore, the recovered $\tilde{y}^{(k)}_i$ is a linear combination of $\{y^{(k)}_1, \dots, y^{(k)}_i\}$, which in turn implies that there exists a lower triangular matrix $B$ such that $\Tilde{y}^{(k)} = By^{(k)}$. Since all the $\hat{\alpha}_i$, $i\in[d]$, are estimated through an ICA algorithm, by Theorem 11 in \citet{reyhani2012consistency}, we identify all components of $z^{(k)}$ up to permutation and scaling transformations.
\end{proof}

\paragraph{Proof of Theorem~\ref{thm:pruning}}
\begin{proof}
Suppose that $\hat{z}^{(k)}$ and $\Tilde{y}^{(k)}$ are perfectly-solved output from subroutine 1. Then $\hat{z}^{(k)} = z^{(k)}$ and $\tilde{y}^{(k)} = B y^{(k)}$ where $B$ is a lower triangular matrix. 
Therefore, we have $\hat{z}^{(k)}=\Omega^{(k)-1}(I-W^{(k)})^\top B^{-1}\Tilde{y}^{(k)}$ and thus $\hat{B}^{(k)}=\Omega^{(k)-1}(I-W^{(k)})^\top B^{-1}$, following that
\begin{equation}
\hat{B}^{(k)}_{i,j}=((\Omega^{(k)})^{ -1})_{i, i} (B^{-1})_{\cdot,j}^{\top} (e_i-W_{\cdot, i}^{(k)}).
\end{equation}

For any $i,j\in[d]$, denote the vectors $(\hat{B}^{(1)}_{i,j},\hat{B}^{(2)}_{i,j},\dots,\hat{B}^{(K)}_{i,j})$, $(W^{(1)}_{i,j},W^{(2)}_{i,j},\dots,W^{(K)}_{i,j})$, $((\Omega_{i,i}^{(1)})^{-1},(\Omega_{i,i}^{(2)})^{-1},\dots,(\Omega_{i,i}^{(K)})^{-1})$, and $(W^{(1)}_{i,j}(\Omega_{i,i}^{(1)})^{-1},W^{(2)}_{i,j}(\Omega_{i,i}^{(2)})^{-1},\dots,W^{(K)}_{i,j}(\Omega_{i,i}^{(K)})^{-1})$ as $\hat{B}_{i,j}$, $W_{i,j}$, $\Omega^{\dag}_i$, and $W^\Omega_{i,j}$. 
Since $B$ is a lower triangular matrix, we have $(B^{-1})_{1:j-1, j}=0$. As we suppose the $\hat{z}^{(k)}$ and $\Tilde{y}^{(k)}$ are estimated in topological order, $W^{(k)}$ is an upper triangular matrix and thus $(e_i-W_{\cdot, i}^{(k)})_{i+1:d}=0$.
Together we have that $\hat{B}_{i,j}^{(k)}=\Omega^{(k) -1}_{i, i}\sum\limits_{j^\prime=j}^i (B^{-1})_{l,j}(e_i-W_{\cdot, i}^{(k)})_{j^\prime}$.
Therefore, $\hat{B}_{i,j}$ is a linear combination of ($W^\Omega_{i,j^\prime}$, $j^\prime\in\{j,\dots,i\}$) and $\Omega^{\dag}_{i}$. Similarly, we obtain that for any $l\in\{j+1,\dots,i\}$, $\hat{B}_{i,l}$ is a linear combination of ($W^\Omega_{i,l^\prime}$ , $l^\prime\in\{l,\dots,i\}$) and $\Omega^{\dag}_{i}$. Therefore, for any different $j, i\in[d]$ such $j \not\in \pa (i)$, we have $W_{j,i}=0$. As the diagonal entries of $B$ is nonzero, $\hat{B}_{i,j}$ is a linear combination of vectors $\hat{B}_{i,l}$, $\forall l\in\{j+1,\dots,i\}$, implying that $\mathrm{rank}(\hat{C}_{i,j})=\mathrm{rank}(\Tilde{C}_{i,j})$. 
When $j\in\pa(i)$, $W_{i,j}\neq 0$, and with Assumption \ref{assmp:degeneracy}, we could obtain $\mathrm{rank}(\hat{C}_{i,j})=\mathrm{rank}(\Tilde{C}_{i,j})-1$.  Therefore, we can conclude that $j \in \pa (i)$ if and only if
\begin{equation*}
\mathrm{rank}(\hat{C}_{i,j}) = \mathrm{rank}(\Tilde{C}_{i,j})-1.
\end{equation*}
\end{proof}

\paragraph{Proof of Theorem~\ref{thm:disentangle}}
\begin{proof}
    Suppose that $\Tilde{y}^{(k)}$ and $\hat{\calG}$ are perfectly solved in the previous subroutine 1 and 2 with the same topological ordering as the ground truth (without loss of generality), meaning that $\Tilde{y}^{(k)} = B y^{(k)}$ where $B$ is an lower triangular matrix and $\hat\calG=\calG$. Thus, we have $\hat{z}^{(k)}=(\Omega^{(k)})^{-1}(I-W^{(k)})^\top B^{-1}\Tilde{y}^{(k)}$. We regress $\hat{z}^{(k)}$ on $\Tilde{y}^{(k)}$ and let $\hat{B}^{(k)}$ denote the regression coefficient. Consequently, the $i$th row vector of $\hat{B}^{(k)}$ is given by $B^{-1}(e_i-W^{(k)}_{\cdot, i})(\Omega^{(k) -1})_{i, i}$. We  obtain that $\hat{B}^{(k)}_{i, \cdot}\in\calV_i\coloneqq\mathrm{span} \{B_{i, \cdot}:\ i\in\overline{\pa}(i)\}$. Together with Assumption~\ref{assmp:degeneracy}, $\mathrm{dim}(\mathrm{span} \{(B^{-1})_{i, \cdot}:\ i\in\overline{\pa}(i)\})\leq |\overline{\pa}(i)|=\mathcal{V}_i$, which implies that $\mathcal{V}_i=\mathrm{span}\{(B^{-1})_{i, \cdot}:\ i\in\overline{\pa}(i)\}$. Recall that $\mathrm{sur}(i) \equiv \pa (i) \cap (\mathop{\cap}\limits_{j\in\ch(i)}\pa (j))$. Therefore, $\bigcap\limits_{ j\in\overline{\ch}(i)}\mathcal{V}_j = \mathrm{span} \{B_{i, \cdot}:\ i\in\overline{\mathrm{sur}}(i)\}$. As $y^{(k)}=B^{-1}\Tilde{y}^{(k)}$, we denote the estimated latent features as $\hat{y}^{(k)}$ and it reads
    \begin{equation*}
    \hat{y}^{(k)}_i\coloneqq\breve{B}_{i, \cdot}^\top\Tilde{y}^{(k)}=\sum\limits_{j\in\overline{\mathrm{sur}}(i)}\breve{B}_{i,j}y^{(k)}_j,
    \end{equation*}
    where $\breve{B}_{i, \cdot}$ is any nonzero vector in $\bigcap\limits_{ j\in\overline{\ch}(i)}\mathcal{V}_j$. 
\end{proof}

\subsection{Convergence Analysis of Algorithm~\ref{alg:crl}}
\label{app:convergence}

Denote the topological ordering obtained by subroutine 1 as $\hat{\pi}$. We first present the convergence analysis of $\hat{\pi}$.
\begin{lemma}\label{lem:order-converge}
Without loss of generality, we assume that for any node $i<j$, node $i$ is not a descendant of $j$.
Denote the topological ordering output by subroutine 1 as $\hat{\pi}$ from $X^{(k)}\in\bbR^{n\times p}$, $\forall k\in[K]$ and the set of all possible ground truths as $\Pi$. We have
    \begin{equation}
        \lim_{n\to\infty}\mathbb{P}(\hat{\pi}\in \Pi)=1.
    \end{equation}
\end{lemma}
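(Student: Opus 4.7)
The plan is to establish consistency of $\hat{\pi}$ by induction on the iteration index of subroutine~1 of \texttt{CREATOR}. At each iteration $i$, I will show that the empirically selected index corresponds to a root node of the residual DAG with probability tending to one; a union bound over the $d$ iterations then delivers the lemma.

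First, I would collect consistency statements for the ingredients used in a single iteration. Under Assumption~\ref{assmp:noise}, the sample unmixing matrices $\hat{N}^{(k)}$ produced by the ICA routine converge, up to the usual sign/scale/permutation ambiguity, to their population counterparts (Theorem~11 of \citet{reyhani2012consistency}, already invoked in Theorem~\ref{thm:alg1}). Under Assumption~\ref{assmp:full_rank} and standard second-moment conditions, the sample linear projections used to form $\oproj_i X^{(k)}$ consistently estimate their population analogues at the $O_p(n^{-1/2})$ rate whenever the previously selected indices form a valid topological prefix. Finally, the plug-in HSIC estimator used in place of the mutual information in \eqref{eq:core-optimization} is a continuous functional of the empirical joint distribution and is pointwise consistent for its population counterpart under mild moment conditions.

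For the base case $i = 1$, Theorem~\ref{thm:root-node} guarantees that the population value of $L^{(1)}(\alpha, x)$ vanishes if and only if $\alpha^\top x^{(k)}$ is proportional to the exogenous noise of some root of $\calG$. For any candidate $\alpha$ in the ICA-generated set that does not correspond to a root, Darmois--Skitovitch together with Assumption~\ref{assmp:noise} forces the population HSIC between $\alpha^\top x^{(k)}$ and the corresponding residual to be strictly positive for at least one $k$. Since the candidate set has at most $Kd$ elements, uniform convergence of the empirical HSIC to its population counterpart, combined with the positive gap between root and non-root values of the objective, implies that the empirical argmin selects a root of $\calG$ with probability tending to $1$. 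For the inductive step, assume that the first $i-1$ selected indices form a valid topological prefix. As explained in Section~\ref{subsec:topo-infer}, $\proj_i x^{(k)}$ then obeys an induced linear SCM on the sub-DAG obtained by deleting these indices, and all standing assumptions continue to hold. Consistency of the regression-based projection implies that $\oproj_i X^{(k)}$ is a consistent estimate of $\proj_i x^{(k)}$, and the same finite-candidate uniform-convergence argument applied to this reduced problem yields that the $i$-th selected index is a root of the residual DAG with probability tending to $1$.

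The main obstacle is controlling the propagation of error across the $d$ iterations: at iteration $i$ the ``population'' inputs $\proj_i x^{(k)}$ are themselves replaced by the estimates $\oproj_i X^{(k)}$ built from previously recovered noises, so the uniform gap between the population objective values at root and non-root candidates must remain bounded away from zero throughout, and finite-sample deviations must not overturn this gap. I would handle this by applying the continuous mapping theorem to the composition of the finitely many consistent estimators and taking a union bound over the at most $d \cdot Kd$ candidate evaluations across all iterations, which then yields $\lim_{n \to \infty}\mathbb{P}(\hat{\pi} \in \Pi) = 1$.
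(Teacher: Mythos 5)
Your proposal is correct and follows essentially the same route as the paper's proof: both reduce the claim to showing that each iteration of subroutine~1 selects a root of the residual DAG with probability tending to one, using consistency of the ICA unmixing matrices (via the same \citet{reyhani2012consistency} result), consistency of the plug-in HSIC estimator for the population independence criterion, and the root-node characterization of Theorem~\ref{thm:root-node}, then iterating over the $d$ steps. Your write-up is somewhat more explicit than the paper's about why consistency of the criterion implies correct \emph{selection} --- the strictly positive population gap at non-root candidates, uniform convergence over the finite candidate set, and the propagation of estimation error across iterations, which the paper dispatches with ``a similar proof'' --- but the underlying argument is the same.
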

\begin{proof}
Since $\hat{\pi}$ is obtained by $d$ steps sequentially, we only need to prove that the probability of the estimated latent variable in the $i$-th step $\hat{y}_i^{(k)}$ corresponds to a root node of the subgraph with the first $i-1$ nodes removed in $\calG$ tends to 1 as sample size $n$ tends to infinity. For the first step, denote all $K \cdot d$ many possible candidates from the ICA algorithm as $\hat{\alpha}_{1, j}$, $j \in [K \cdot d]$. By Theorem~\ref{thm:root-node}, we only need to prove that the mutual information estimator tends to the ground truth with probability converging to 1 as the sample size $n \rightarrow \infty$. As we only need to find $\alpha$ such that $\alpha^\top x^{(k)}$ is independent with $x^{(k)}-\Pi(x^{(k)}|\alpha^\top x^{(k)})$, in our algorithm, we replace mutual information with HSIC estimator \citep{HSIC}, which is an independence criterion, satisfying that if and only if the two random variables are independent, the estimator would be $0$. We denote the estimator of HSIC and the true value of HSIC as $\mathrm{HSIC}$ and $\mathrm{hsic}$, respectively.

Up to this point, we are left to show that $\forall \varepsilon>0$,
\begin{equation}
\sum_{k=1}^K\sum\limits_{i=1}^d\mathrm{hsic}( X^{(k)}_{\cdot,i}\hat{\alpha},X_{\cdot,i}^{(k)}-\hat{\Pi}(X_{\cdot,i}^{(k)} \mid X_{\cdot,i}^{(k)}\hat\alpha))\to_P\sum_{k=1}^K\sum\limits_{i=1}^d\mathrm{HSIC}(\alpha^\top x_i^{(k)},x_i^{(k)}-\Pi (x_i^{(k)} \mid \alpha^\top x_i^{(k)})),
\end{equation}
where $\hat{\Pi}$ denotes the estimated version conditional linear projection operator.
    
As all $\alpha$ candidates in \eqref{eq:core-optimization} are from the row vectors of the unmixing matrix of ICA, by the consistency of the estimated unmixing matrix \citep{reyhani2012consistency}, we have that $\hat{\alpha}\to_P\alpha$. Without loss of generality, we assume that $\bbE(x^{(k)})=0$, $\forall k\in[K]$. If not, we could replace $x^{(k)}$ by $x^{(k)}-\bbE (x^{(k)})$ during implementation. Then we have $\frac{1}{n}(X^{(k)}\hat{\alpha})^\top X^{(k)}\hat\alpha\to_P \mathrm{var}(\alpha^{\top}x^{(k)})$ and $\frac{1}{n}(X^{(k)}\hat{\alpha})^\top X^{(k)}_i\to_P \mathrm{cov}(\alpha^\top x^{(k)},x_i^{(k)})$ $\forall i\in[d],\ k\in[K]$. Therefore, we have $((X^{(k)}\hat{\alpha})^\top X^{(k)}\hat\alpha)^{-1}(X^{(k)}\hat{\alpha})^\top X^{(k)}_i\to_P\frac{\mathrm{cov}(\alpha^\top x^{(k)},\ x_i^{(k)})}{\mathrm{var}(\alpha^{\top}x^{(k)})}$ $\forall i\in[d],\ k\in[K]$. By the definition of empirical HSIC in \citet{HSIC}, we have that $\mathrm{hsic}(X^{(k)}\hat{\alpha},X_{\cdot,i}^{(k)}-\Pi (X_{\cdot,i}^{(k)} \mid X^{(k)}\hat\alpha))-\mathrm{hsic}(X^{(k)}\hat{\alpha},X_{\cdot,i}^{(k)}-\frac{\mathrm{cov}(\alpha^\top x^{(k)},\ x_i^{(k)})}{\mathrm{var}(\alpha^{\top}x^{(k)})}X^{(k)}\hat\alpha)=\frac{1}{(n-1)^2}\mathrm{tr}KH(L^\prime-L)H$, where $H,K,L,L^\prime\in\bbR^{n\times n}$ and are defined as 
    $\forall l_1,l_2\in[n]$ 
    $$K_{l_1,l_2}\coloneqq k(X^{(k)}_{l_1,\cdot}\hat{\alpha},X^{(k)}_{l_2,\cdot}\hat{\alpha}),$$
    $$L_{l_1,l_2}\coloneqq l(X_{l_1,i}^{(k)}-\frac{\mathrm{cov}(\alpha^\top x^{(k)},\ x_i^{(k)})}{\mathrm{var}(\alpha^{\top}x^{(k)})}X_{l_1,\cdot}^{(k)}\hat\alpha),X_{l_2,i}^{(k)}-\frac{\mathrm{cov}(\alpha^\top x^{(k)},\ x_i^{(k)})}{\mathrm{var}(\alpha^{\top}x^{(k)})}X_{l_2,\cdot}^{(k)}\hat\alpha)),$$
    $$L^\prime\coloneqq l(X_{l_1,i}^{(k)}-\Pi(X_{l_1,i}^{(k)} \mid X_{l_1,\cdot}^{(k)}\hat\alpha),X_{l_2,i}^{(k)}-\Pi(X_{l_2,i}^{(k)} \mid X_{l_2,\cdot}^{(k)}\hat\alpha)),$$
    and 
    $$H_{i,j}\coloneqq \delta_{i,j}-\frac{1}{n},$$ with $l(\cdot,\cdot)$ and $k(\cdot,\cdot)$ kernel function. In our implementation, we leverage RBF kernel, which is a bounded continuous function, implying that $l_{i,j}^\prime\to_P l_{i,j}$. Thus we can obtain that $\mathrm{hsic}(X^{(k)}\hat{\alpha},X_{\cdot,i}^{(k)}-\Pi(X_{\cdot,i}^{(k)} \mid X^{(k)}\hat\alpha))\to_P \mathrm{hsic}(X^{(k)}\hat{\alpha},X_{\cdot,i}^{(k)}-\frac{\mathrm{cov}(\alpha^\top x^{(k)},\ x_i^{(k)})}{\mathrm{var}(\alpha^{\top}x^{(k)})}X^{(k)}\hat\alpha)$. By Theorem 3 in \citet{HSIC}, we have 
    \begin{equation*}
        \mathrm{hsic}(X^{(k)}\hat{\alpha},X_{\cdot,i}^{(k)}-\Pi(X_{\cdot,i}^{(k)} \mid X^{(k)}\hat\alpha))\to_P \mathrm{HSIC}(\alpha^\top x^{(k)},x_i^{(k)}-\Pi(x^{(k)}_i \mid \alpha^\top x^{(k)}).
    \end{equation*}
    Therefore, the probability of estimated $\Tilde{y}_1^{(k)}$ and $\hat{z}^{(k)}_1$ correspond to a root node tends to 1 as the sample size tends to infinity. With a similar proof, it can be shown that $\lim\limits_{n\to\infty}\bbP(\hat{\pi}\in\Pi)=1$.
\end{proof}

\begin{theorem}
    For the estimated $(\hat{y}^{(k)},\hat\calG)$ from Algorithm \ref{alg:crl}, we have
    \begin{equation}
        \lim\limits_{n\to\infty}\bbP((\hat{y}^{(k)},\hat\calG)\sim_\sur (y^{(k)},\calG))=1,\ \forall \, k \in[K].      
    \end{equation}    
\end{theorem}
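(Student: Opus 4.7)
The plan is to chain together consistency results for each of the three subroutines of Algorithm~\ref{alg:crl} and union-bound over the resulting good events. First I would invoke Lemma~\ref{lem:order-converge} to obtain that the topological ordering $\hat{\pi}$ returned by subroutine~1 is consistent. On the event $\{\hat\pi\in\Pi\}$, the arguments in the proof of Lemma~\ref{lem:order-converge}---specifically the consistency of the ICA unmixing matrix \citep{reyhani2012consistency} together with the consistency of the HSIC estimator \citep{HSIC}---also yield that the vectors $\hat{\alpha}_i$ solving \eqref{eq:core-optimization} converge in probability to their population analogues at every iteration. Because $\oproj_i X^{(k)}$ is obtained by a finite sequence of sample linear regressions against $\hat{z}^{(k)}_{j\in[i-1]}$, one proves by induction on $i$ that $\oproj_i X^{(k)}\to_P \proj_i x^{(k)}$. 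The continuous mapping theorem then gives $\tilde{y}^{(k)}\to_P B y^{(k)}$ and $\hat{z}^{(k)}\to_P \Gamma^{(k)} z^{(k)}$ for the lower-triangular $B$ and diagonal $\Gamma^{(k)}$ appearing in the proof of Theorem~\ref{thm:alg1}.

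Next I would analyse subroutine~2. The regression coefficient $\hat{B}^{(k)}$ is a continuous function of sample second moments of $(\tilde{y}^{(k)},\hat{z}^{(k)})$, so it converges in probability to the population matrix $(\Omega^{(k)})^{-1}(I-W^{(k)})^\top B^{-1}$ identified in the proof of Theorem~\ref{thm:pruning}. The subtlety is that Algorithm~\ref{alg:crl-pruning} decides edges through rank comparisons of $\hat{C}_{i,j}$ and $\tilde{C}_{i,j}$, and rank is not a continuous function of a matrix. To handle this, the rank test is replaced by a thresholded singular-value count at level $\tau_n\to 0$ (slowly). Under Assumption~\ref{assmp:degeneracy}, the smallest nonzero singular values of the population $\tilde{C}_{i,j}$ are bounded away from zero, so by Weyl's inequality the thresholded rank matches the true rank with probability tending to $1$ for every pair $(i,j)$. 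A union bound over the $O(d^2)$ tests then gives $\bbP(\hat{\calG}=\calG)\to 1$.

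For subroutine~3 (disentanglement), on the events above, the subspaces $\calV_i=\mathrm{span}\{\hat{B}^{(k)}_{i,\cdot}:k\in[K]\}$ can be recovered consistently in the Grassmannian sense by the same thresholded-SVD device, and by the proof of Theorem~\ref{thm:disentangle} they converge to $\mathrm{span}\{(B^{-1})_{l,\cdot}:l\in\overline{\pa}(i)\}$. Hence the intersections $\bigcap_{j\in\overline{\ch}(i)}\calV_j$ converge to $\mathrm{span}\{(B^{-1})_{l,\cdot}:l\in\overline{\sur}(i)\}$, and any nonzero $\breve{B}_{i,\cdot}$ selected from the empirical intersection converges to a nonzero element of this true span. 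Consequently $\hat{y}^{(k)}_i=\breve{B}_{i,\cdot}^{\top}\tilde{y}^{(k)}$ is, in the limit, a linear combination of $\{y^{(k)}_l:l\in\overline{\sur}(i)\}$---exactly the support pattern required by Definition~\ref{def:sur equivalence}. A final union bound over the three good events yields the claim.

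The main obstacle is precisely the discreteness of the rank-based decisions in subroutines~2 and~3: the equivalence $\sim_{\sur}$ is a discrete event, so one must verify that with high probability every one of the $O(d^2)$ rank tests \emph{and} every subspace intersection is recovered exactly, not merely close to correct. Making the thresholding argument precise requires (i) exhibiting a uniform lower bound on the smallest nonzero singular values of the relevant population matrices, which Assumption~\ref{assmp:degeneracy} supplies; (ii) choosing a single schedule $\tau_n\to 0$ that works simultaneously across all tests; and (iii) verifying that the ``nonzero vector in the empirical intersection'' used to build $\breve{B}$ can be selected in a way that is continuous in the input data, so that it inherits the correct support pattern in the limit.
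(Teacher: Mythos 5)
Your proposal follows the same overall route as the paper's proof: invoke Lemma~\ref{lem:order-converge} for the consistency of subroutine~1, deduce that $\hat{z}^{(k)}$ and $\tilde{y}^{(k)}$ converge to objects satisfying $\sim_{p}$ and $\sim_{\triangle}$ respectively, conclude that the regression coefficients $\hat{B}^{(k)}$ converge in probability, and combine these events to obtain the $\sim_{\sur}$ recovery. The difference is one of rigor rather than strategy. The paper's proof is two sentences long and stops at ``$\hat{B}^{(k)}$ is converging in probability,'' implicitly treating the conclusion as an immediate consequence; you correctly identify that this is not automatic, because the edge decisions in Algorithm~\ref{alg:crl-pruning} and the subspace intersections in Algorithm~\ref{alg:crl-disentangle} are \emph{discrete} functions of $\hat{B}^{(k)}$, and rank is not continuous. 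Your thresholded singular-value count with $\tau_n\to 0$, the Weyl-inequality argument using the singular-value gap guaranteed by Assumption~\ref{assmp:degeneracy}, the union bound over the $O(d^2)$ tests, and the Grassmannian convergence of the empirical subspaces $\calV_i$ are exactly the ingredients needed to close this gap --- in finite samples the empirical $\hat{C}_{i,j}$ will generically be of full rank, so the exact rank comparison in the pseudocode cannot be consistent without such a thresholding device. In short, your proposal is not a different proof but a repaired version of the paper's proof; the only caveat is that it requires a (mild, and arguably intended) modification of the stated algorithm, replacing exact rank tests by thresholded ones, which you should state explicitly as an amendment to Algorithms~\ref{alg:crl-pruning} and~\ref{alg:crl-disentangle} rather than as a silent reinterpretation.
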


\begin{proof}
    Since the estimated $\hat\alpha\to_P \alpha$, we have $\lim\limits_{n\to\infty}\bbP (\hat{z}^{(k)}\sim_P z^{(k)})=1$ and $\lim\limits_{n\to\infty}\bbP (\Tilde{y}^{(k)}\sim_\triangle y^{(k)})=1$, implying that the estimated $\hat{B}^{(k)}$ in Algorithm \ref{alg:crl-pruning} and \ref{alg:crl-disentangle} is converging in probability. Together with the results in Lemma \ref{lem:order-converge}, we have that $\lim\limits_{n\to\infty}\bbP((\hat{y}^{(k)},\hat\calG)\sim_\sur (y^{(k)},\calG))=1,\ \forall k \in[K] $.
\end{proof}

\subsection{Computational complexity of \texttt{CREATOR}}
\label{app:comp}

In subroutine 1, the computational cost is mainly due to ICA and the step of computing independence criterion (HSIC in the current version), resulting in an overall computational complexity of $\mathcal{O} (p n^3 d)$. Subroutine 2 involves regression and rank estimation. Specifically, Singular Value Decomposition (SVD) is used to determine the rank by counting the number of positive singular values, leading to complexity $\mathcal{O} (n^2 d^3)$.
In subroutine 3, we employ a method analogous to the one outlined in Section B.2 of \citet{jin2024learning}. This involves computing the orthogonal projection matrix of $\mathcal{V}_j$, denoted as $Q_j$, and extracting the singular vector associated with the least singular value of $\sum_{j\in\ch(i)}Q_j^\top Q_j$. Thus, the computational cost of subroutine 3 is also $\mathcal{O}(n^2d^3)$, resulting in a total computational cost $\mathcal{O} (p n^3 d + n^2 d^3)$.

\section{Supplementary Information on Numerical Experiments}
\label{app:exp}

% \subsection{Metrics}\label{apdx:metrics}
% We use structural Hamming distance (SHD) for causal DAGs. SHD counts the number of missing, falsely detected or reversed edges. For latent features, we design a metric called $\LocR^2$ closely related to $R^2$:
% \begin{equation*}
% \begin{split}
% \LocR^2 & \coloneqq \mathop{\max}\limits_{P}\frac{1}{dK}\sum\limits_{k=1}^K\sum\limits_{i=1}^d\LocR^2_{i,k}, \quad \LocR^2_{i,k} \coloneqq 1-\frac{\hat{\mathbb{E}} (\breve{y}^{(k)}_i-\proj_{\mathrm{span}(y_j^{(k)}:j\in\mathrm{sur}(j))}(\breve{y}^{(k)}_i))^2}{\hat{\mathrm{var}} (\breve{y}_i^{(k)})},
% \end{split}
% \end{equation*}
% where $\breve{y}^{(k)} \coloneqq P\hat{y}^{(k)}$ for some permutation matrix $P$, and $\hat{\mathbb{E}}$ and $\hat{\mathrm{var}}$ denote, respectively, the sample mean and sample variance. $\LocR^2_i$ measures the linear correlation between $\breve{y}^{(k)}_i$ and $(y_j^{(k)},j \in \mathrm{sur}(i))$. When $\LocR^2_i$ is close to $1$, $\breve{y}^{(k)}_i$ is close to $\mathrm{span}\{y_j^{(k)}: j\in\sur(i)\}$; when $\LocR^2$ is $1$, $\hat{y}^{(k)}\sim_{\sur}y^{(k)}$.

% For SHD, lower is better while for $\LocR^2$, higher is better.

\subsection{Other results in Section \ref{subsec:exp}}\label{apdx:add-exp}

In this section, we first describe the simulation settings in more details. The weighted matrices $W^{(k)}$ are generated in two steps. First, we generate a directed acylic graph based on the Erd\H{o}s-Rényi random graph model and obtain its adjacency matrix. Then we generate a random matrix as the weight matrix. We generate the weight matrix randomly from several non-Gaussian distributions, listed in Table \ref{tab:distribution}. For each weight matrix, we first randomly select a distribution and then generate the corresponded random matrix. Each entry of the weight matrix is independently drawn. After we generate these two matrices, we obtain $W^{(k)}$ by multiplying the corresponding entries of the two matrices.
\begin{table}[htbp]
\centering
\caption{Distributions and Their Parameters}\label{tab:distribution}
\begin{tabular}{l|l}
\toprule
\textbf{Distribution} & \textbf{Parameters} \\ \midrule
Laplace  & Location $0$, Scale $1$ \\ \midrule
Exponential  & Rate $1$ \\ \midrule
Uniform  & Lower bound $0$, Upper bound $1$ \\ \midrule
Gumbel  & Location $0$, Scale $1$ \\ \midrule
Beta & Shape $0.5$, Shape $0.5$ \\ \midrule
Gamma-1 (Gamma with shape=1) & Shape $1$, Scale $1$ (or Rate $\beta=1/\theta$) \\ \midrule
Chi-squared-1 ($\chi^2_1$) & Degrees of freedom $1$ \\ \midrule
Chi-squared-3 ($\chi^2_3$) & Degrees of freedom $3$ \\ \midrule
Gamma-3 (Gamma with shape=3) & Shape $k=3$, Scale $1$  \\ \bottomrule
\end{tabular}
\end{table}

We then present Figure~\ref{fig:result-K-2d}, which is still on the synthetic experiments conducted in Section~\ref{subsec:exp} of the main text, but with $K = 2 d$. The overall pattern is quite similar to the results in the main text so we do not further expound upon it.
\begin{figure}[htbp]
  \centering           
  \subfloat[$\LocR^2$ in setting (1) with $K=2d$]   
  {      \label{fig:generalR2-K-2d}\includegraphics[width=0.45\linewidth]{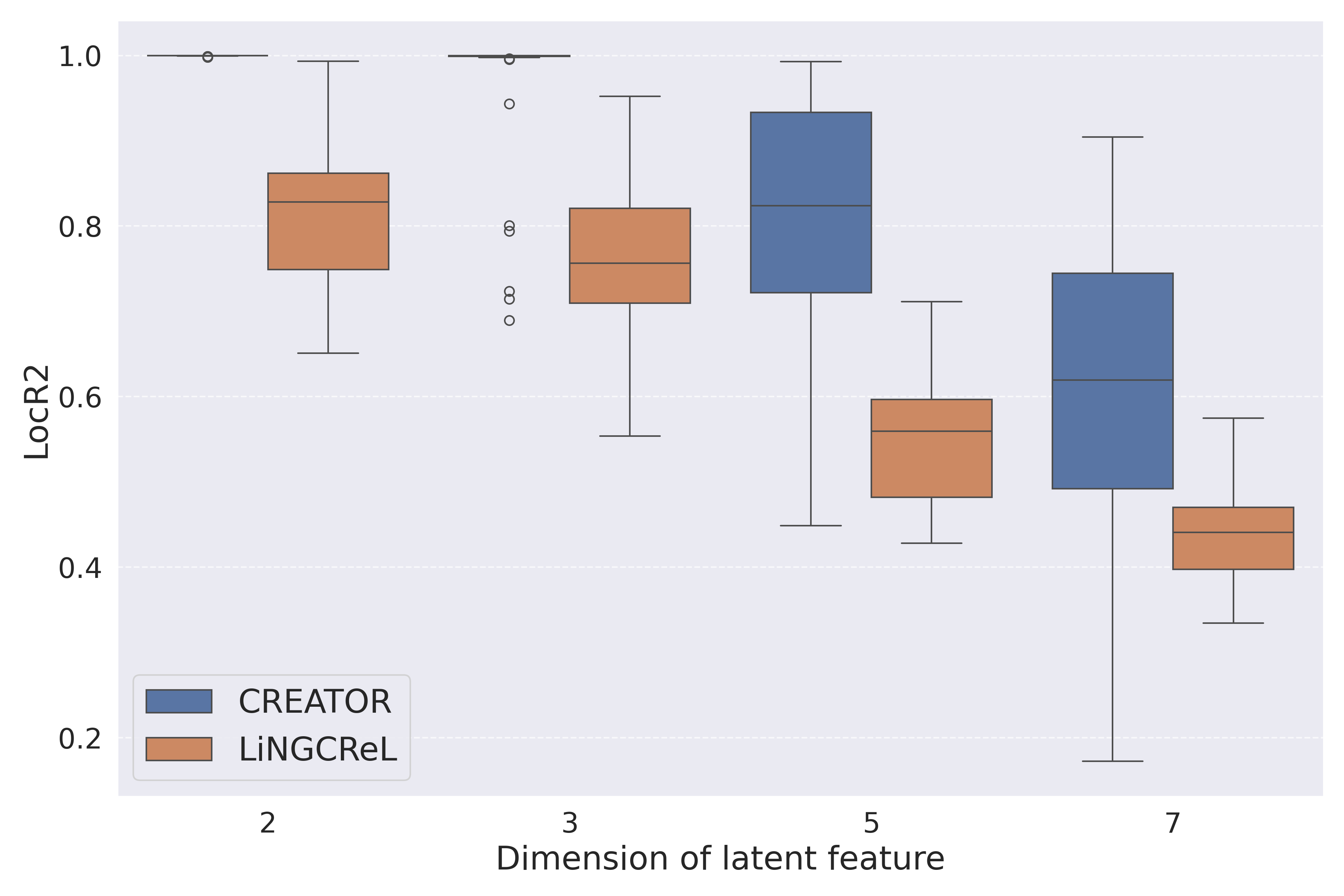}
    }
  \subfloat[$\LocR^2$ in setting (2) with $K=2d$]
  {      \label{fig:specialR2-K-2d}\includegraphics[width=0.45\linewidth]{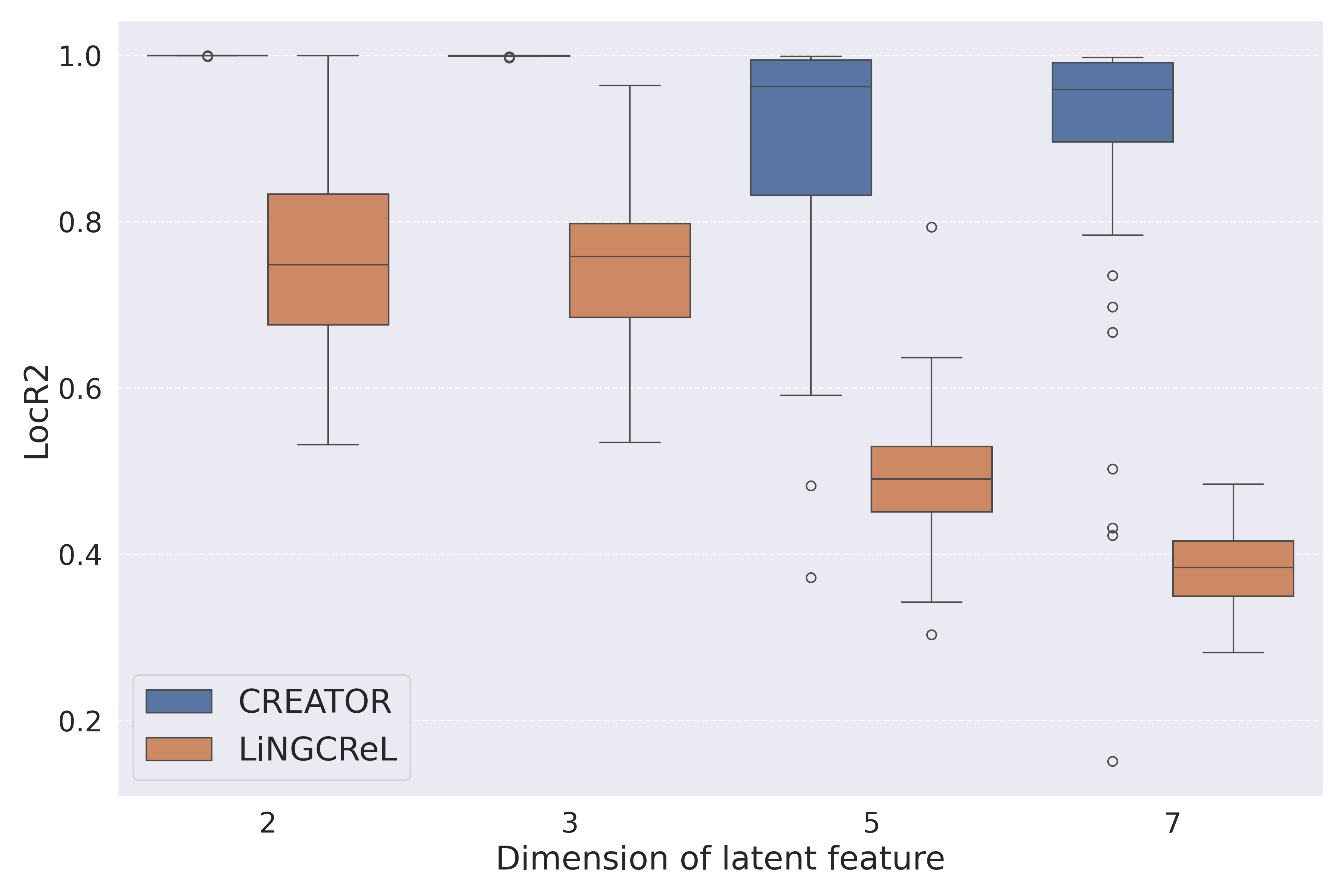}
  }
  %   \subfloat[weak causal effect case]
  % {      \label{fig:weakR2-K-d}\includegraphics[width=0.3\linewidth]{figures/sythetic_data/R2_fig_w.png}
  % }

     \subfloat[SHD in setting (1) with $K=2d$]   
  {      \label{fig:generalSHD-K-2d}\includegraphics[width=0.45\linewidth]{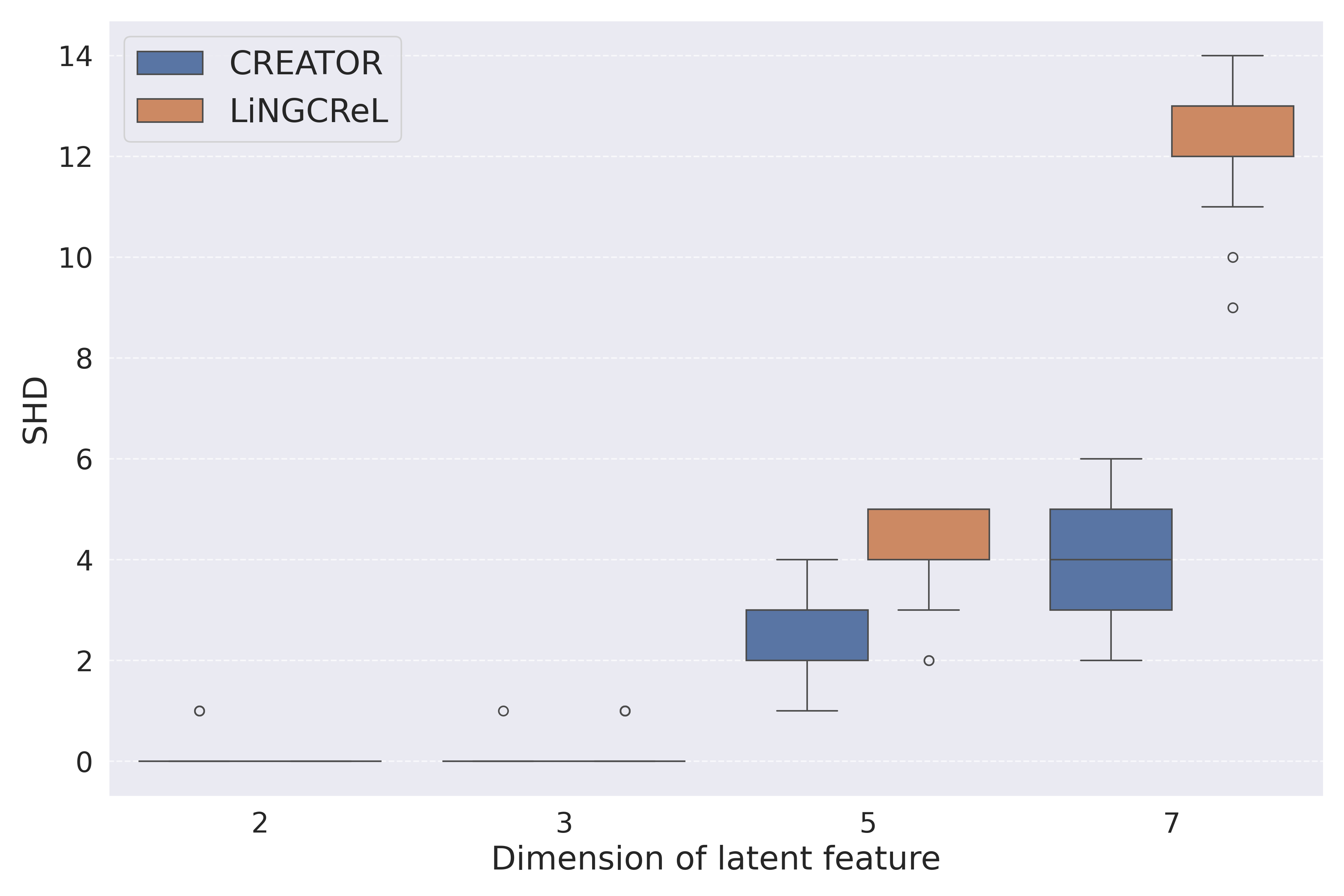}
  }
  \subfloat[SHD in setting (2) with $K=2d$]
  {      \label{fig:specialSHD-K-2d}\includegraphics[width=0.45\linewidth]{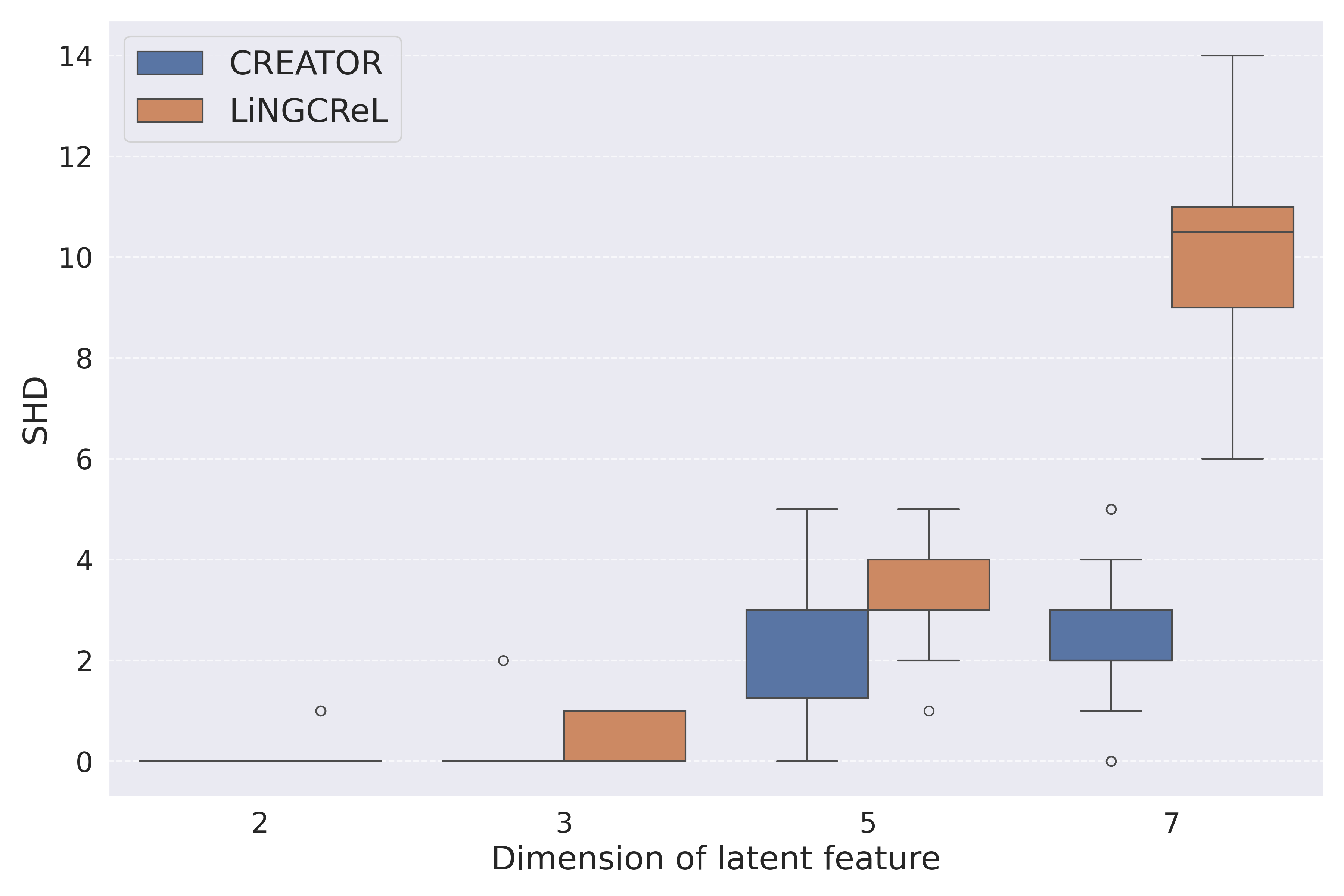}
  }
  %   \subfloat[weak causal effect case]
  % {      \label{fig:weakSHD-K-d}\includegraphics[width=0.3\linewidth]{figures/sythetic_data/SHD_fig_w.png}
  % }
    \caption{$\LocR^2$ and SHD metric for different data generation setup. Figures \ref{fig:generalR2-K-d} and \ref{fig:generalSHD-K-d} compare the performance of latent feature and causal DAG identification in setting (1). Figures \ref{fig:specialR2-K-d} and \ref{fig:specialSHD-K-d} compare the performance in setting (2).}   
  % \label{fig:dynamic-shd-inter-snapshot-sin}   
  \label{fig:result-K-2d}          
\end{figure}

\subsection{The impact of inferring topological ordering}
\label{app:ablation}

In this section, we design ablation experiments to compare the performance of \texttt{CREATOR} across various settings in which we expect that the accuracies should differ in inferring topological ordering. Specifically, the results show that poor accuracy in inferring topological ordering could lead to poor latent causal feature recovery. Here, we choose the topological divergence in \citet{score-matching} as the metric for topological ordering, defined as $D_{\rm top}(\pi,W)\coloneqq\sum\limits_{i=1}^d\sum\limits_{j:\pi(i)>\pi(j)}W_{ij}$ where $W$ is the adjacency matrix of $\calG$ with binary value. 
From \eqref{eq:dgp}, we conclude that for any two nodes $i,j\in[d]$, when $w_{i,j}^{(k)}$ is close to $0$, the identification of causal order would be harder than the situation where $w_{i,j}^{(k)}$ is positive. The reason might be weak causal effect is similar to non-causal effect and could confuse the algorithm. With this intuition, we generate data like the general case in last subsection but choose smaller standard deviation for the weights of the causal DAG $w^{(k)}$ by multiplying the generated data with $\sigma\in\{0.005,0.007,0.01,0.03,0.05,0.07,0.1,0.3,0.5\}$. We repeat each simulation setting 50 times and report the average values of $\LocR^2$ and topological divergence. The results for $K=2d$ and $K=d$ are presented respectively in Figure~\ref{fig:weak-s02d}. From the results we conclude that more accurate topological ordering inference leads to more accurate recovery of latent causal features in most cases, suggesting the value of first inferring topological ordering in \texttt{CREATOR}.

\begin{figure}[htbp]
  \centering           
  \subfloat[$K=d$]   
  {      \label{fig:weak-K-d}\includegraphics[width=0.45\linewidth]{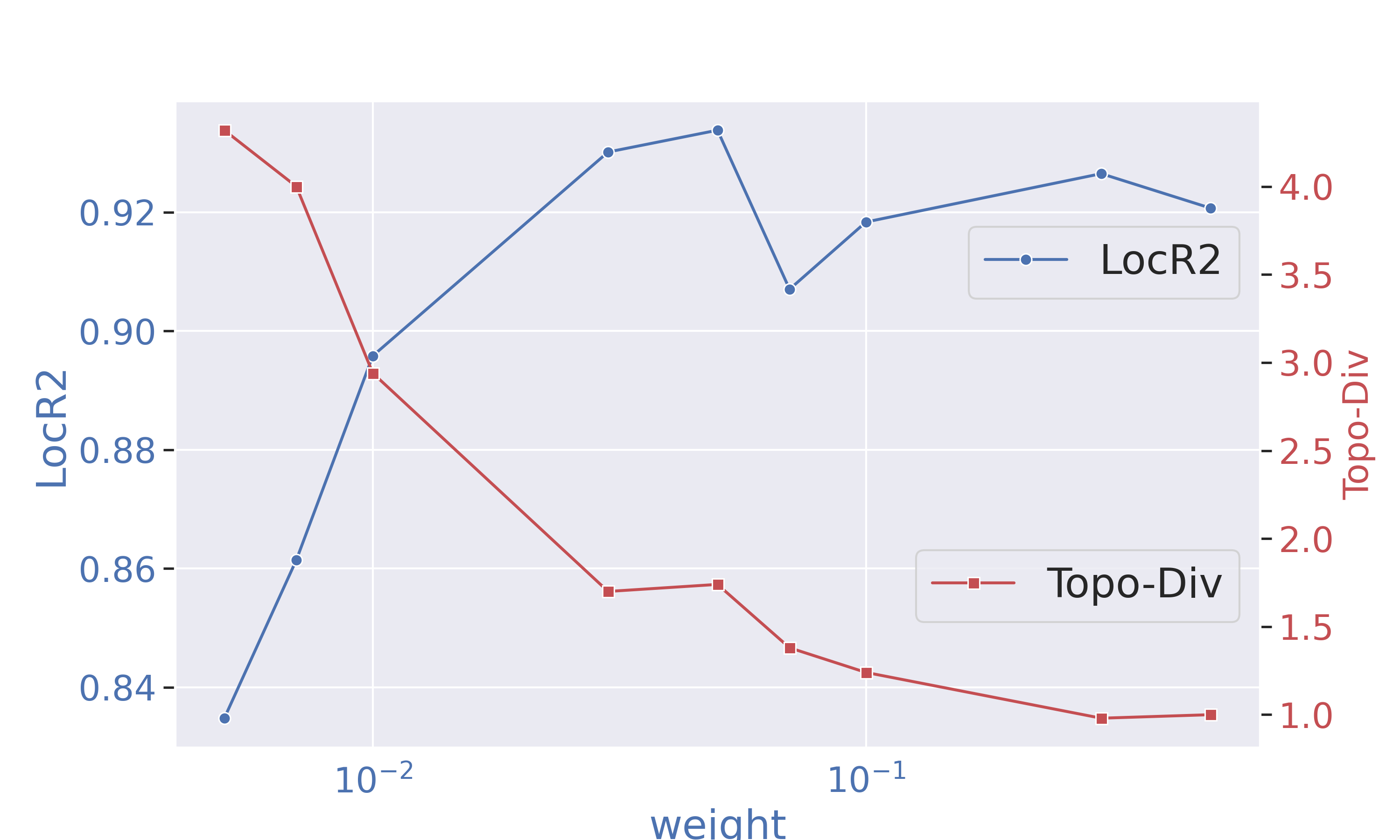}
  }
  \subfloat[$K=2d$]
  {      \label{fig:weak-K-2d}\includegraphics[width=0.45\linewidth]{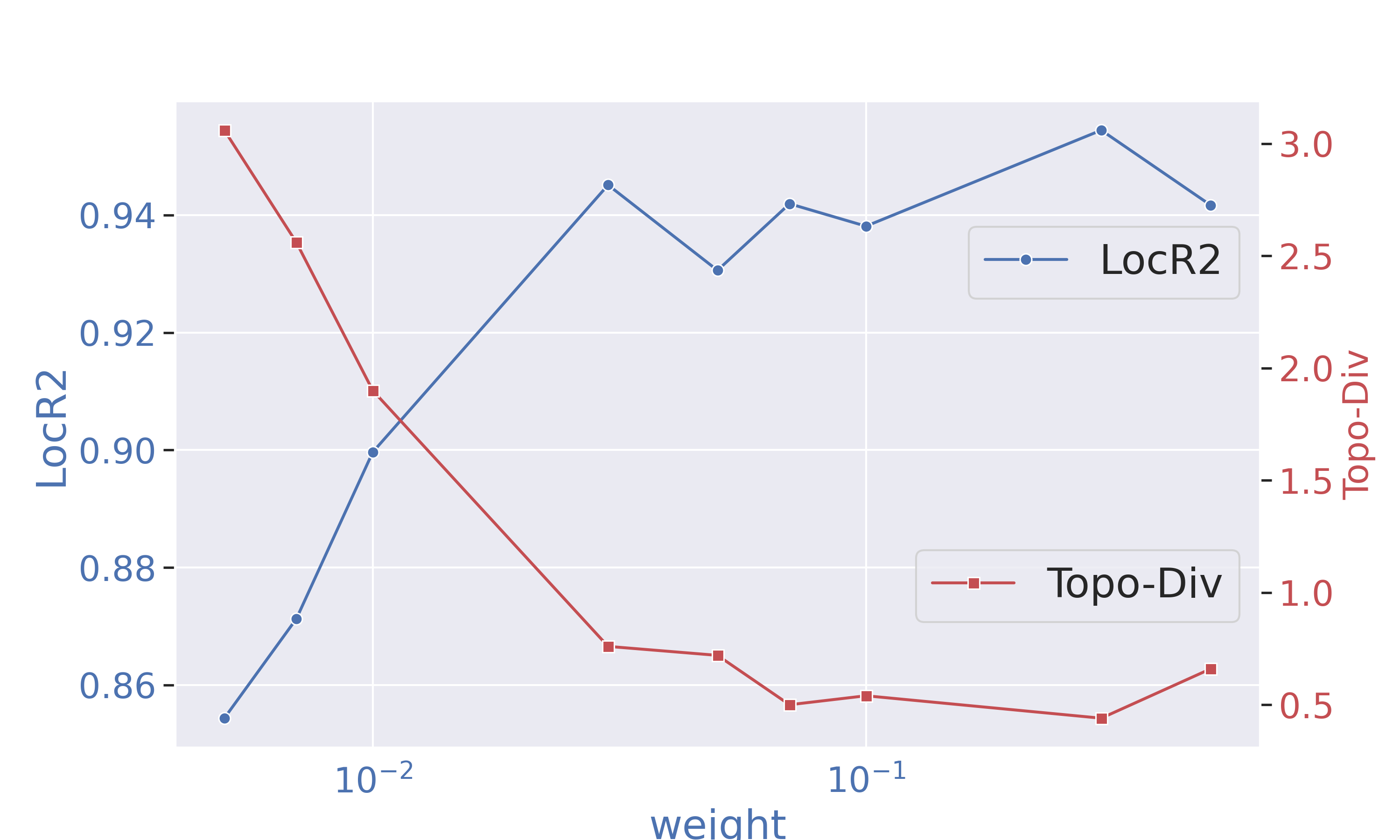}
  }

    \caption{The impact of topological ordering inference on the performance of \texttt{CREATOR}.}   
  % \label{fig:dynamic-shd-inter-snapshot-sin}   
  \label{fig:weak-s02d}          
\end{figure}

\subsection{Implementation details and other results in Section~\ref{subsec:real-data}}\label{apdx:real-data}

The goal of the real data analysis conducted in Section~\ref{subsec:real-data} is to illustrate how even linear CRL methods (e.g. \texttt{CREATOR} and/or \texttt{LiNGCReL}) can be useful to help us unpack the black-box of large language models (LLMs).

To this end, we employ GPT-4 and DeepSeek to generate three types of stories (so $K = 3$), with sufficient diversity in their styles, including news ($k = 1$), fairy tales ($k = 2$), and plain texts ($k = 3$). For each style, we generate $n=900$ different stories with different BG, CD and ED with GPT-4 and DeepSeek using the prompt: ``Generate \{$n$\} \{style\} English stories, each containing background, condition, and ending, with each story limited to 100 words or less. Output the content, the keywords for background, the keywords for condition, and the keywords for ending, with keywords restricted to 2-3-word strings. Format your output for easy copy-pasting into a JSON file, ensuring it only includes the content, background keywords, condition keywords, and ending keywords.'' in which \{$n$\} and \{$\mathrm{style}$\} are the number and style of stories to be generated.

We input the generated stories to open source LLMs that we mentioned in Table~\ref{tab:llm_results} and extract the last hidden states from the corresponding LLMs. As these hidden states can be extremely high-dimensional (mostly around $2048\times 25$ for the models used in this paper), we reduce the data dimension in two steps. First, we multiply these them by a matrix i.i.d. drawn from standard Gaussian distribution with column number $p = 2$ to reduce the dimension to $2048\time 2$ and flatten the last two dimensions into one. Then they are in turn multiplied by a random matrix i.i.d. drawn from standard Gaussian distribution with column number $p = 30$. The two dimensionality reduction steps borrow idea from the sketching randomized algorithm literature \citep{woodruff2014sketching, larsen2017optimality}. We use the data after dimension reduction as the observed data matrix $X^{(k)} \in \bbR^{n \times p}$. 
We then obtain the estimated latent causal features $\hat{y}^{(k)}$ and the DAG $\hat{\calG}$ using either $\texttt{CREATOR}$ or $\texttt{LiNGCReL}$.

Next we prepare the proxy labels using the generated keywords of each story from the LLM output. We also input the generated BG, CD, and ED to the same LLMs and extract the last hidden states.

To evaluate the performance of our algorithm, we first need to find a particular permutation because the returned latent features are only up to $\sim_{\sur}$ equivalence. We first identify the BG feature as it is not entangled with other features. We use the extracted hidden states of BG as input of our neural network and one of the estimated features as label. Then we select the estimated feature with the least average test loss as the BG feature. Then we use the extracted hidden states of BG and CD as input and one of the other two estimated features as label. Similarly, we select the the estimated feature with the least average test loss as the CD feature. The last remaining feature is then automatically selected as the ED feature. The architecture of the neural network is designed as follows. We first permute the last two dimensions of the input features and the number of the three dimensions are respectively batch size, hidden dimension and sequence length. The first part consists of a convolution layer, followed by a batch norm layer, and a ReLU activation function. Then we flatten the last two coordinates of the output from the previous part, which are then transferred as the input to the second part, which consists of a linear layer, followed by a ReLU activation function, a linear layer again and a dropout operation. The detailed architecture is shown in Table~\ref{tab:conv_module} and Table~\ref{tab:fc_module}. All the layers used in the neural network is from PyTorch \citep{pytorch}.
\begin{table}[htbp]
\centering
\caption{Architecture of the convolution module}
\label{tab:conv_module}
\begin{tabular}{ll}
\toprule
\textbf{Layer} & \textbf{Parameters} \\
\midrule
torch.nn.Conv1d & \begin{tabular}[c]{@{}l@{}}
        in\_channels = sequence\_length,\\
        out\_channels = 8,\\
        kernel\_size = 1
        \end{tabular} \\
torch.nn.BatchNorm1d & num\_features = 8 \\
torch.nn.ReLU & - \\
\bottomrule
\end{tabular}
\end{table}

\begin{table}[htbp]
\centering
\caption{Architecture of the fully connected module}
\label{tab:fc_module}
\begin{tabular}{ll}
\toprule
\textbf{Layer} & \textbf{Parameters} \\
\midrule
torch.nn.Linear & \begin{tabular}[c]{@{}l@{}}
        in\_features = 8 $\times$ hidden\_dimension,\\
        out\_features = 8
        \end{tabular} \\
torch.nn.ReLU & - \\
torch.nn.Linear & \begin{tabular}[c]{@{}l@{}}
        in\_features = 8,\\
        out\_features = 1
        \end{tabular} \\
torch.nn.Dropout & p = 0.5 \\
\bottomrule
\end{tabular}
\end{table}

\subsection{Empirical evaluations for \texttt{CREATOR} when the noise distribution is closer to and exactly Gaussian}\label{apdx:gauss}

As Gaussian noise is also very common in real world scenarios, we test our algorithm on data generated with noise variable $z^{(k)}_i=\sigma^{(k)}_i\frac{\epsilon_i^{(k)}}{\sqrt{\mathrm{var}(\epsilon_i^{(k)})}}$ where for any $i\in [d]$ and $k\in[K]$, $\epsilon_i^{(k)}$ is drawn from a generalized distribution with probability density function $p(\epsilon)=\frac{\beta}{\Gamma(1/\beta)}e^{-|\epsilon|^\beta}$. When $\beta=2$, the noise distribution reduces to Gaussian. In our simulation, we set $\beta=2,2.1,2.5$ to compare the performance and show the results in Figure \ref{fig:gauss}. We can see that the performance does not degrade much as the noise distribution gets closer to Gaussian.

\begin{figure}[htbp]
  \centering           
  \subfloat[$\LocR^2$ with $K=d$]   
  {      \label{fig:gaussR2-K-1d}\includegraphics[width=0.45\linewidth]{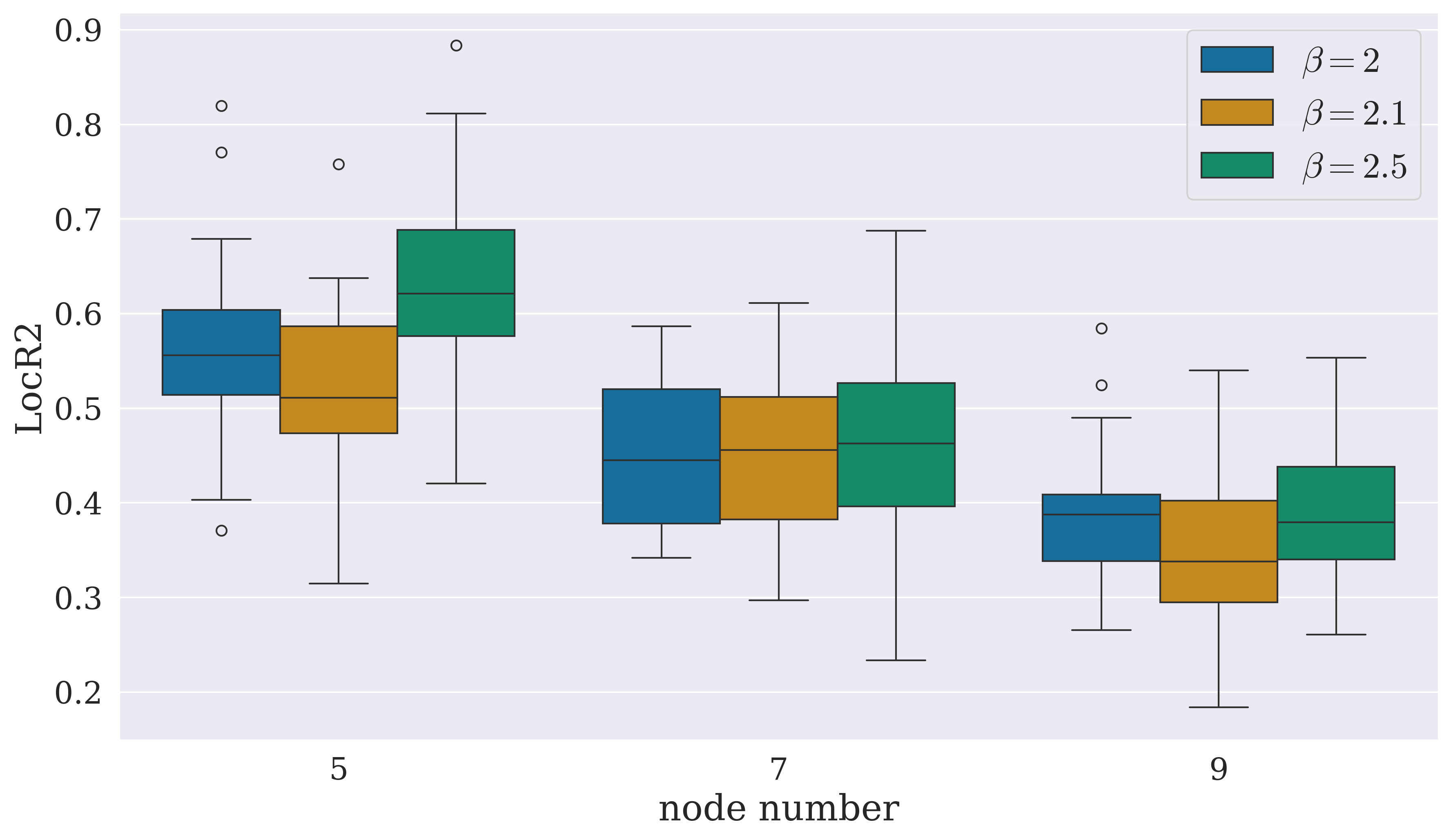}
    }
  \subfloat[$\LocR^2$ with $K=2d$]
  {      \label{fig:gaussR2-K-2d}\includegraphics[width=0.45\linewidth]{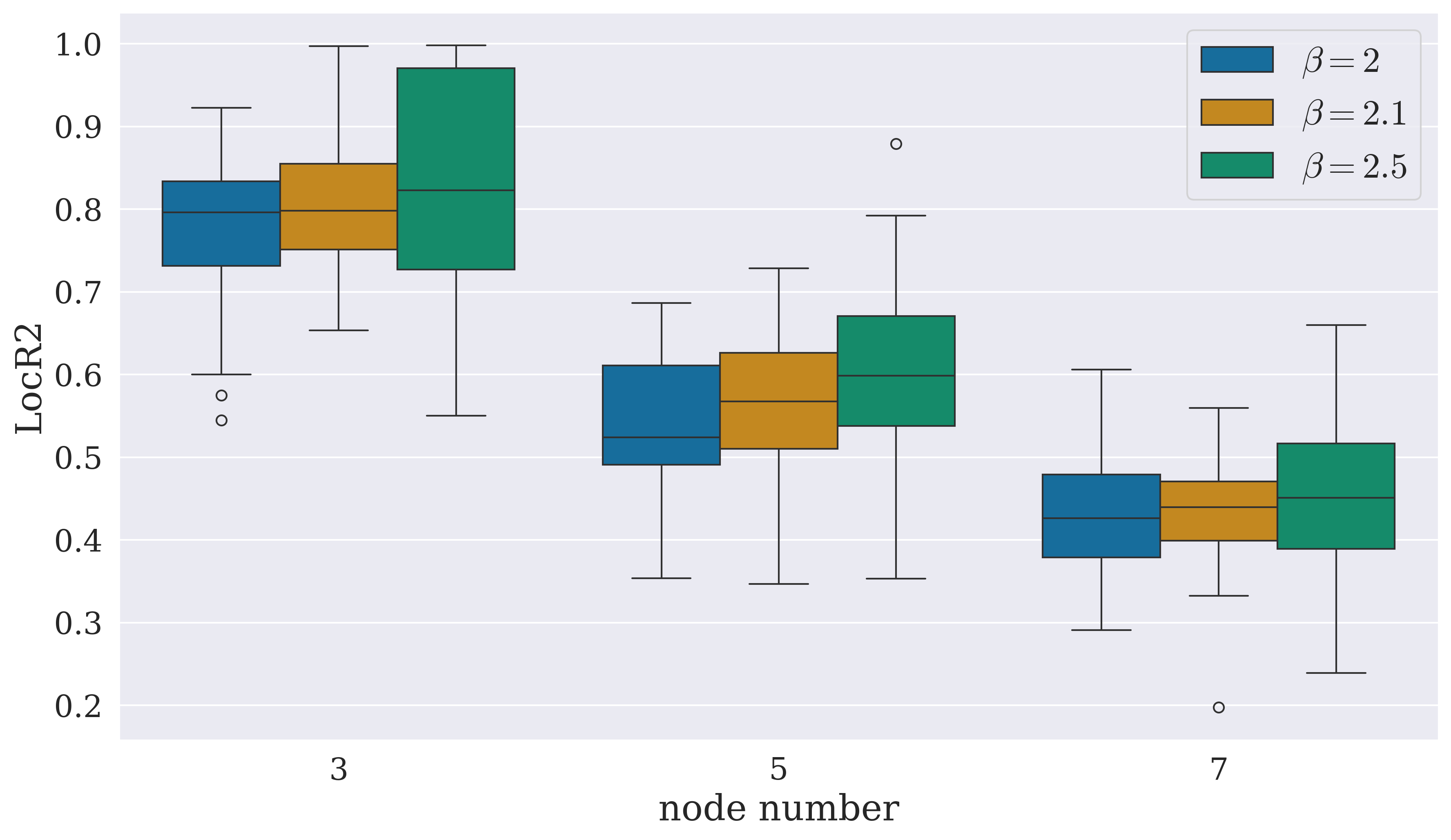}
  }
  %   \subfloat[weak causal effect case]
  % {      \label{fig:weakR2-K-d}\includegraphics[width=0.3\linewidth]{figures/sythetic_data/R2_fig_w.png}
  % }

     \subfloat[SHD with $K=d$]   
  {      \label{fig:gaussSHD-K-1d}\includegraphics[width=0.45\linewidth]{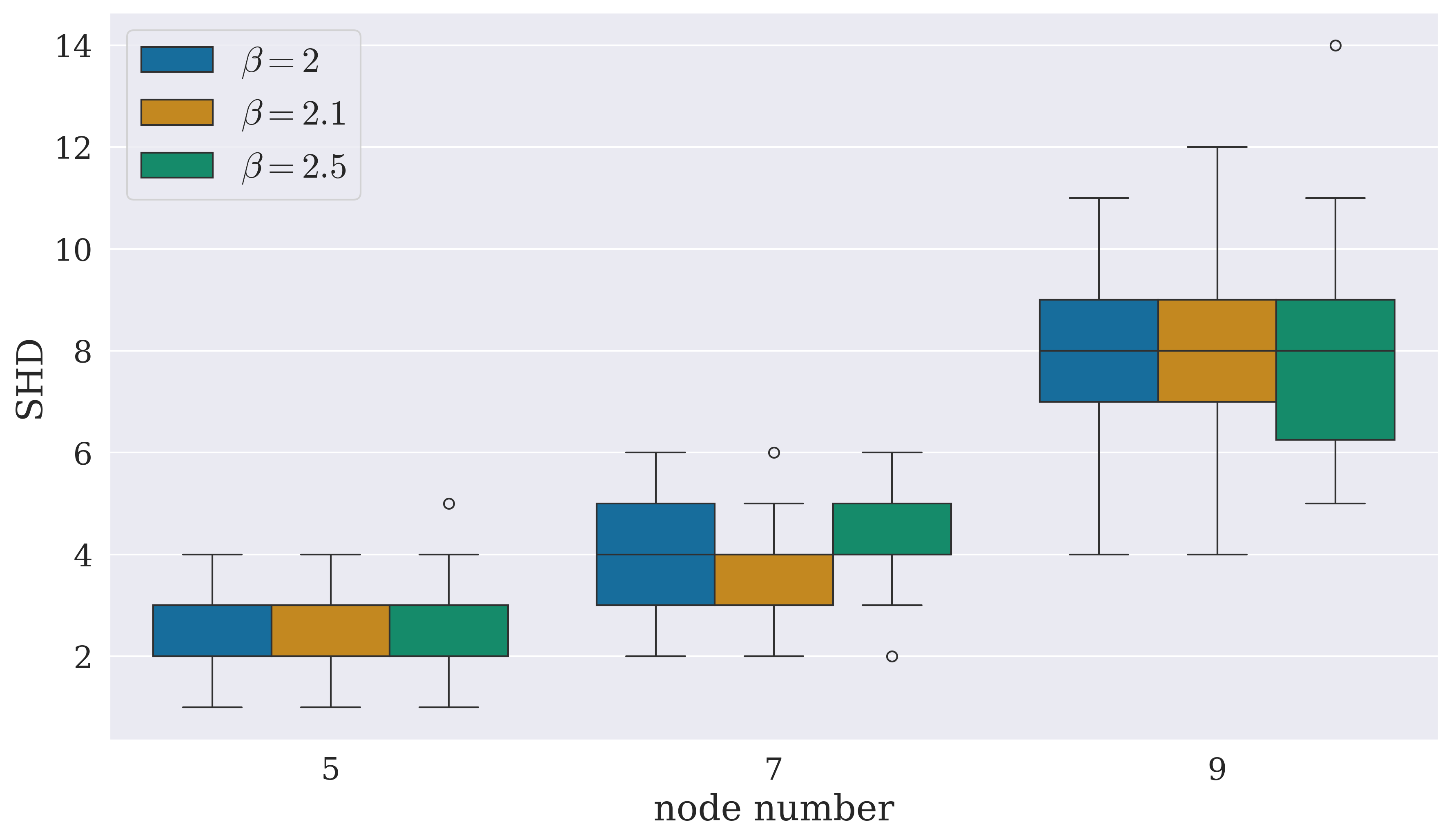}
  }
  \subfloat[SHD with $K=2d$]
  {      \label{fig:gaussSHD-K-2d}\includegraphics[width=0.45\linewidth]{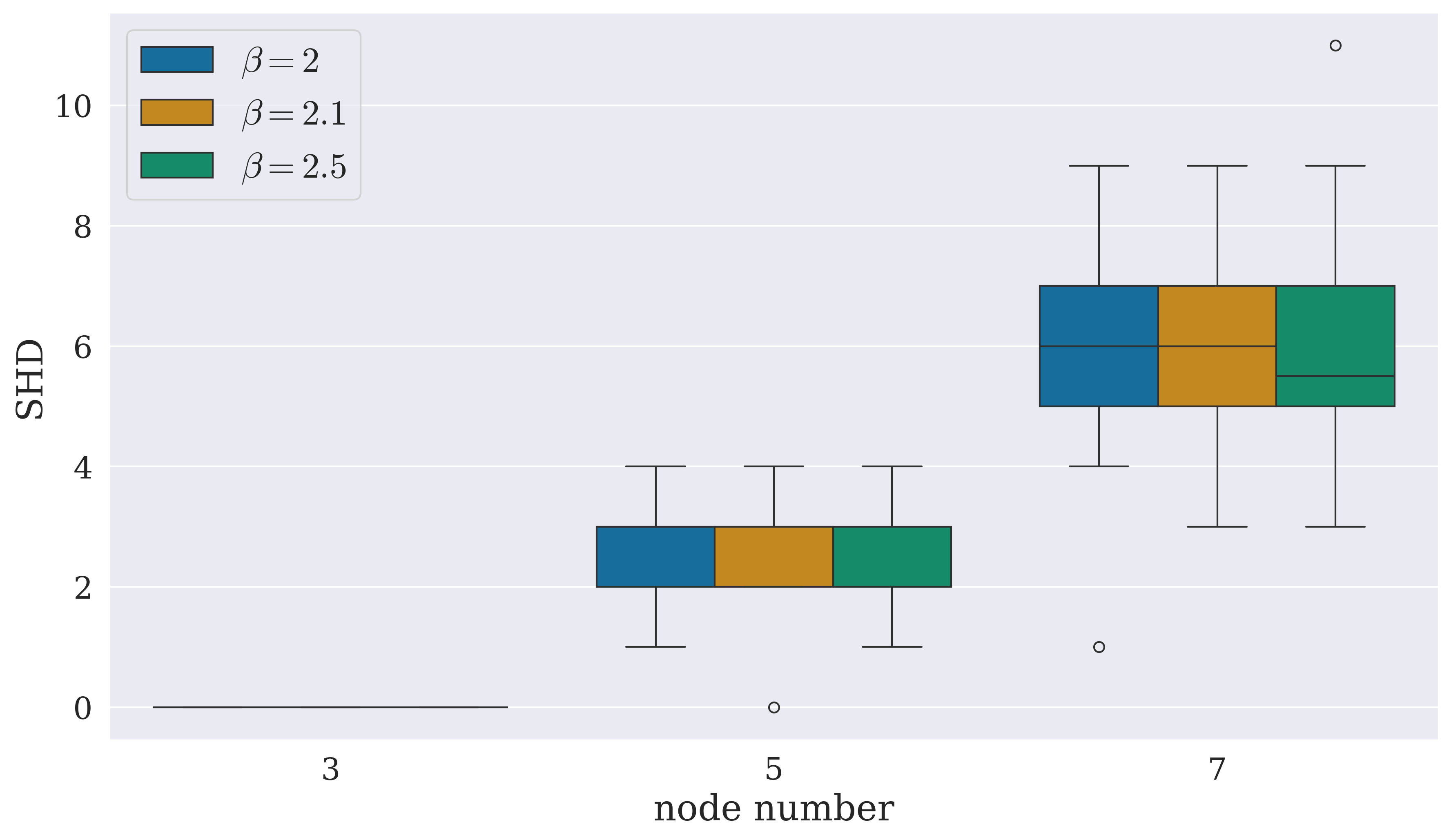}
  }
  %   \subfloat[weak causal effect case]
  % {      \label{fig:weakSHD-K-d}\includegraphics[width=0.3\linewidth]{figures/sythetic_data/SHD_fig_w.png}
  % }
    \caption{$\LocR^2$ and SHD metric for noise variables with Gaussian and non-Gaussian with $K=d$ and $K=2d$.}   
  % \label{fig:dynamic-shd-inter-snapshot-sin}   
  \label{fig:gauss}          
\end{figure}

\subsection{Sensitivity analysis}

We compute the rank via singular-value thresholds. Below we provide the performance of our algorithm for different choices of the threshold for $d = 5$ as a sensitivity analysis. As can be seen, the performance is insensitive to different thresholds.

\begin{table}[htbp]
\caption{Algorithm performance for $K=d$ and $d=5$ with different threshold $\tau$}
\centering
\begin{tabular}{c|c|c|c|c|c|c|c|c|c|c}
\hline
$\tau$   & 0.1  & 0.4  & 0.7  & 1.0  & 1.3  & 1.6  & 1.9  & 2.2  & 2.5  & 2.8  \\ \hline
SHD & 3.1  & 2.5  & 2.5  & 2.7  & 2.8  & 3    & 3.3  & 3.5  & 2.8  & 3    \\ \hline
$\mathrm{LocR}^2$ & 0.69 & 0.68 & 0.72 & 0.74 & 0.70 & 0.67 & 0.71 & 0.73 & 0.71 & 0.71 \\ \hline
\end{tabular}
\end{table}

% \section{Potential Direction of Generalizing \texttt{CREATOR} to Nonlinear Models}
% \label{app:future}

% In our algorithm, an important part of the topological ordering subroutine is to find an $\alpha\in \mathbb{R}^p$ that $\alpha^\top x^{(k)}$ is a exogenous noise variable component and the finding such $\alpha$ rely on the result of Darmois--Skitovitch theorem which states that if two linear forms of independent non-Gaussian random variables are independent, then all the variables with non-zero coefficients in both forms must be normally distributed. Therefore, extension to nonlinear structural causal models is difficult. 

% One viable approach involves considering a nonlinear additive noise structural causal model (SCM) with a linear mixing function. Under this setup, the latent variable corresponding to the root node can be linearly mapped from the observed data. Following a similar procedure, we could sequentially remove the causal influence of the root node variable on other components of ${y}^{(k)}$ and iteratively repeat this process.

\section{Illustrating examples for frequently used notations and algorithm}
\subsection{An example for understanding $\sim_\pi$, $\sim_{\triangle}$ and $\sim_{\mathrm{sur}}$}
\label{apdx:toy-notation}
Consider a toy model such that for $\forall k\in[3]$, $x^{(k)}=Hy^{(k)}$, $y^{(k)}=w^{(k)\top}y^{(k)}+\Omega^{(k)}z^{(k)}$ where $H=I$, $w^{(k)}$ is weighted adjacency matrix of $\mathcal{G}$ with two edges including $1 \rightarrow 2$ and $2 \rightarrow 3$, where $1, 2, 3$ denotes the node indices. When $\hat{y}^{(k)}\sim_\pi y^{(k)}$ for $k\in[3]$, there exist a permutation $\pi$ and three non-zero number $\lambda_1$, $\lambda_2$ and $\lambda_3$ such that $\hat{y}_i^{(k)}=\lambda_iy_{\pi(i)}^{(k)}$ for any $i\in[3]$ for all $k\in[3]$.
    
    When $\hat{y}^{(k)}\sim_\triangle y^{(k)}$ for $k\in[3]$, there exist a lower triangular matrix $B$ and a permutation $\pi$ such that $\hat{y}_1^{(k)}=B_{1,1}y_{\pi(1)}^{(k)}$, $\hat{y}_2^{(k)}=B_{2,1}y_{\pi(1)}^{(k)}+B_{2,2}y_{\pi(2)}^{(k)}$, and
    $\hat{y}_3^{(k)}=B_{3,1}y_{\pi(1)}^{(k)}+B_{3,2}y_{\pi(2)}^{(k)}+B_{3,3}y_{\pi(3)}^{(k)}$ for all $k\in[3]$. According to the definition of surrounding sets, we have $\overline{\mathrm{sur}}_\mathcal{G}(1)=\{1\}$, $\overline{\mathrm{sur}}_\mathcal{G}(2)=\{2\}$ and $\overline{\mathrm{sur}}_\mathcal{G}(3)=\{2,3\}$. When $\hat{y}^{(k)}\sim_\mathrm{sur} y^{(k)}$ for $k\in[3]$, there exist a lower triangular matrix $B$ and a permutation $\pi$ such that 
    $\hat{y}_1^{(k)}=B_{1,1}y_{\pi(1)}^{(k)}$,
    $\hat{y}_2^{(k)}=B_{2,2}y_{\pi(2)}^{(k)}$, and
    $\hat{y}_3^{(k)}=B_{3,2}y_{\pi(2)}^{(k)}+B_{3,3}y_{\pi(3)}^{(k)}$.

\subsection{An example for the core mechanism of the algorithm}
\label{apdx:toy algo}
To better explain the intuition of our algorithm, in this section, we present an illustrative example. Consider a toy model in which $K = 3$ and $\forall k\in[3]$, $x^{(k)}= y^{(k)}$, $y^{(k)}=w^{(k)\top}y^{(k)}+z^{(k)}$. Here, we take both $H$ and $\Omega^{(k)}$ to be identities for simplicity, and $w^{(k)}$ is the weighted adjacency matrix of $\mathcal{G}$ representing the causal graph $1 \rightarrow 2$ and $2 \rightarrow 3$, where $1, 2, 3$ are node indices. 
    
In the first subroutine, we run ICA on $x^{(k)}$ for $k \in [3]$ and thus obtain the factorization $x^{(k)} = \hat{A}^{(k)} \hat{z}^{\dagger(k)}$, where $\hat{A}^{(k)}$ is a $3 \times 3$ matrix and $\hat{z}^{(k)}$ are independent components computed by ICA. As $H$ and $\mathcal{G}$ are invariant across environments, we can find a vector $\alpha_0$ such that $\alpha_0^\top x^{(k)} \propto_k y_r^{(k)}$ where $y_r^{(k)}$ corresponds to some root node, influenced only by the exogeneous noise $z_r^{(k)}$. Therefore, such $\alpha_0$ has to be parallel to one of the row vectors of all three unmixing matrices $\hat{A}^{(k)}$ for $k \in [3]$. A formal version of this claim can be found in Theorem 2. 

Next, we choose one row of all the rows in $\hat{A}^{(k)}$ across $k \in [3]$, denoted by $\hat{\alpha} _ 1$, such that for all $k \in [3]$ and $j \in [3]$, $\hat{z} _ 1 ^{(k)} := \hat{\alpha} _ 1 ^ \top x^{(k)}$ is independent of $r_ j ^{(k)}$, where $r_ j ^{(k)}$ is the residual of projecting $y_ j ^{(k)}$ onto $\hat{z}_ 1 ^{(k)}$. $\hat{\alpha} _ 1$ is then parallel to $\alpha_0$ because $r_ j ^{(k)}$ is independent of $\hat{z}_ 1 ^{(k)}$ if and only if $\hat{z}_ 1 ^{(k)}$ equals to a component of $z^{(k)}$ for all $k \in [3]$ up to scale by non-Gaussianity using the Darmois--Skitovitch theorem \citep{darmois1953analyse,skitovitch1953property}. Furthermore, because the independence condition must be satisfied for all $k \in [3]$, $\hat{z}_ 1 ^{(k)}$ must also be equal to the root node in $y^{(k)}$ up to scale, by the non-degeneracy of $W^{(k)}$. Since $\hat{z}_ 1 ^{(k)}$ corresponds to the root node, $\tilde{y}_ 1 ^{(k)} := \hat{z}_ 1 ^{(k)}$ can serve as an estimator of $y_ 1 ^{(k)}$.

Next, given that we have identified the root node in the original causal graph, we remove the causal influences from $y^{(k)} _ 1$ to $y^{(k)} _ j$ for $j\geq 2$ by obtain projecting $x^{(k)}$ onto the orthocomplement to $\hat{z}_ 1 ^{(k)}$, and the new causal graph can be viewed as a graph after marginalizing node $1$, with only two nodes $2$ and $3$ and an edge $2 \rightarrow 3$ left. We then repeat these two steps to identify the topological ordering $1 \rightarrow 2$, $2 \rightarrow 3$, and $1 \rightarrow 3$. Since in the true graph, $1 \rightarrow 3$ is absent, a pruning step is thus needed. We denote the resulting estimates of $y_ 2 ^{(k)}$ and $y_ 3 ^{(k)}$ (resp. $z_ 2 ^{(k)}$ and $z_ 3 ^{(k)}$) as $\tilde{y}_ 2 ^{(k)}$ and $\tilde{y}_ 3 ^{(k)}$ (resp. $\tilde{z}_ 2 ^{(k)}$ and $\tilde{z}_ 3 ^{(k)}$).  Here, for all $k \in [3]$, $\tilde{y}_ j ^{(k)}$ depends on all $y _ i^{(k)}$ for $i$ in the ancestors of $j$; whereas $\hat{z} _ j ^{(k)}$ is $z _ j ^{(k)}$ up to scale and permutation.

In the pruning subroutine, we leverage a key observation that enables the detection of the spurious edge $1 \rightarrow 3$: Since in the true causal graph, $1 \rightarrow 3$ is absent, this will lead to certain rank-degeneracy of the coefficient matrices of regressing $\hat{z}^{(k)}$ against $\tilde{y}^{(k)}$, after concatenating over all environments. Repeating this across all edges, a pruned causal graph $\hat{\mathcal{G}}$ can be obtained and $\hat{\mathcal{G}}\sim_\pi \mathcal{G}$.

Since we do not have access to the ground truth or estimator up to scale and permutation for $y^{(k)}$, we regress $\hat{z}^{(k)}$ against $\tilde{y}^{(k)}$ and obtain unmixing matrices from the coefficient matrices to transform $\tilde{y}^{(k)}$ to an estimator of $y^{(k)}$ up to $\sim _ {\mathrm{sur}}$.

\paragraph{Computing resources} All our experiments are conducted in one NVIDIA GeForce RTX 4090 GPU.

\putbib[ref]
\end{bibunit}

\end{document}